\documentclass[11pt]{article}

\usepackage{microtype}
\usepackage{graphicx}
\usepackage{subfigure}
\usepackage{booktabs} %

\usepackage{wrapfig}
\usepackage{sidecap}
\usepackage[top=1in, bottom=1in, left=1in, right=1in]{geometry}

\usepackage{hyperref}
\hypersetup{
    colorlinks,
    linkcolor={blue!80!black},
    citecolor={green!50!black},
}

\usepackage{amsmath}
\usepackage{amssymb}
\usepackage{mathtools}
\usepackage{amsthm}
\usepackage{algorithm, algorithmic}
\usepackage[capitalize,noabbrev]{cleveref}

\theoremstyle{plain}
\newtheorem{theorem}{Theorem}[section]

\newtheorem{example}[theorem]{Example}
\newtheorem{lemma}[theorem]{Lemma}

\theoremstyle{definition}
\newtheorem{definition}[theorem]{Definition}
\newtheorem{assumption}[theorem]{Assumption}

\usepackage[textsize=tiny]{todonotes}

\usepackage[numbers, compress]{natbib}
\bibliographystyle{amsalpha}

\usepackage{amsmath}
\usepackage{amsthm}
\usepackage{amssymb}
\usepackage{color}
\usepackage{mathtools}
\usepackage{makecell}
\usepackage{bm}
\usepackage{todonotes}
\usepackage{diagbox}
\usepackage{tablefootnote}
\usepackage{layouts}

\DeclareMathOperator{\E}{\mathbb{E}}

\DeclareMathOperator{\sgn}{sgn}

\let\baraccent=\= %
\renewcommand{\=}[1]{\stackrel{#1}{=}} %

\providecommand{\R}{\mathbb{R}}

\providecommand{\ZZ}{\mathbb{Z}}

\providecommand{\cL}{\mathcal{L}}

\providecommand{\eps}{\epsilon}

\mathchardef\mhyphen="2D %

\newcommand{\ut}{\bar{t}}
\newcommand{\lt}{\underline{t}}

\newcommand{\ubg}{\bar{\bg}}
\newcommand{\lbg}{\underline{\bg}}

\definecolor{darkblue}{rgb}{0.0, 0.0, 0.55}

\providecommand{\ba}{\bm{a}}
\providecommand{\bb}{\bm{b}}

\providecommand{\be}{\bm{e}}

\providecommand{\bw}{\bm{w}}
\providecommand{\bx}{\bm{x}}
\providecommand{\by}{\bm{y}}

\newcommand{\ueps}{\bar{\epsilon}}
\newcommand{\leps}{\underline{\epsilon}}
\newcommand{\ualpha}{\bar{\alpha}}
\newcommand{\lalpha}{\underline{\alpha}}

\providecommand{\sm}{\setminus}

\newcommand{\interior}[1]{%
  {\kern0pt#1}^{\mathrm{o}}%
}

\providecommand{\supp}{\mathrm{supp}}

\newcommand{\smax}{\mathrm{smax}}

\newcommand{\tbZ}{\tilde{\bZ}}
\newcommand{\diag}{\mathrm{diag}}

\def\bW{{\boldsymbol W}}
\def\bX{{\boldsymbol X}}

\def\bZ{{\boldsymbol Z}}

\def\ba{{\boldsymbol a}}
\def\bb{{\boldsymbol b}}
\def\be{{\boldsymbol e}}

\def\bg{{\boldsymbol g}}

\def\bs{{\boldsymbol s}}

\def\bu{{\boldsymbol u}}
\def\bv{{\boldsymbol v}}
\def\bw{{\boldsymbol w}}
\def\bx{{\boldsymbol x}}
\def\by{{\boldsymbol y}}
\def\bz{{\boldsymbol z}}

\def\bpsi{{\boldsymbol \psi}}

\def\btheta{{\boldsymbol \theta}}

\def\bzeta{{\boldsymbol \zeta}}

\def\bfzero{{\boldsymbol 0}}
\def\bzero{\bfzero}
\def\bfone{{\boldsymbol 1}}
\def\bone{{\bfone}}
\def\bones{{\bfone}}

\newcommand{\fNN}{f_{\mathsf{NN}}}

\title{Transformers learn through gradual rank increase}

\author{
Enric Boix-Adser\`a$^{1,2}$ \quad Etai Littwin$^{1}$ \\
Emmanuel Abbe$^{1,3}$ \quad Samy Bengio$^{1}$ \quad Joshua Susskind$^{1}$ \\
$^1$Apple \quad $^2$MIT \quad $^3$EPFL \\\texttt{eboix@mit.edu,emmanuel.abbe@epfl.ch} \\
\texttt{\{elittwin,bengio,jsusskind\}@apple.com} 
      }

\begin{document}
\maketitle
\begin{abstract}
We identify incremental learning dynamics in transformers, where the difference between trained and initial weights progressively increases in rank. We rigorously prove this occurs under the simplifying assumptions of diagonal weight matrices and small initialization. Our experiments support the theory and also show that phenomenon can occur in practice without the simplifying assumptions.
\end{abstract}

\section{Introduction}

The transformer architecture achieves state of the art performance in various domains, yet we still lack a solid theoretical understanding of its training dynamics \citep{transformer,bert,roberta,vit}. Nevertheless, the theoretical toolbox has matured over the last years and there are promising new approaches. One important line of work examines the role that initialization scale plays on the trajectory taken by gradient descent \citep{jacot2018neural,lazy,lazy2,scale,saddle,small,small2}. When the weights are initialized small, it has been shown for simple networks that an \textit{incremental learning} behaviour occurs, where functions of increasing complexity are learned in stages. This regime is known to be richer than the large-initialization regime\footnote{In the large-initialization regime, deep learning behaves as a kernel method \cite{jacot2018neural,lazy}. Various separations with kernels are known for smaller initialization: e.g., \cite{lazy_limit,mergedstaircase,quantifying}.}, but the incremental learning dynamics are difficult to analyze, and are so far understood only for extremely simple architectures. Can we apply this analysis to transformers? Namely:
\begin{center}
\textit{Are there incremental learning dynamics when training a transformer architecture?}
\end{center}

An obstacle is that past work on incremental learning has mainly studied linear networks \citep{berthier2022incremental,arora2019implicit,lowrank2,lowrank3,diagonal4,saddle,gissin2019implicit,simon2023stepwise,pesme2023saddle}, with one paper studying nonlinear 2-layer fully-connected networks 
\citep{boursier2022gradient}. In contrast, transformers have nonlinear attention heads that do not fall under previous analyses: given $\bX \in \R^{n \times d}$, an attention head computes
\begin{align}\label{eq:attention}
\mathrm{attention}(\bX;\bW_K,\bW_Q,\bW_V,\bW_O) = \smax(\bX \bW_K \bW_Q^\top \bX^\top)\bX \bW_V \bW_O^{\top} 
\end{align}
where $\bW_K,\bW_Q, \bW_V,\bW_O \in \R^{d \times d'}$ are trainable matrices, and the softmax is applied row-wise.
A transformer is even more complex, since it is formed by stacking alternating layers of attention heads and feedforward networks, along with residual connections.

\paragraph{Main finding} Our main finding is that transformers exhibit incremental learning dynamics, where \textit{the difference between the trained and initial weights incrementally increases in rank}. Our results have a theoretical component and an experimental component.

\paragraph{Theoretical contributions} 
For our theory, we study a simplification of the transformer architecture, where the attention head weights are diagonal matrices: i.e., in each attention head we have $\bW_K = \diag(\bw_K)$, where $\bw_K \in \R^d$ are trainable weights, and similarly for $\bW_Q,\bW_V$ and $\bW_O$. We rigorously establish the training dynamics of this architecture under gradient flow when the initialization is small. We prove that dynamics occur in discrete stages: (1) during most of each stage, the loss plateaus because the weights remain close to a saddle point, and (2) at the end, the saddle point is quickly escaped and the rank of the weights increases by at most one.

This theoretical result on transformers follows from a general theorem characterizing the learning dynamics of networks $\fNN$ that depend on the product of parameters $\bu, \bv \in \R^p$ as
\begin{align}\label{eq:diag-network-def}
\fNN(\bx; \bu,\bv) = h(\bx; \bu \odot \bv)\,,
\end{align} where $\bx$ is the input, $\odot$ denotes the elementwise product, and $h$ is a smooth function.

\begin{theorem}[Informal statement of incremental learning dynamics]\label{thm:main-informal}
Let $\fNN$ be a network of the form \eqref{eq:diag-network-def}, and suppose that the weights are initialized very small: i.e., the entries of $\bu,\bv$ are initialized on the order $\Theta(\alpha)$ for some small $\alpha > 0$. Then the dynamics of gradient flow training effectively proceeds in discrete stages, each one lasting time $\Theta(\log(1/\alpha))$. In each stage, the number of nonnegligible entries of $\bu \odot \bv$ increases by at most one.
\end{theorem}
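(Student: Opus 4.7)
The plan is to track the gradient flow ODE for the pair $(\bu,\bv)$ and show it effectively executes a sequence of saddle-to-saddle transitions, with at most one coordinate of $\bw := \bu\odot\bv$ activating per stage. Writing $L(\bw) = \mathbb{E}[\ell(h(\bx;\bw))]$ and $g_i(\bw) = \partial L/\partial w_i$, the fact that $\fNN$ depends on $(\bu,\bv)$ only through $\bu\odot\bv$ makes the gradient flow decouple across coordinates:
\[
\dot{u}_i = -v_i\,g_i(\bw), \qquad \dot{v}_i = -u_i\,g_i(\bw).
\]
Two identities are central: the conservation law $u_i^2 - v_i^2 \equiv u_i(0)^2 - v_i(0)^2$, which forces $|u_i-v_i|$ to stay $O(\alpha)$ forever, and $\frac{d}{dt}(u_i v_i) = -(u_i^2+v_i^2)\,g_i(\bw)$, which says that $w_i$ moves in the direction of $-g_i(\bw)$ at a rate proportional to $u_i^2+v_i^2$, i.e.\ multiplicatively in its own magnitude.

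First I would analyze the plateau phase near a given saddle $\bw^{(k)}$ (initially $\bw^{(0)}=0$). While all inactive coordinates remain $o(1)$, one has $g_i(\bw)\approx g_i(\bw^{(k)})$, so the system is approximately linear in the $(u_i+v_i,\,u_i-v_i)$ basis with hyperbolic rate $|g_i(\bw^{(k)})|$. Starting from scale $\alpha$, a coordinate $i$ with $g_i(\bw^{(k)})<0$ needs time $\tau_i \approx \log(1/\alpha)/|g_i(\bw^{(k)})|$ to reach a constant, while $g_i>0$ coordinates only shrink. Under a mild non-degeneracy assumption that the index $i^\star$ attaining $\max\{|g_i(\bw^{(k)})| : g_i(\bw^{(k)})<0\}$ is unique with a constant gap to the runner-up, by the time $i^\star$ reaches a constant all other negligible coordinates are still of size $\alpha^{1+\Omega(1)}$.

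Next I would handle the fast escape and local convergence phase. Once $w_{i^\star}$ becomes $\Theta(1)$ the linearization at the old saddle ceases to apply, but the trajectory is now driven into a new saddle $\bw^{(k+1)}$ that differs from $\bw^{(k)}$ only in coordinate $i^\star$. I would argue via a Lojasiewicz or local strong convexity condition for $L$ restricted to the active coordinates that this escape takes $O(1)$ time. Throughout this short window the inactive coordinates change by at most a multiplicative $O(1)$ factor (Gronwall applied to the linearized equation), so they remain $\alpha^{1+\Omega(1)}$. Repeating the argument with $\bw^{(k+1)}$ in place of $\bw^{(k)}$ supplies the induction, and the total length of the stage is $(1+o(1))\log(1/\alpha)/|g_{i^\star}(\bw^{(k)})|$.

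The main obstacle I anticipate is the cross-stage bookkeeping for the inactive coordinates. Although each individual stage lasts only $\Theta(\log(1/\alpha))$, the small coordinates are exposed throughout all previous stages to a nonzero time-varying $g_i(\bw(t))$, and the inductive invariant ``inactive coordinates are of size $\alpha^{1+\Omega(1)}$'' must be preserved as the active set changes. The conservation law $u_i^2-v_i^2$ is essential here, as it prevents collapse in the shrinking direction and lets one translate bounds on $u_i v_i$ back into bounds on $|u_i|,|v_i|$. The non-degeneracy hypothesis (unique winner at each saddle, with a constant spectral gap to the runner-up) is analogous to the assumptions used in \citep{berthier2022incremental,boursier2022gradient}, and I expect the most delicate part of the write-up is quantifying it and verifying that it holds at each successive saddle along a generic trajectory.
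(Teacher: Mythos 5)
Your overall strategy is the same as the paper's: the conservation law, a hyperbolic linearization around each successive stationary point, a $\Theta(\log(1/\alpha))$ race to activation, an $O(1)$ nonlinear escape controlled by Gr\"onwall plus a local-minimum/robustness hypothesis, and induction over stages. There are, however, two genuine gaps in the way you set up the race.

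First, you miss half of the activation mechanism. In the $(u_i+v_i,\,u_i-v_i)$ basis your flow gives $\frac{d}{dt}(u_i+v_i)=-g_i\,(u_i+v_i)$ and $\frac{d}{dt}(u_i-v_i)=+g_i\,(u_i-v_i)$, so whatever the sign of $g_i$, exactly one of the two directions expands at rate $|g_i|$, and generically both start at $\Theta(\alpha)$ (the paper's nondegeneracy assumption $|u_i(0)|\neq|v_i(0)|$ guarantees the difference is not zero). A coordinate with $g_i>0$ therefore does \emph{not} ``only shrink'': $u_i-v_i$ reaches $\Theta(1)$ after time $\approx\log(1/\alpha)/|g_i|$, at which point $u_iv_i=\frac{1}{4}\left((u_i+v_i)^2-(u_i-v_i)^2\right)$ becomes a negative constant, i.e., the coordinate activates with $u_i\approx-v_i$. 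The race is over \emph{all} coordinates with $g_i\neq 0$, the first-stage winner is the maximizer of $|g_i|$ over that whole set, and your non-degeneracy hypothesis --- uniqueness of the maximizer of $|g_i|$ restricted to $\{i:g_i<0\}$ --- picks the wrong winner whenever some coordinate of the opposite sign has larger $|g_j|$. The paper avoids this by tracking the positive quantity $\bw=\bu+\bv$ on the logarithmic scale and declaring a coordinate active when $\log_\alpha w_i$ exits the interval $(0,2)$: exit at $0$ is your growth channel, exit at $2$ is the one you dropped, and the exit side determines the sign $s_i\in\{\pm1\}$ of $v_i/u_i$ at activation.

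Second, your inductive invariant for the inactive coordinates is false, and you flag exactly this as the ``main obstacle'' without resolving it. An inactive coordinate does not restart each stage from scale $\alpha$: during stage $k$ it drifts multiplicatively at rate $g_i(\btheta^k)$ for a time of order $\log(1/\alpha)$, so its log-scale exponent changes by a constant per stage, and after several stages a still-inactive coordinate may sit at scale $\alpha^{c}$ with $c$ arbitrarily close to $0$. Hence neither ``inactive coordinates are $\alpha^{1+\Omega(1)}$'' is preserved, nor is the stage-$k$ duration $\log(1/\alpha)/|g_{i^\star}(\bw^{(k)})|$ correct for $k\geq 1$. One must carry the vector of log-scale exponents $\bb^k$ as part of the induction and define the time-to-activation as $\Delta_k(i)=(b_i^k-1+\sgn(g_i(\btheta^k)))/g_i(\btheta^k)$, as in \eqref{eq:time-till-active-untied}; the nondegeneracy condition must then be uniqueness of the minimizer of $\Delta_k(\cdot)$, not a spectral-gap condition on $\bg$ alone. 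Two further points you would eventually need: coordinates can also \emph{leave} the active support at a later stage (the $b_i^k\in\{0,2\}$ corner cases), which requires its own sign condition on $g_i(\btheta^k)$; and the $O(1)$ convergence of the fast phase to the next stationary point is taken by the paper as explicit assumptions (strictness of the restricted local minimum and noise-robustness of the limit), rather than derived from a Lojasiewicz condition, which would be hard to verify at this level of generality.
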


A transformer with diagonal weight matrices falls under this result when we only train the attention head weights. For example, if the transformer has one attention head, then we can take $\bu = [\bw_K, \bw_V] \in \R^{2d}$ and $\bv = [\bw_Q, \bw_O] \in \R^{2d}$ to be concatenations of the diagonal entries of the weights of the head; see Example~\ref{example:diag-transformer} for more details and the extension to transformers with many heads. Then, using Theorem~\ref{thm:main-informal}, we see that in each stage either $\bW_K \bW_Q^{\top} = \diag(\bw_K)\diag(\bw_Q)$ or $\bW_V \bW_O^{\top} = \diag(\bw_V) \diag(\bw_O)$ increases in effective rank by at most one.\footnote{
We also remark that Theorem~\ref{thm:main-informal} is interesting in its own right and may have other applications beyond transformers. It qualitatively recovers the incremental dynamics result of \cite{berthier2022incremental,pesme2023saddle} when specialized to linear diagonal networks, i.e., when $\fNN(\bx;\bu,\bv) = \sum_{i=1}^p u_i v_i x_i$.}

\paragraph{Experimental contributions} 

In our experiments, we first validate our theoretical results, which require the simplifying assumptions of small initialization and diagonal weight matrices.

Then, we conduct experiments on vision and language transformers in settings closer to practice, without any of the assumptions required by our theoretical analysis. Perhaps surprisingly, we again observe incremental learning dynamics, even though the assumptions of the theory are not met. The difference between trained and initial weights has low rank, and the rank of this difference grows gradually during training; see Figure~\ref{fig:teaser}. The incremental nature of the dynamics is easier to see for ImageNet, since for CIFAR-10 the rank of the weight difference does not grow as much.

\begin{SCfigure}
\centering
  \begin{tabular}{@{}c@{}c@{}c@{}c@{}}
      \begin{tabular}{@{}c@{}}(a)\end{tabular} &
      \begin{tabular}{@{}c@{}}\includegraphics[width=0.28\linewidth]{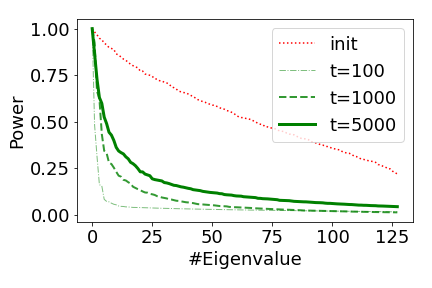}\end{tabular} &
      \begin{tabular}{@{}c@{}}(b)\end{tabular} &
      \begin{tabular}{@{}c@{}}\includegraphics[width=0.28\linewidth]{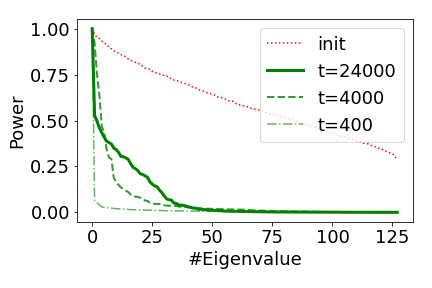}\end{tabular}
      \end{tabular}
 \caption{For an attention head in ViT trained on (a) CIFAR-10, and (b) ImageNet, we plot the normalized spectra of $\bW_K\bW_Q^{\top}$ at initialization (in red), and of the learned perturbations to $\bW_K\bW_Q^{\top}$ at different iterations (in green).}
  \label{fig:teaser}
\end{SCfigure}

\subsection{Related work}\label{ssec:related-work}
\paragraph{Relation to LoRA}
We note an intriguing connection to the LoRA algorithm, where a pretrained base model is cheaply fine-tuned by training a low-rank perturbation of the weights \citep{li2018measuring,aghajanyan2020intrinsic,hu2021lora}. The method is surprisingly powerful, and recently LoRA has been fundamental to allowing the open-source community to inexpensively fine-tune language models 
\citep{Patel_Ahmad_2023,taori2023alpaca}. On the other hand, in our work we observe that the trained weights are a low-rank perturbation of the initial weights due to the training dynamics, without having to apply an explicit rank constraint as in LoRA. This raises an exciting open question for future work:
{\it can we explain and improve algorithms like LoRA by better understanding and quantifying the incremental dynamics of large transformers?}

\paragraph{Low-rank bias in nonlinear networks}
For 2-layer networks, it is known that low-rank bias in the weights emerges if the target function depends on a low-dimensional subspace of the input
\citep{mergedstaircase,abbe2023sgd,damian2022neural,bietti2022learning,mousavi2022neural}. The results of \cite{mergedstaircase,abbe2023sgd} are especially relevant, since they show that the rank of the weights increases in a sequential manner, determined by the ``leap complexity'' of the target function, which is reminiscent of our empirical observations on transformers. See also \cite{frei2022implicit,timor2023implicit} for more investigations of low-rank bias in 2-layer networks under different assumptions. For transformers, \cite{yu2023compressing} report that empirically the trained weights (using default initialization) are not low-rank. This is consistent with our claim that the difference between initial and trained weights is low-rank, since the initial weights might not be low-rank.

\paragraph{Incremental learning dynamics} Several works prove incremental learning behaviour in deep \textit{linear} networks when the initialization is small. \cite{gidel2019implicit} has shown that gradient descent dynamics on a 2-layer linear network with $L_2$ loss effectively solve a reduced-rank regression problem with gradually increasing rank. \cite{gissin2019implicit} prove a dynamical depth separation result, allowing for milder assumptions on initialization scale. \cite{arora2019implicit,lowrank2} show implicit bias towards low rank in deep matrix and tensor factorization. \cite{lowrank3} show deep matrix factorization dynamics with small initialization are equivalent to a greedy low-rank learning (GLRL) algorithm. And \cite{saddle} independently provides a similar description of the dynamics, but without requiring balanced initialization. Finally, \cite{berthier2022incremental,jin2023understanding,pesme2023saddle} overcome a technical hurdle from previous analyses by proving incremental learning for the entire training trajectory, rather than just the first stage. In contrast to our result, these prior works apply only to \textit{linear} networks with certain convex losses, whereas our result applies to \textit{nonlinear} networks. In order to make our extension to nonlinear networks possible, we must make stronger assumptions on the training trajectory, which we verify hold empirically. As far as we are aware, one other work on incremental learning handles nonlinear networks: \cite{boursier2022gradient} proves that a 2-layer network learns with a two-stage incremental dynamic; but that result needs the stylized assumption that all data points are orthogonal.

\subsection{Paper organization}
Sections~\ref{sec:preliminaries},~\ref{sec:diagonal-nets}, and~\ref{sec:incremental-dynamics} contain theoretical preliminaries, definitions of the models to which our theory applies, and our main theoretical result on incremental dynamics. Section~\ref{sec:experiments} provides experiments which verify and extend the theory. Section~\ref{sec:conclusion} discusses 
limitations and future directions.

\section{Preliminaries}\label{sec:preliminaries}

We consider training a network $\fNN(\cdot; \btheta)$ parametrized by a vector of weights $\btheta$, to minimize a loss
\begin{align*}
\cL(\btheta) = \E_{\bx,\by}[\ell(\by,\fNN(\bx;\btheta))]\,,
\end{align*}
where the expectation is over samples $(\bx,\by) \in \R^{d_x} \times \R^{d_y}$ from a training data distribution, and $\ell : \R^{d_y} \times \R^{d_{out}} \to \R$. Consider a solution $\btheta(t)$ to the gradient flow\footnote{Gradient flow training can be obtained as a limit of SGD or GD training as the learning rate tends to 0 (see, e.g., \cite{bach2020effortless}). It is a popular testbed for studying learning dynamics (see e.g., \cite{saxe2013exact,arora2018optimization,razin2020implicit}), since is generally simpler to analyze than SGD.}
\begin{align}\label{eq:grad-flow}
\btheta(0) = \alpha \btheta_0,~~~ 
\frac{d\btheta}{dt} = -\nabla_{\btheta} \cL(\btheta)
\end{align}
where $\alpha > 0$ is a parameter governing the initialization scale, that we will take small. For our theory, we henceforth require the following mild regularity assumption on the loss and data.
\begin{assumption}[Regularity of data distribution and loss]\label{ass:loss-reg-data-bounded}
The function $\ell(\by,\bzeta)$ is continuously twice-differentiable in the arguments $[\by,\bzeta] \in \R^{d_y + d_{out}}$. There exists $C > 0$ such that almost surely the data is bounded by $\|\bx\|, \|\by\| \leq C$.
\end{assumption}
The assumption on $\ell$ is satisfied in typical cases such as the square and the cross-entropy losses. The data boundedness is often satisfied in practice (e.g., if the data is normalized).

We also use the notation $\supp(\ba) := \{i : a_i \neq 0\}$ to denote the support of a vector $\ba$.

\section{Neural networks with diagonal weights}\label{sec:diagonal-nets}

Our theory analyzes the training dynamics of networks that depend on products of diagonal weight matrices. We use $\odot$ to denote elementwise vector product.

\begin{definition}\label{def:diag-repar-untied}
A network $\fNN$ is smooth with diagonal weights $\btheta = (\bu,\bv) \in \R^{2p}$ if it is of the form
\begin{align*}
\fNN(\bx;\btheta) = h(\bx; \bu \odot \bv)
\end{align*}
where $h : \R^{d_x} \times \R^p \to \R^{d_{out}}$ is continuously twice-differentiable in its arguments in $\R^{d_x + p}$.
\end{definition}

The assumption on $h$ precludes the use of the ReLU function since it is not continuously-differentiable. Otherwise the assumption is fairly mild since any $h$ can be used to express an architecture of any depth as long as the nonlinearities are twice-differentiable, which includes for example GeLUs (as used in ViT). We describe how to express a transformer with diagonal weights.
\begin{example}[Transformer with diagonal weights]\label{example:diag-transformer}
A transformer with $L$ layers and $H$ attention heads on each layer is defined inductively by $\bZ_0 = \bX \in \R^{n \times d}$ and
\begin{itemize}
\item (Attention layer) $\tbZ_{\ell} = \bZ_{\ell-1} + \sum_{i=1}^{H} \mathrm{attention}(\bZ_{\ell-1}; \bW_{K}^{\ell,i},\bW_{Q}^{\ell,i},\bW_{V}^{\ell,i},\bW_{O}^{\ell,i})$
\item (Feedforward layer) $\bZ_{\ell} = \tbZ_{\ell} + \sigma(\tbZ_{\ell} \bW_{A}^{\ell})
(\bW_{B}^{\ell})^{\top}$\,,
\end{itemize}
where $\bW_{K}^{\ell,i},\bW_{Q}^{\ell,i},\bW_{V}^{\ell,i},\bW_{O}^{\ell,i} \in \R^{d \times d'}$ are attention parameters, and $\bW_{A}^{\ell},\bW_{B}^{\ell} \in \R^{d \times d'}$ are the feedforward parameters, and $\sigma$ is a continuously twice-differentiable activation. Suppose that the attention parameters are diagonal matrices: i.e.,  $\bW_{K}^{\ell,i} = \diag(\bw_{K}^{\ell,i}) \in \R^{d \times d}$, and similarly for the $\bW_Q^{\ell,i},\bW_V^{\ell,i},\bW_O^{\ell,i}$ matrices. Then by the definition of the attention layer \eqref{eq:attention}, the final output of the transformer $\bZ_{L}$ only depends on the attention parameters through the elementwise products $\bw_{K}^{\ell,i} \odot \bw_{Q}^{\ell,i}$ and $\bw_{V}^{\ell,i} \odot \bw_{O}^{\ell,i}$. In other words, we can write
\begin{align*}
\bZ_{L} = h(\bX;\bu \odot \bv)\,,
\end{align*}
for vectors $\bu = [\bw_K^{\ell,i},\bw_V^{\ell,i}]_{(\ell,i) \in [L] \times [H]} \in \R^{2dHL}$ and $\bv = [\bw_Q^{\ell,i},\bw_O^{\ell,i}]_{(\ell,i) \in [L] \times [H]} \in \R^{2dHL}$, and some smooth model $h$. Thus, if only the attention layers are trained, the diagonal transformer fits under Definition~\ref{def:diag-repar-untied}.
\end{example}
\section{Incremental learning in networks with diagonal weights}\label{sec:incremental-dynamics}

We prove that if the initialization scale $\alpha$ is small, then learning proceeds in incremental stages, where in each stage the effective sparsity of the weights increases by at most one. These stages are implicitly defined by Algorithm~\ref{alg:two-layer-greedy-untied} below, which constructs a sequence of times $0 = T_0 < T_1 < \dots < T_k < \cdots $ and weight vectors $\btheta^0, \btheta^1,\ldots,\btheta^k,\ldots \in \R^{2p}$ that define the stages. We prove the following:
\begin{theorem}[Incremental dynamics at small initialization]\label{thm:diag-2-greedy-untied}
Let $\fNN$ be a model with diagonal weights as in Definition~\ref{def:diag-repar-untied}. For any stage $k$ and 
time $t \in (T_k, T_{k+1})$ the following holds under Assumptions~\ref{ass:loss-reg-data-bounded}, \ref{ass:nondegenerate-untied}, \ref{ass:strict-local-minimum-untied} and \ref{ass:robust-dynamics-untied}.
There is $\alpha_0(t) > 0$ such that for all $\alpha < \alpha_0$, there exists a unique solution $\btheta : [0,t\log(1/\alpha)] \to \R^p$ to the gradient flow \eqref{eq:grad-flow} and
\begin{align*}
\lim_{\alpha \to 0} \btheta(t \cdot \log(1/\alpha)) \to \btheta^k\,\,,
\end{align*}
and at each stage the sparsity increases by at most one: $\supp(\btheta^{k+1}) \sm \supp(\btheta^k) \subseteq \{i_k\}$.\footnote{Abusing notation, for $\btheta = (\bu,\bv) \in \R^p \times \R^p$, we write $\supp(\btheta) = \supp(\bu) \cup \supp(\bv)$.}
\end{theorem}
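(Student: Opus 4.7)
The plan is to induct on the stage index $k$, showing that on the rescaled time interval $s \in (T_k, T_{k+1})$ the trajectory $\btheta(s \log(1/\alpha))$ converges to the constant $\btheta^k$ as $\alpha \to 0$. A key structural fact I would exploit throughout is the coordinatewise conservation law $\frac{d}{dt}(u_i^2 - v_i^2) = 0$, which follows from writing $\cL(\btheta) = \tilde{\cL}(\bu \odot \bv)$ and computing the gradient flow as $\dot u_i = -v_i \partial_{w_i}\tilde{\cL}$, $\dot v_i = -u_i \partial_{w_i}\tilde{\cL}$. This freezes $u_i^2 - v_i^2$ at its $O(\alpha^2)$ initial value for all time, so each pair $(u_i,v_i)$ stays $O(\alpha)$-close to the balanced line $u_i = \pm v_i$ and reduces effectively to a single scalar degree of freedom per coordinate.

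I then split the $k$-th stage into a \emph{plateau phase} and an \emph{escape phase}. In the plateau phase, I linearize around $\btheta^k$. For on-support coordinates $i \in \supp(\btheta^k)$, Assumption~\ref{ass:strict-local-minimum-untied} makes $\btheta^k$ a strict local minimum of the loss restricted to that support, so these coordinates remain within $\alpha^{\Omega(1)}$ of their saddle values by a standard Lyapunov estimate. For off-support coordinates, $u_i$ and $v_i$ are both small and satisfy $\dot u_i \approx -\lambda_i^k v_i$, $\dot v_i \approx -\lambda_i^k u_i$ where $\lambda_i^k := \partial_{w_i}\tilde{\cL}(\bu^k \odot \bv^k)$. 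This two-dimensional linear system has eigenvalues $\pm|\lambda_i^k|$, and Assumption~\ref{ass:nondegenerate-untied} should select a unique maximizer $i_k$ with a strict spectral gap. The plateau then lasts exactly the time needed for the unstable mode in direction $i_k$ to grow from $\Theta(\alpha)$ to $\Theta(1)$, which is $\log(1/\alpha)/|\lambda_{i_k}^k| + O(1)$; the subdominant off-support coordinates remain at most $\alpha^{\Omega(1)}$ by the gap.

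In the escape phase, once $u_{i_k}v_{i_k}$ becomes macroscopic the linearization fails, and I would compare the true flow to a \emph{reference flow} initialized at the end-of-plateau state but with all subdominant off-support coordinates set to zero. The reference flow lives on the enlarged support $\supp(\btheta^k) \cup \{i_k\}$ and, by Assumption~\ref{ass:robust-dynamics-untied}, converges in $O(1)$ ambient time to the next saddle $\btheta^{k+1}$. A Gr\"onwall estimate bounds the deviation between the two flows by $\alpha^{\Omega(1)}$, since the perturbation is small and the escape lasts only $O(1)$ ambient time. On the rescaled clock $s = t/\log(1/\alpha)$ this is $o(1)$, so the entire escape collapses onto the single point $s = T_{k+1}$ in the limit, giving the jump from $\btheta^k$ to $\btheta^{k+1}$ and the promised support increment $\{i_k\}$.

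The main obstacle will be making the induction \emph{quantitative}: the plateau analysis of stage $k+1$ requires, as input, sharp polynomial-in-$\alpha$ rates for how small each off-support coordinate is at the moment the system enters the next plateau, because those magnitudes — together with the new $\lambda_i^{k+1}$'s — determine both the duration $T_{k+2}-T_{k+1}$ and the identity of the next dominant index. Propagating these rates through the nonlinear escape phase while ruling out accidental amplification of subdominant directions during the transition is where the bulk of the technical work sits, and is presumably what Assumption~\ref{ass:robust-dynamics-untied} is designed to quantify. A secondary subtlety is the base case $k = 0$, where $\btheta^0 = \bzero$ and all coordinates start on the same scale $\Theta(\alpha)$ rather than partitioned into on- and off-support; here one must check that the generic initialization direction $\btheta_0$ has nonvanishing projection onto the unstable eigenvector so that the same linearization still singles out a unique winning index.
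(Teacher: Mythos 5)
Your proposal follows essentially the same route as the paper's proof: the conservation law reduces each coordinate pair to its unstable mode $w_i = u_i + v_i$ (your $2\times 2$ system, diagonalized), and the stage-wise induction alternates a linearized plateau --- with Assumption~\ref{ass:strict-local-minimum-untied} stabilizing the on-support coordinates and a log-scale ``race'' among the off-support ones determining $i_k$ --- with an $O(1)$-time escape compared via Gr\"onwall to a reference flow on the enlarged support, where Assumption~\ref{ass:robust-dynamics-untied} supplies convergence to $\btheta^{k+1}$; you also correctly identify the main technical burden as propagating the log-scale magnitudes $\bb^k$ through the escape. Two minor caveats: during the plateau the on-support coordinates can only be shown to stay within a small constant (depending on the escape threshold $\rho$, not $\alpha^{\Omega(1)}$) of $\btheta^k$, since the growing coordinate $i_k$ perturbs them by $\Theta(\rho)$ before the stage ends --- the weaker bound suffices for the limit statement; and for $k\geq 1$ the winner is the minimizer of $(b_i^k-1+\sgn(g_i))/g_i$ rather than a maximizer of a rate, and you do not treat the corner case of coordinates leaving the support ($b_i^k\in\{0,2\}$), which is precisely what the last clause of Assumption~\ref{ass:nondegenerate-untied} is there to handle.
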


\begin{algorithm}[h!]
   \caption{Incremental learning in networks with diagonal weights}
   \label{alg:two-layer-greedy-untied}
\begin{algorithmic}[1]
   \STATE $\bb^0$, $\btheta^0 \gets \bfzero \in \R^p$, $T_0 \gets 0$

   \FOR{stage number $k = 0,1,2,\ldots$}
   \STATE {\color{blue} \# (A) Pick new coordinate $i_k \in [p]$ to activate.} 
   \STATE For each $i$, define time $\Delta_k(i)$ until active using \eqref{eq:time-till-active-untied}.
    
    \STATE Pick winning coordinate $i_k$ using \eqref{eq:winning-coordinate-and-new-time-untied}

    \STATE Calculate time $T_{k+1}$ using \eqref{eq:winning-coordinate-and-new-time-untied} and {\bfseries break} if $\infty$
    
    \STATE Update logarithmic weight approximation $\bb^{k+1}$ using \eqref{eq:bb-kp1-untied}

   \STATE {\color{blue} \# (B) Train activated coordinates to stationarity.}
    
   \STATE $\btheta^{k+1} \gets $ limiting dynamics point from \eqref{eq:bz-kp1-def-untied}

   \ENDFOR

\end{algorithmic}
\end{algorithm}

\paragraph{Application: transformer with diagonal weights} 
Before giving the intuition for this theorem and stating the assumptions formally, let us discuss its application to the diagonal transformer model from Example~\ref{example:diag-transformer}. As a corollary of Theorem~\ref{thm:diag-2-greedy-untied}, the gradient flow on a diagonal transformer with small initialization will learn in stages, where in each stage there will be at most one head $i \in [H]$ in one layer $\ell \in [L]$ such that either the rank of $\bW_K^{\ell,i}(\bW_Q^{\ell,i})^{\top} = \diag(\bw_{K}^{\ell,i})\diag(\bw_Q^{\ell,i})$ or the rank of $\bW_V^{\ell,i}(\bW_{O}^{\ell,i})^{\top} = \diag(\bw_V^{\ell,i})\diag(\bw_{O}^{\ell,i})$ increases by at most one. In Figure~\ref{fig:loss-toy}, we illustrate these dynamics in the toy case of a single attention head trained in a student-teacher setup.

\begin{figure}[h]
\centering
\begin{tabular}{@{}c@{}c@{}}
\begin{tabular}{@{}c@{}}\includegraphics[scale=0.5]{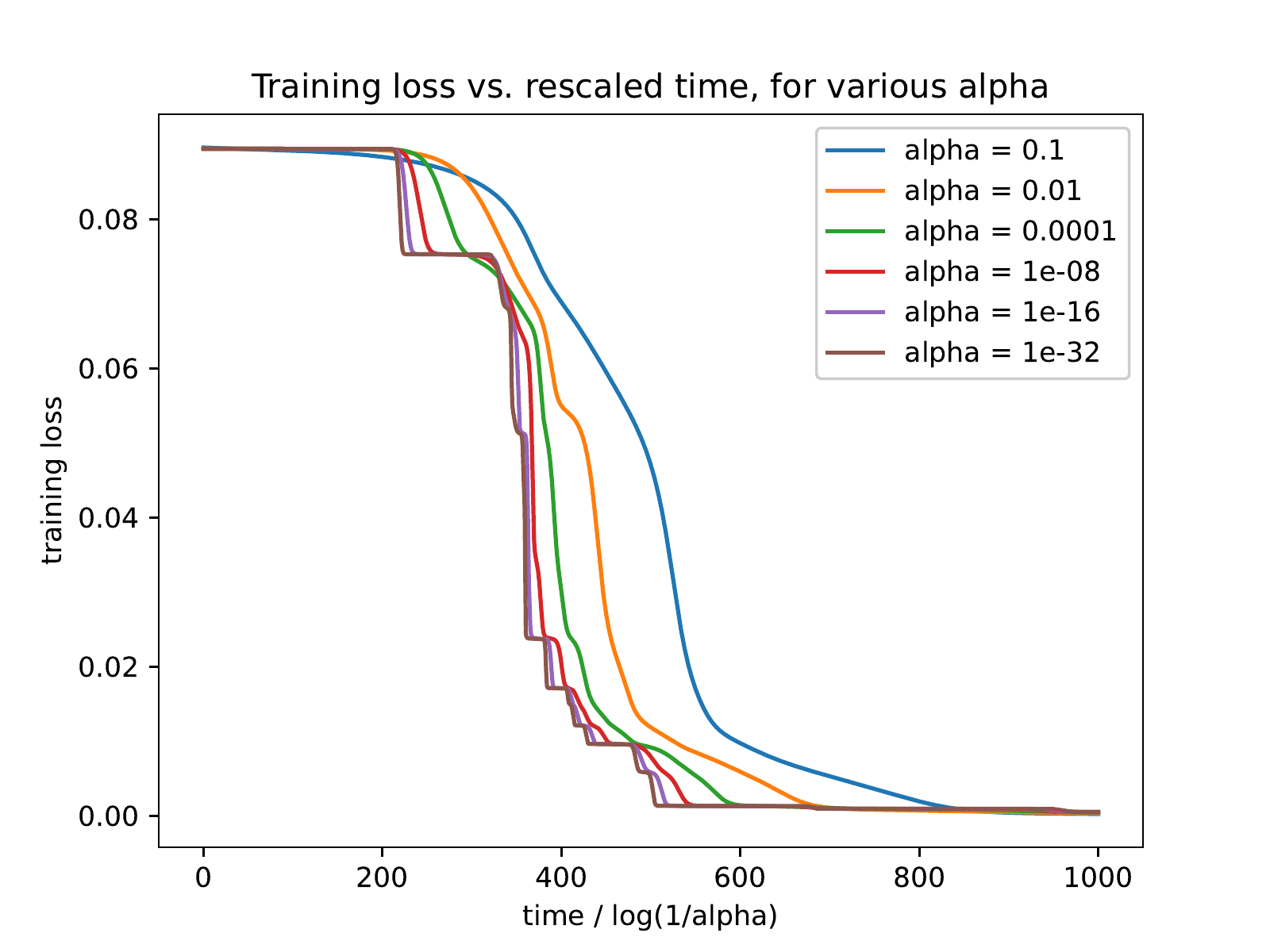} \end{tabular} & \begin{tabular}{@{}c@{}} \includegraphics[scale=0.5,trim={0 7.7cm 11.2cm 7.7cm},clip]{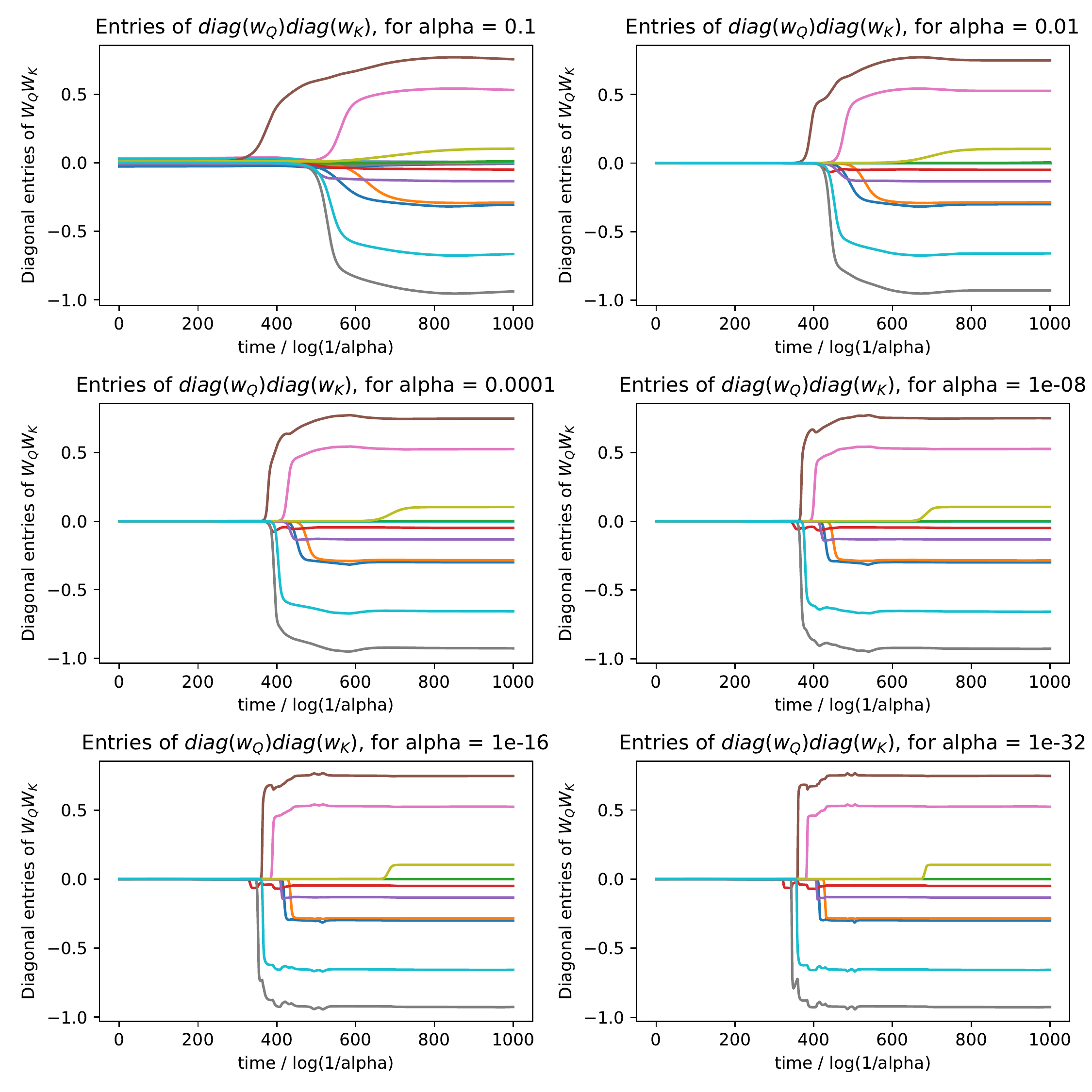} \\ \includegraphics[scale=0.5,trim={0 7.7cm 11.2cm 7.7cm},clip]{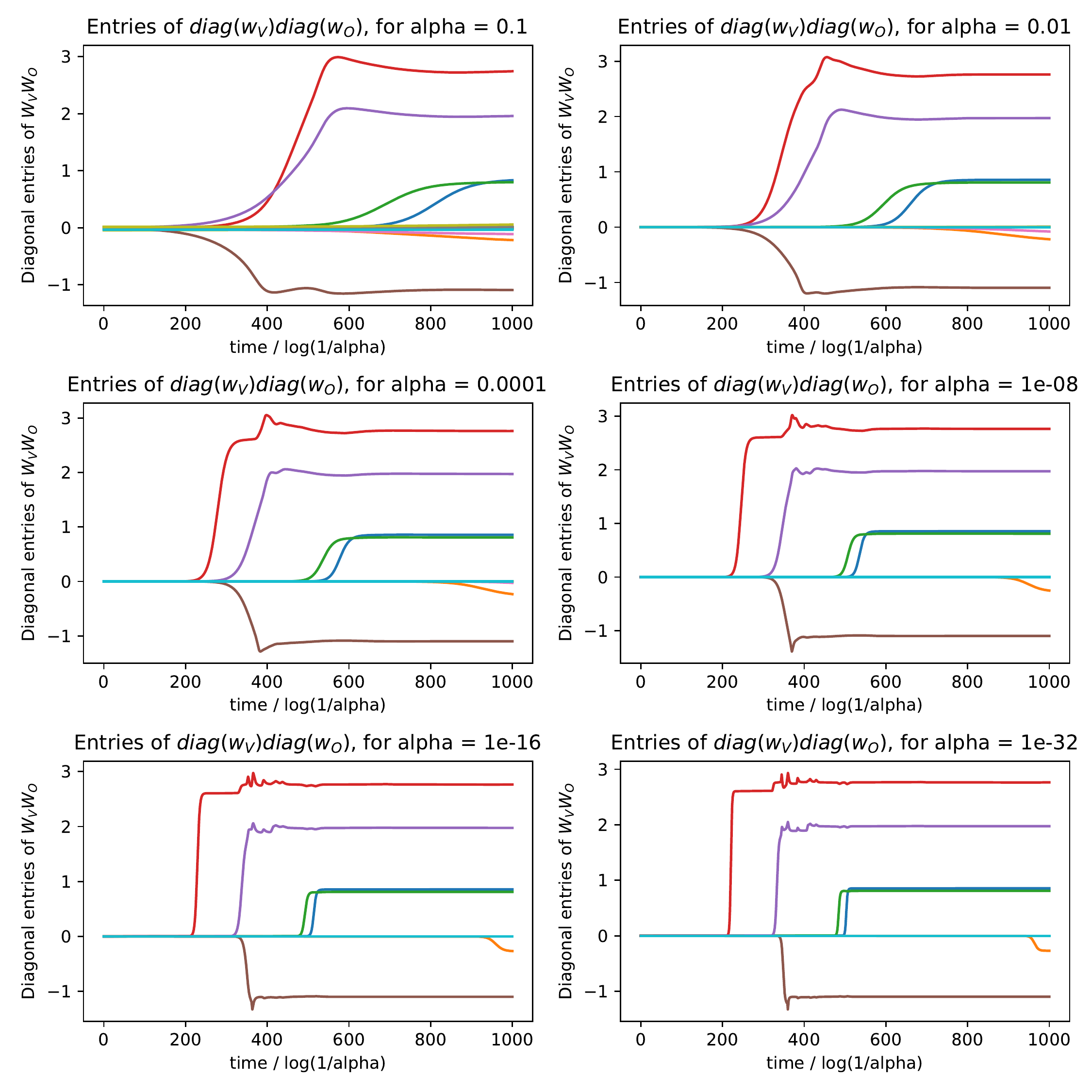}\end{tabular} \\
(a) & (b)
\end{tabular}
\caption{(a) Loss versus rescaled time in the toy task of learning an attention head with diagonal weights, for various initialization scales $\alpha$. The loss curves converge as $\alpha \to 0$ to a curve with stagewise loss plateaus and sharp decreases, as predicted by the theory; some stagewise learning behavior is already clear with $\alpha = 0.01$. (b) Each line shows the evolution of one of the entries of $\diag(\bw_Q)\diag(\bw_K)$ and $\diag(\bw_V)\diag(\bw_O)$ over rescaled time, demonstrating that the rank of these matrices increases incrementally; see Appendix~\ref{app:toy-assumptions} for experimental details and further results.}\label{fig:loss-toy}
\end{figure}

\subsection{Intuition for incremental learning dynamics}
We develop an informal intuition for Theorem~\ref{thm:diag-2-greedy-untied} and fill out the definition of Algorithm~\ref{alg:two-layer-greedy-untied}. A model $\fNN$ with diagonal weights $\btheta = (\bu,\bv)$ as in Definition~\ref{def:diag-repar-untied} evolves under the gradient flow \eqref{eq:grad-flow} as
\begin{align}
\frac{d\bu}{dt} &= \bv \odot \bg(\btheta), \quad \frac{d\bv}{dt} = \bu \odot \bg(\btheta) \quad \mbox{ where } \label{eq:g-definition-untied} \\
\bg(\btheta) &= -\E_{\bx,\by}[D\ell(\by,h(\bx;\bu \odot \bv))^{\top} D h(\bx; \bu \odot \bv)^{\top}]\,. \nonumber
\end{align}
Here $D\ell(\by,\cdot) \in \R^{1 \times d_{out}}$ is the derivative of $\ell$ in the second argument and $D h(\bx,\cdot) \in \R^{d_{out} \times p}$ is the derivative of $h$ in the second argument. The first key observation is a conservation law that simplifies the dynamics. It can be viewed as the balancedness property for networks with linear activations \cite{arora2018optimization,du2018algorithmic}, specialized to the case of diagonal layers.
\begin{lemma}[Conservation law]
For any $i \in [p]$ and any time $t$, we have 
\begin{align}\label{eq:conservation-law}
u_i^2(t) - v_i^2(t) = u_i^2(0) - v_i^2(0)\,.
\end{align}
\end{lemma}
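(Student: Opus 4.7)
The plan is to differentiate $u_i^2(t)-v_i^2(t)$ with respect to time and show the derivative vanishes identically, so the quantity is constant along the gradient flow. This is a direct computation using the coordinate-wise dynamics \eqref{eq:g-definition-untied}, which give $\dot u_i = v_i\, g_i(\btheta)$ and $\dot v_i = u_i\, g_i(\btheta)$, where $g_i(\btheta)$ denotes the $i$-th component of $\bg(\btheta)$.

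First I would apply the chain rule to obtain
\begin{align*}
\frac{d}{dt}\bigl(u_i^2(t) - v_i^2(t)\bigr) = 2 u_i(t)\, \dot u_i(t) - 2 v_i(t)\, \dot v_i(t).
\end{align*}
Substituting the dynamics yields
\begin{align*}
\frac{d}{dt}\bigl(u_i^2(t) - v_i^2(t)\bigr) = 2 u_i(t)\, v_i(t)\, g_i(\btheta(t)) - 2 v_i(t)\, u_i(t)\, g_i(\btheta(t)) = 0,
\end{align*}
since the two terms are equal and cancel. Integrating from $0$ to $t$ gives $u_i^2(t) - v_i^2(t) = u_i^2(0) - v_i^2(0)$, which is the desired identity.

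There is essentially no obstacle here: the result follows from the symmetry of the gradient flow equations in $\bu$ and $\bv$ (both derivatives share the common factor $g_i(\btheta)$, with the roles of $u_i$ and $v_i$ swapped). The only mild point to verify is that the solution $\btheta(t)$ to \eqref{eq:grad-flow} is well-defined and continuously differentiable on the interval of interest, which follows from Assumption~\ref{ass:loss-reg-data-bounded} (smoothness of $\ell$ and boundedness of the data) combined with standard ODE theory applied to the smooth vector field $\btheta \mapsto -\nabla_{\btheta}\cL(\btheta)$. Thus the manipulation above is rigorously justified.
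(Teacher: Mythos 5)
Your proof is correct and is exactly the argument the paper gives: differentiate $u_i^2 - v_i^2$, substitute the dynamics $\dot u_i = v_i g_i(\btheta)$ and $\dot v_i = u_i g_i(\btheta)$ from \eqref{eq:g-definition-untied}, and observe the terms cancel. The extra remark about well-posedness of the flow is fine but not needed for the identity itself.
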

\begin{proof}
This follows from $\frac{d}{dt} (u_i^2 - v_i^2) = u_iv_ig_i(\btheta) - u_iv_ig_i(\btheta) = 0$.
\end{proof}
The conservation law reduces the degrees of freedom and means that we need only keep track of $p$ parameters in total. Specifically, if we define $w_i(t) := u_i(t) + v_i(t)$, then the vector $\bw(t) = \bu(t) + \bv(t)$ evolves by
\begin{align}\label{eq:change-of-basis-dynamics}
\frac{d \bw}{dt} = \bw \odot \bg(\btheta)\,.
\end{align}
Using the conservation law \eqref{eq:conservation-law}, we can keep track of the weights in terms of the initialization and $\bw(t)$:
\begin{align}\label{eq:conservation-law-application}
\btheta(t) = \left(\frac{1}{2}(\bw(t) + \frac{\bu^{\odot 2}(0) - \bv^{\odot 2}(0)}{\bw(t)}), \frac{1}{2}(\bw(t) - \frac{\bu^{\odot 2}(0) - \bv^{\odot 2}(0)}{\bw(t)})\right)
\end{align}
Therefore it suffices to analyze the dynamics of $\bw(t)$.

\subsubsection{Stage 1 of dynamics}
\paragraph{Stage 1A of dynamics: loss plateau for time $\Theta(\log(1/\alpha))$} At initialization, $\btheta(0) \approx \bzero$ because the weights are initialized on the order of $\alpha$ which is small. This motivates the approximation $\bg(\btheta(t)) \approx \bg(\bzero)$, under which the dynamics solve to:
\begin{equation}\label{eq:linear-dyn}\bw(t) \approx \bw(0) \odot e^{\bg(\bzero) t}.\end{equation}
Of course, this approximation is valid only while the weights are still close to the small initialization. The approximation breaks once one of the entries of $\btheta(t)$ reaches constant size. By combining \eqref{eq:conservation-law-application} and \eqref{eq:linear-dyn}, this happens at time
$t \approx T_1 \cdot \log(1/\alpha)$ for
\begin{align*}
T_1 = \min_{i \in [p]} 1 / |g_i(\bzero)|\,.
\end{align*}
Until this time, the network remains close to its initialization, and so we observe a loss plateau.

\paragraph{Stage 1B of dynamics: nonlinear dynamics for time $O(1)$} Subsequently, the loss decreases nonlinearly during a $O(1)$ time-scale, which is vanishingly short relative to the time-scale of the loss plateau. To prove this, we make the non-degeneracy assumption that there is a unique coordinate $i_0$ such that $1 / |g_{i_0}(\bzero)| = T_1$. Under this assumption, in stage 1A all weights except for those at coordinate $i_0$ remain vanishingly small, on the order of $o_{\alpha}(1)$. Concretely, for any small $\epsilon > 0$, there is a time $\lt_1(\epsilon) \approx T_1 \cdot \log(1/\alpha)$ and sign $s \in \{+1,-1\}$ such that\footnote{
Without loss of generality, we can ensure that at initialization $\bu(0)$ and $\bu(0) + \bv(0)$ are nonnegative. This implies $\bu(t)$ is nonnegative. The fact that $u_{i_0}$ and $v_{i_0}$ are roughly equal in magnitude but might differ in sign is due to the conservation law \eqref{eq:conservation-law}. See Appendix~\ref{ssec:sum-pos-wlog-untied} for details.}
$$u_{i_0}(\lt_1) \approx \epsilon, v_{i_0}(\lt_1) \approx s\epsilon \quad \mbox{ and } \quad |u_i(\lt_1)|, |v_i(\lt_1)| = o_{\alpha}(1) \mbox{ for all } i \neq i_0.$$
Because all coordinates except for $i_0$ have vanishingly small $o_{\alpha}(1)$ weights after stage 1A, we may perform the following approximation of the dynamics. Zero out the weights at coordinates except for $i_0$, and consider the training dynamics starting at $\tilde{\btheta} = (\epsilon \be_{i_0}, s \epsilon \be_{i_0})$. After $O(1)$ time, we should expect these dynamics to approach a stationary point. Although the evolution is nonlinear, all entries remain zero except for the $i_0$ entries, so the stationary point is also sparse. Mathematically, there is a time $\ut_1 = \lt_1 + O(1) \approx T_1 \cdot \log(1/\alpha)$ such that
\begin{align*}
\btheta(\ut_1) \approx (a \be_{i_0}, s a \be_{i_0}) := \btheta^1\,,
\end{align*}
for some $a \in \R_{> 0}$, where $\btheta^1$ is a stationary point of the loss.\footnote{The entries of $\bu$ and $\bv$ are close in magnitude (but may differ in sign) because of the conservation law \eqref{eq:conservation-law}.} Despite the nonlinearity of the dynamics, the approximation can be proved using Gr\"{o}nwall's inequality since $\ut_1 - \lt_1 = O(1)$ is a constant time-scale.

To summarize, we have argued that the network approximately reaches stationary point that is 1-sparse, where only the weights at coordinate $i_0$ are nonzero.

\subsubsection{Later stages}
We inductively extend the argument to any number of stages $k$, where each stage has a $\Theta(\log(1/\alpha))$-time plateau, and then a $O(1)$-time nonlinear evolution, with the sparsity of the weights increasing by at most one. The argument to analyze multiple stages is analogous, but we must also keep track of the magnitude of the weights on the logarithmic scale, since these determine how much longer . Inductively on $k$, suppose that there is some $T_k \in \R, \bb^k \in \R^p$ and $\btheta^k \in \R^{2p}$ and a time $\ut_k \approx T_k \cdot \log(1/\alpha)$ such that
\begin{align*}
\log_{\alpha}(\bw(\ut_k)) \approx \bb^k \mbox{ and } \btheta(\ut_k) \approx \btheta^k,
\end{align*}
where $\btheta^k$ is a stationary point of the loss. Our inductive step shows that there is $T_{k+1} \in \R$ such that during times $t \in (\ut_k,T_{k+1} \cdot \log(1/\alpha) - \Omega(1))$ the weights remain close to the stationary point from the previous stage, i.e., $\btheta(t) \approx \btheta^k$. And at a time $\ut_{k+1} \approx T_{k+1} \cdot \log(1/\alpha)$ we have
\begin{align*}
\log_{\alpha}(\bw(\ut_{k+1})) \approx \bb^{k+1} \mbox{ and } \btheta(\ut_{k+1}) \approx \btheta^{k+1},
\end{align*}
where $\btheta^{k+1}$ and $\bb^{k+1}$ are defined below (summarized in Algorithm~\ref{alg:two-layer-greedy-untied}). Most notably, $\btheta^{k+1}$ is a stationary point of the loss whose support grows by at most one compared to $\btheta^k$.

\paragraph{Stage $(k+1)$A, loss plateau for time $\Theta(\log(1/\alpha))$} At the beginning of stage $k+1$, the weights are close to the stationary point $\btheta^k$, and so, similarly to stage 1A, linear dynamics are valid.
\begin{align}\label{eq:linear-dynamics-approx}
\bw(t) \approx \bw(\ut_k) \odot e^{\bg(\btheta^k) (t - \ut_k)}\,.
\end{align}
Using the conservation law \eqref{eq:conservation-law-application}, we derive a ``time until active'' for each coordinate $i \in [p]$, which corresponds to the time for the weight at that coordinate to grow from $o_{\alpha}(1)$ to $\Theta(1)$ magnitude:
\begin{align}\label{eq:time-till-active-untied}
\Delta_k(i) = \begin{cases}
(b_i^k - 1 + \sgn(g_i(\btheta^k))) / g_i(\btheta^k), & \mbox{ if } g_i(\btheta^k) \neq 0 \\ 
\infty, & \mbox{ if } g_i(\btheta^k) = 0
\end{cases}
\end{align}
The linear dynamics approximation \eqref{eq:linear-dynamics-approx} breaks down at a time 
$t \approx T_{k+1} \cdot \log(1/\alpha)$, where
\begin{align}\label{eq:winning-coordinate-and-new-time-untied}
T_{k+1} = T_{k} + \Delta_k(i_k), \quad  i_k = \arg\min_{i \in [p]} \Delta_k(i)\,,\end{align}
which corresponds to the first time at the weights at a coordinate grow from $o_{\alpha}(1)$ to $\Theta(1)$ magnitude. And at times $t \approx T_{k+1} \cdot \log(1/\alpha)$, on the logarithmic scale $\bw$ is given by
\begin{align}
\log_{\alpha}(\bw(t)) \approx \bb^{k+1} := \bb^k - \bg(\btheta^k) \Delta_k(i_k)\,, \label{eq:bb-kp1-untied}
\end{align}

\paragraph{Stage $(k+1)$B of dynamics: nonlinear dynamics for time $O(1)$}
Subsequently, the weights evolve nonlinearly during $O(1)$ time. In a similar way to the analysis of Stage 1B, we show that at a time $\ut_{k+1} = \lt_{k+1} + O(1) \approx T_{k+1} \cdot \log(1/\alpha)$, we have 
\begin{align}\label{eq:bz-kp1-def-untied}
\btheta(\ut_{k+1}) \approx \btheta^{k+1} := \lim_{\epsilon \to 0} \lim_{t \to \infty} \bpsi^{k}(t,\epsilon)\,,
\end{align}
where the dynamics $\bpsi^k(t,\eps) \in \R^{2p}$ are initialized at $\bpsi^k(0,\eps) = \btheta^k + (\eps \be_{i_k}, \sgn(g_i(\btheta^k))\eps \be_{i_k})$ and evolve according to the gradient flow 
$\frac{d\bpsi^k(t,\eps)}{dt} = -\nabla_{\btheta} \cL(\bpsi^k)$. This concludes the inductive step.

\subsection{Assumptions for incremental dynamics}

To make this intuition rigorous, we formalize below the assumptions required for Theorem~\ref{thm:diag-2-greedy-untied}. In Figure~\ref{fig:validation-teaser} and Appendix~\ref{app:toy-assumptions}, we provide experiments validating these assumptions on the toy model.

The first assumption is that the dynamics are non-degenerate, in the sense that two coordinates do not have weights that grow from $o_{\alpha}(1)$ to $\Theta(1)$ size at the same rescaled time. We also place a technical condition to handle the corner case when a coordinate leaves the support of the current stage's stationary point.
\begin{assumption}[Nondegeneracy of dynamics in part (A)]\label{ass:nondegenerate-untied}
The initialization satisfies $|u_i(0)| \neq |v_i(0)|$ for all $i$. For stage $k$, either $T_{k+1} = \infty$ or there is a unique minimizer $i_k$ to $\min_i \Delta_k(i_k)$ in \eqref{eq:winning-coordinate-and-new-time-untied}. Finally, for all $i \in \supp(\btheta^{k-1}) \setminus \supp(\btheta^k)$ we have $g_i(\btheta^k) \neq 0$.
\end{assumption}
Next, we require that very small perturbations of the coordinates outside of $\supp(\btheta^k)$ do not change the dynamics. For this, it suffices that $\btheta^k$ be a strict local minimum.
\begin{assumption}[Stationary points are strict local minima]\label{ass:strict-local-minimum-untied}
For stage $k$, there exist $\delta_k > 0$ and $c_k > 0$ such that for $\tilde{\bu} \in B(\bu^k,\delta)$ supported on $\supp(\bu^k)$, we have
\begin{align*}
\cL(\tilde{\bu}, \bs^k \odot \tilde{\bu}) \geq c_k \|\bu^k - \tilde{\bu}\|^2
\end{align*}
\end{assumption}
Finally, we require a robust version of the assumption that the limit \eqref{eq:bz-kp1-def-untied} exists, asking for convergence to a neighborhood of $\btheta^{k+1}$ even when the initialization is slightly noisy. 
\begin{assumption}[Noise-robustness of dynamics in part (B)]\label{ass:robust-dynamics-untied}
For any stage $k$ with $T_{k+1} < \infty$ and any $\epsilon > 0$, there are $\delta > 0$ and $\tau : \R_{> 0} \to \R$ such that the following holds. For any $\tilde{\bu} \in B(\bu^k,\delta) \cap \R_{\geq 0}^p$ supported on $\supp(\tilde{\bu}) \subseteq \supp(\bu^k) \cup \{i_k\}$, there exists a unique solution $\bpsi : [0,\infty) \to \R^p$ of the gradient flow $\frac{d\bpsi}{dt} = -\nabla_{\btheta} \cL(\bpsi)$ initialized at $\bpsi(0) = (\tilde{\bu}, \bs^{k+1} \odot \tilde{\bu})$, and at times $t \geq \tau(\tilde{\psi}_{i_k})$,
\begin{align*}
\|\bpsi(t) - \btheta^{k+1}\| < \epsilon\,.
\end{align*}
\end{assumption}

\begin{figure}
\centering
\begin{tabular}{@{}c@{}c@{}}
\includegraphics[scale=0.5,trim={0 14.4cm 12.7cm 14.5cm},clip]{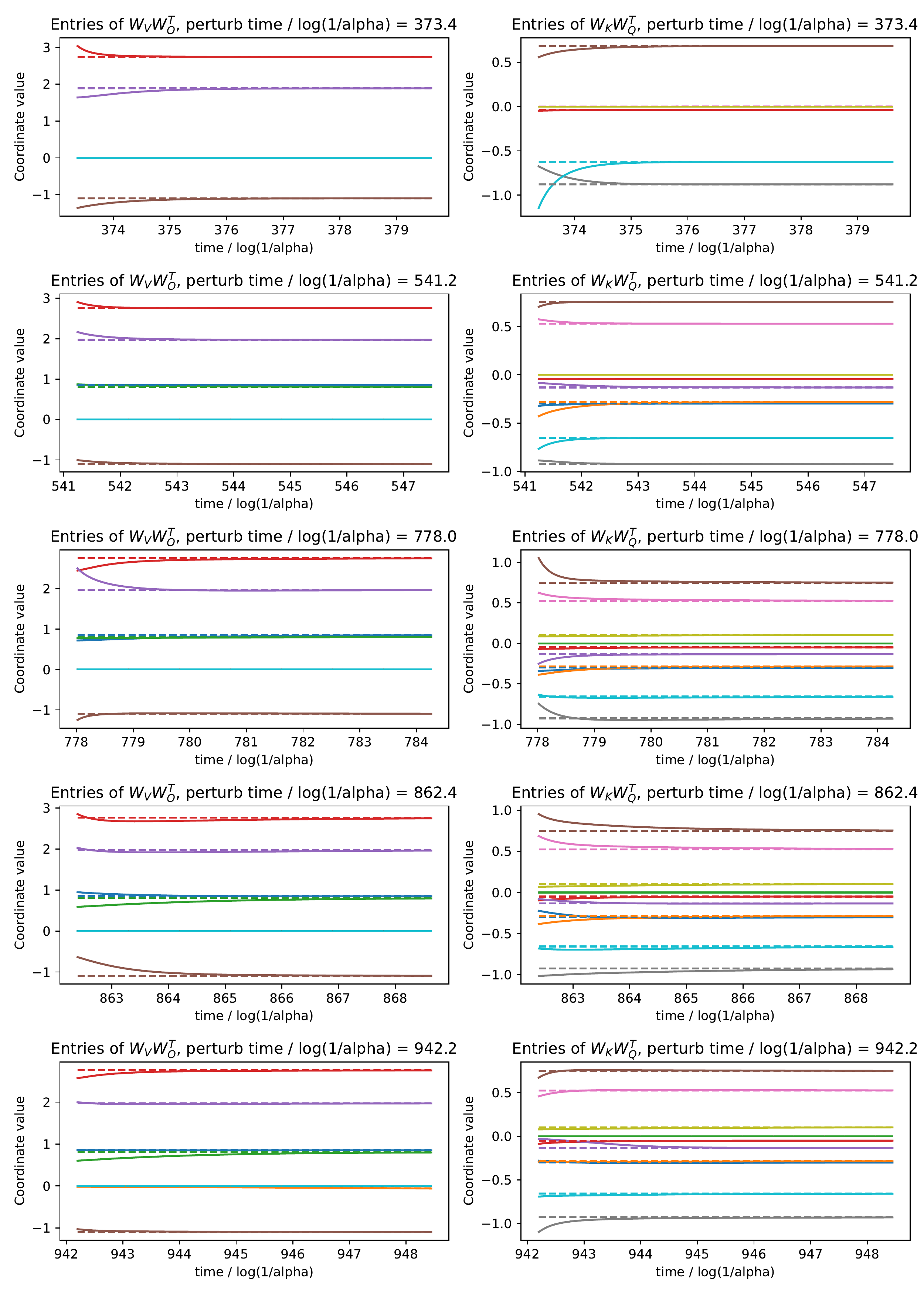} &
\includegraphics[scale=0.5,trim={0 21.45cm 12.7cm 7.4cm},clip]{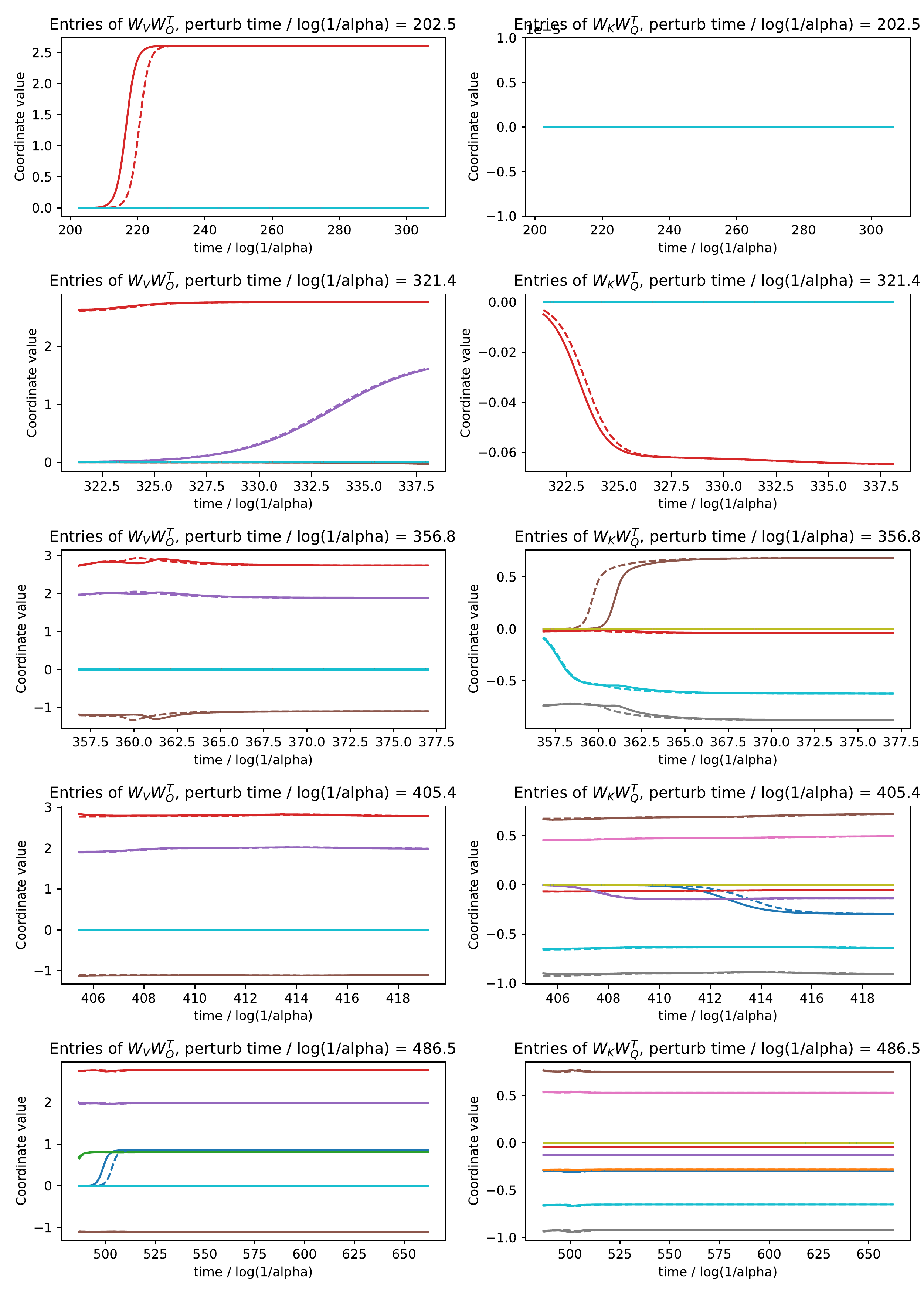}
\end{tabular}
\caption{Validation of assumptions on the toy model of learning a single attention head. (a) Assumption~\ref{ass:strict-local-minimum-untied}: weights perturbed at a random time during training (solid lines) tend back to the near-stationary point (dashed lines). (b) Assumption~\ref{ass:robust-dynamics-untied}: weights perturbed at the beginning of a stage (solid lines) have same nonlinear evolution as without perturbation (dashed lines). Details of these experiments and further validations are provided in Appendix~\ref{app:toy-assumptions}.}\label{fig:validation-teaser}
\end{figure}

\section{Experimental results}\label{sec:experiments}

We run experiments that go beyond the toy diagonal attention head model (see Figures~\ref{fig:loss-toy} and \ref{fig:validation-teaser}) to test the extent to which low-rank incremental learning occurs in popular models used in practice. We conduct experiments with vision transformers (ViT) \cite{vit} trained on the CIFAR-10/100 and ImageNet datasets, and with the GPT-2 language transformer \cite{brown2020language} trained on the Wikitext-103 dataset. Full experiments are deferred to Appendix~\ref{appendix:d1}.

\paragraph{Gradual rank increase in vision and language models} We train practical transformer architectures on vision and language tasks using Adam and the cross-entropy loss. We train all layers (including the feedforward layers). To capture the low-rank bias with a non-vanishing initialization scale, we study the spectrum of the difference $\Delta\bW_K\bW_Q^\top$ and $\Delta\bW_V\bW_O^\top$ between the weights post-training and their initial values. Specifically, in Figure~\ref{fig:grad-rank-increase-teaser}, we plot the stable rank of the differences $\Delta\bW_K\bW_Q^\top$ and $\Delta\bW_V\bW_O^\top$. The weight perturbation learned during the training process gradually increases in stable rank during training, and is ultimately low-rank when compared to the initial spectrum. Finally, for CIFAR-10, we plot the spectrum of $\Delta\bW_K\bW_Q^\top$ against that of its initialized state in \cref{fig:rankdiff} for different self-attention heads, illustrating that the weight perturbation learned during the training process is extremely low-rank when compared to the initial spectrum. In Appendix~\ref{appendix:d1}, we also study optimization with SGD, which shows similar gradual rank increase behavior.

\begin{figure}[h]
\centering
\begin{tabular}{@{}c@{}c@{}c@{}c@{}}
\begin{tabular}{@{}c@{}}\includegraphics[width=0.26\linewidth]{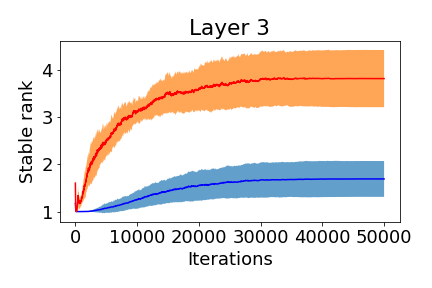}\end{tabular} & \begin{tabular}{@{}c@{}}\includegraphics[width=0.26\linewidth]{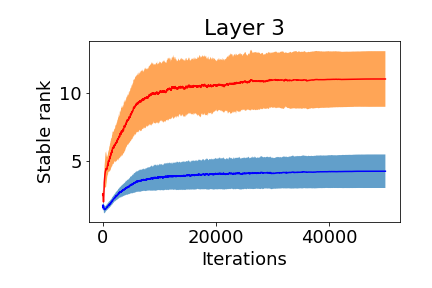}\end{tabular}  &
      \begin{tabular}{@{}c@{}}\includegraphics[width=0.26\linewidth]{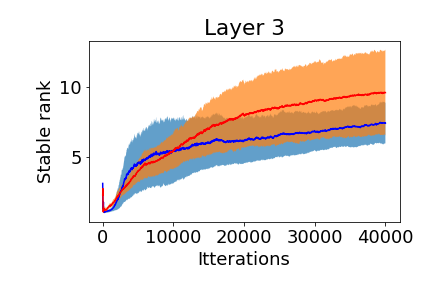}\end{tabular} & \begin{tabular}{@{}c@{}} \includegraphics[scale=0.2]{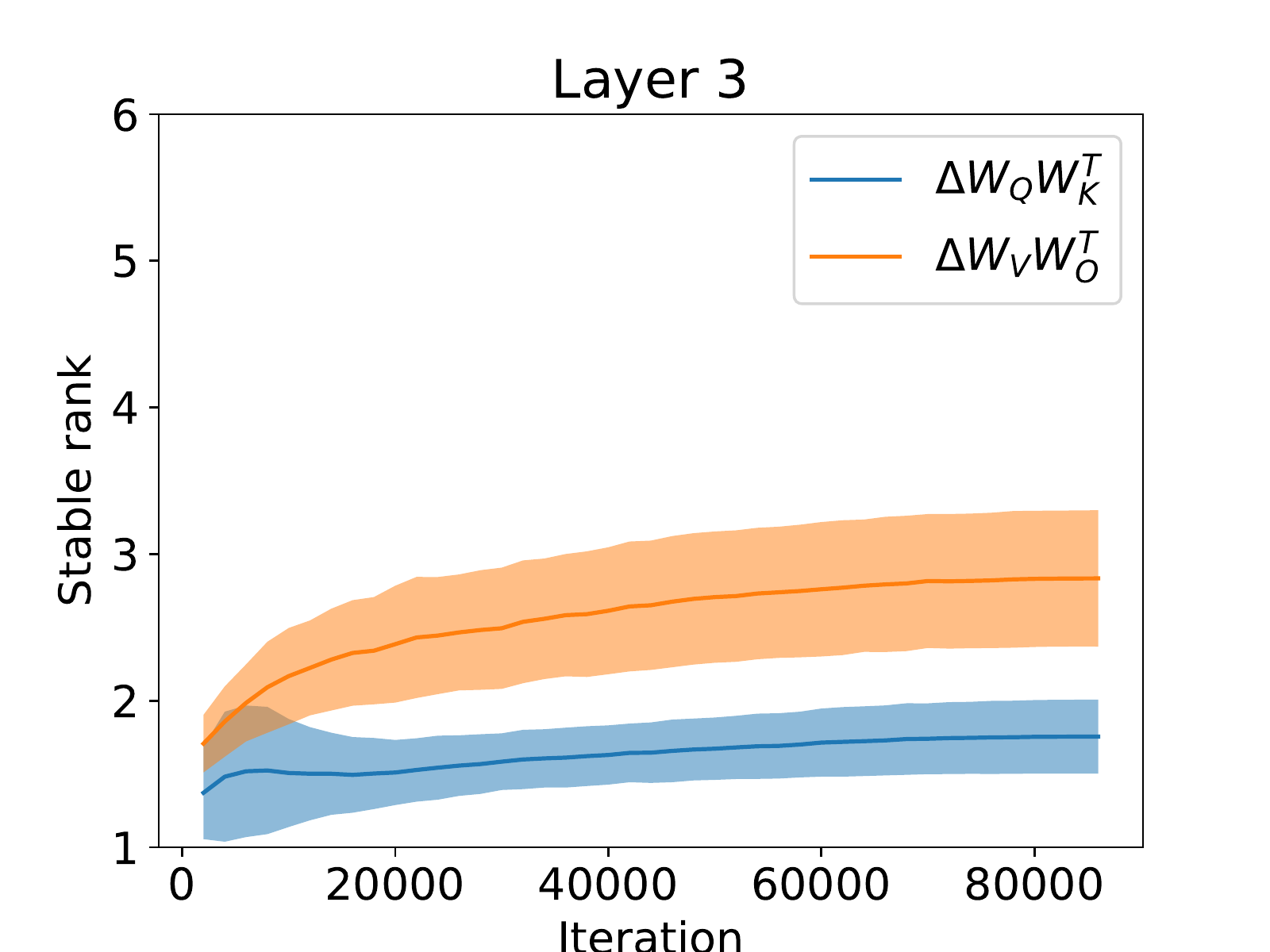} \end{tabular}  \\
(a) ViT, CIFAR-10 & (b) ViT, CIFAR-100 & (c) ViT, ImageNet & (d) GPT-2, Wikitext-103
\end{tabular}
\caption{Stable rank of $\Delta\bW_K \bW_Q^\top$ (blue) and $\Delta\bW_V \bW_O^\top$ (orange) on an arbitrary chosen layer throughout training for four different pairs of networks and tasks. The stable rank of a matrix $\bW$ is defined as $\|\bW\|^2_F / \|\bW\|_2^2$, and gives a smooth approximation of the rank. Mean and standard deviation (shaded area) are computed across all heads in each attention layer. Full details and results are in Appendix~\ref{appendix:d1}.}\label{fig:grad-rank-increase-teaser}
\end{figure}

\begin{figure*}[h]
\centering
  \begin{tabular}{@{}c@{}c@{}c@{}c@{}c@{}c@{}}
      \begin{tabular}{@{}c@{}}(a)\end{tabular} &
      \begin{tabular}{@{}c@{}}\includegraphics[width=0.28\linewidth]{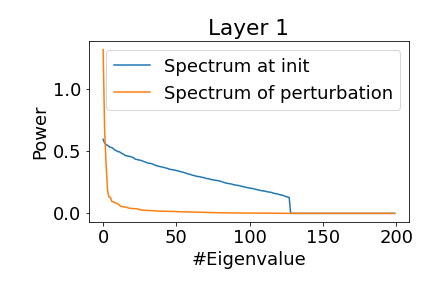}\end{tabular} & 
      \begin{tabular}{@{}c@{}}(b)\end{tabular} &
      \begin{tabular}{@{}c@{}}\includegraphics[width=0.28\linewidth]{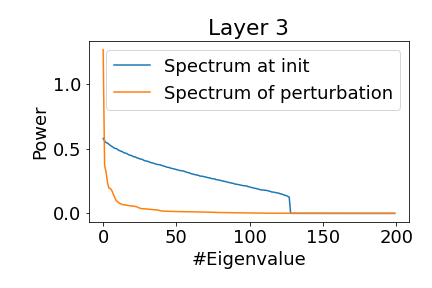}\end{tabular} &
      \begin{tabular}{@{}c@{}}(c)\end{tabular} &
      \begin{tabular}{@{}c@{}}\includegraphics[width=0.28\linewidth]{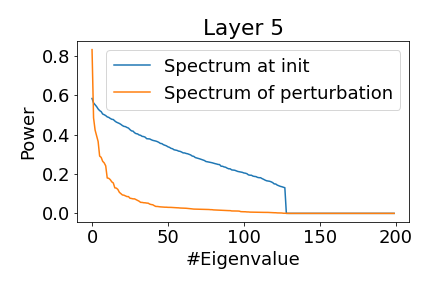}\end{tabular} \\
       \begin{tabular}{@{}c@{}}(d)\end{tabular} &
       \begin{tabular}{@{}c@{}}\includegraphics[width=0.28\linewidth]{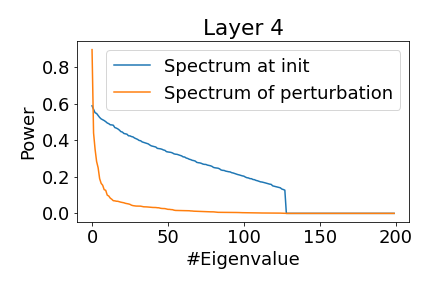}\end{tabular} & 
       \begin{tabular}{@{}c@{}}(e)\end{tabular} &
      \begin{tabular}{@{}c@{}}\includegraphics[width=0.28\linewidth]{f/c10/diff5.png}\end{tabular} &
      \begin{tabular}{@{}c@{}}(f)\end{tabular} &
      \begin{tabular}{@{}c@{}}\includegraphics[width=0.28\linewidth]{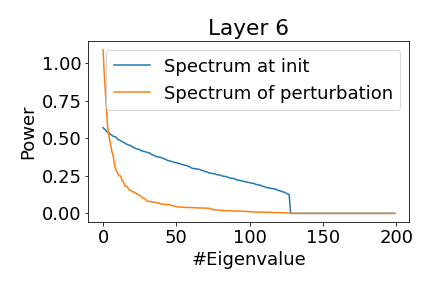}\end{tabular} \\
  \end{tabular}
 \caption{Spectrum of the weight perturbation $\Delta\bW_K \bW_Q^\top$ vs. initialization in a vision transformer trained on CIFAR-10, using Adam and default initialization scale, in random self-attention heads in different layers. The learned perturbation exhibits extreme low-rank bias post-training even in default initialization scales. Analogous plots for CIFAR-100 and ImageNet are in Appendix~\ref{appendix:d1}.}\label{fig:rankdiff}
\end{figure*}

\paragraph{Effect of initialization scale} We probe the effect of initialization scale on gradual rank increase dynamics for a ViT trained on CIFAR-10. We use a ViT of depth 6, with 8 self-attention heads per layer (with layer normalization). We use an embedding and MLP dimension of $d_\text{emb} = 512$, and a head dimension of $d_h = 128$ (i.e $\bW_K,\bW_Q,\bW_V,\bW_O \in \R^{d_\text{emb} \times d_h}$).  We train the transformer using Adam with the cross-entropy loss. We train all layers (including the feedforward layers) while varying the initialization scale of all layers by multiplying their initial values by a scale factor (we fix the scale of the initial token mapper). \cref{fig:c10_spec} shows the evolution of the principal components of $\Delta\bW_K\bW_Q^\top$ and $\Delta\bW_V\bW_O^\top$ for a randomly-chosen self-attention head and layer throughout training, exhibiting incremental learning dynamics and a low-rank bias. Note that incremental learning and low-rank bias are increasingly evident with smaller initialization scales, as further demonstrated in \cref{fig:rank}.

\begin{figure*}[h]
\centering
  \begin{tabular}{@{}c@{}c@{}c@{}c@{}c@{}c@{}}

      \begin{tabular}{@{}c@{}}(a)\end{tabular} &
      \begin{tabular}{@{}c@{}}\includegraphics[width=0.28\linewidth]{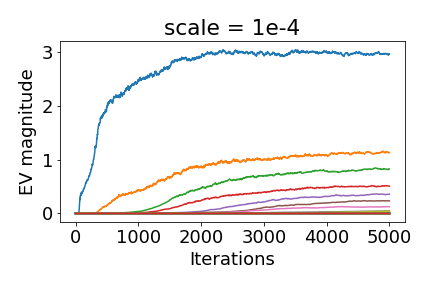}\end{tabular} & 
      
      \begin{tabular}{@{}c@{}}(b)\end{tabular} &
      \begin{tabular}{@{}c@{}}\includegraphics[width=0.28\linewidth]{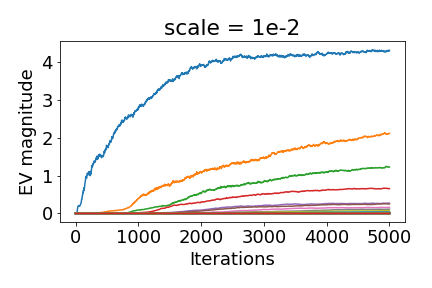}\end{tabular} &
      
      \begin{tabular}{@{}c@{}}(c)\end{tabular} &
      \begin{tabular}{@{}c@{}}\includegraphics[width=0.28\linewidth]{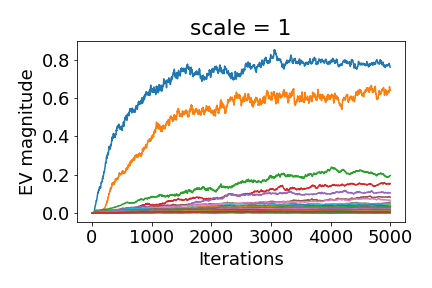}\end{tabular} \\
      
      \begin{tabular}{@{}c@{}}(d)\end{tabular} &
      \begin{tabular}{@{}c@{}}\includegraphics[width=0.28\linewidth]{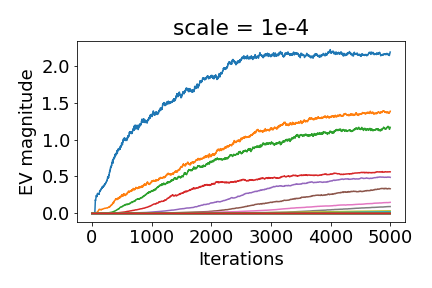}\end{tabular} & 
      
      \begin{tabular}{@{}c@{}}(e)\end{tabular} &
      \begin{tabular}{@{}c@{}}\includegraphics[width=0.28\linewidth]{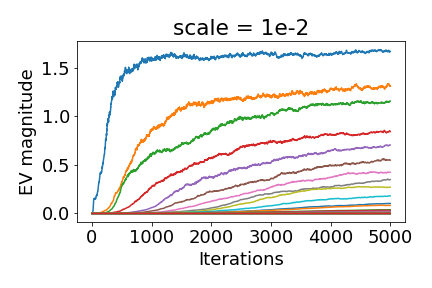}\end{tabular} &
      
      \begin{tabular}{@{}c@{}}(f)\end{tabular} &
      \begin{tabular}{@{}c@{}}\includegraphics[width=0.28\linewidth]{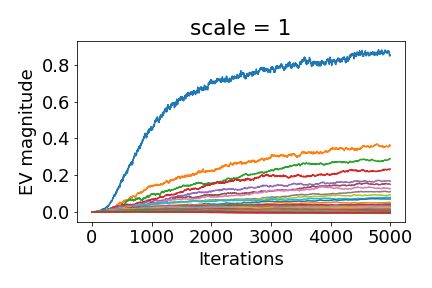}\end{tabular} \\
  \end{tabular}
 \caption{Training a vision transformer on CIFAR-10 using Adam, while varying the initialization scale (unit scale indicates default initialization). Plotted are the evolution of the eigenvalues of $\Delta\bW_K \bW_Q^\top$ (a) - (c) and $\Delta\bW_V \bW_O^\top$ (d) - (f) in a random self-attention head in the second layer throughout training.    
 Incremental learning dynamics and a low-rank bias are evident for all scales, albeit more pronounced at smaller initialization scales.} \label{fig:c10_spec}
\end{figure*}

\begin{figure*}[h]
\centering
  \begin{tabular}{@{}c@{}c@{}c@{}c@{}c@{}c@{}}
  
      \begin{tabular}{@{}c@{}}(a)\end{tabular} &
      \begin{tabular}{@{}c@{}}\includegraphics[width=0.28\linewidth]{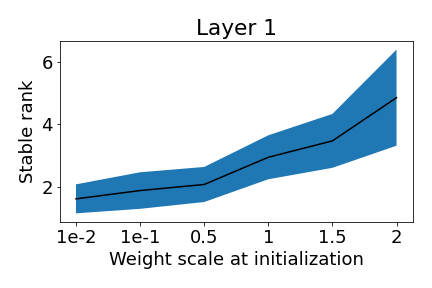}\end{tabular} & 
      
      \begin{tabular}{@{}c@{}}(b)\end{tabular} &
      \begin{tabular}{@{}c@{}}\includegraphics[width=0.28\linewidth]{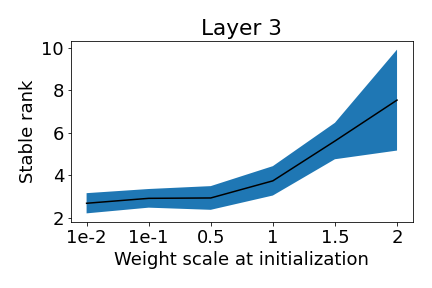}\end{tabular} &
      
      \begin{tabular}{@{}c@{}}(c)\end{tabular} &
      \begin{tabular}{@{}c@{}}\includegraphics[width=0.28\linewidth]{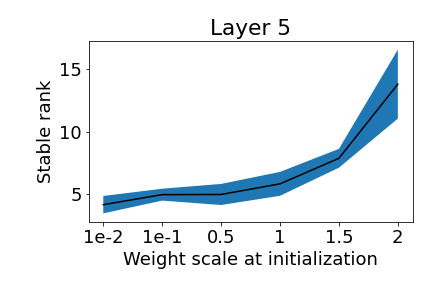}\end{tabular}
  \end{tabular}
 \caption{Stable rank of $\Delta\bW_K \bW_Q^\top$ per initialization scale (Unit scale refers to the default initialization) in different self-attention heads post-training, at layers 1, 3, 5.
 At each layer, the stable rank mean and standard deviation are computed across 8 heads per layer, for each initialization scale. All models were trained on CIFAR-10 using the Adam optimizer. Smaller initialization scales lead to lower-rank attention heads.}\label{fig:rank}
\end{figure*}

\section{Discussion}\label{sec:conclusion}

We have identified incremental learning dynamics in transformers, proved them rigorously in a simplified setting, and shown them experimentally in networks trained with practical hyperparameters.

\paragraph{Limitations}
There are clear limitations to our theory: the diagonal weights and small initialization assumptions. More subtly, the theory does not apply to losses with exponential-like tails because the weights may not converge to a finite value and so Assumption~\ref{ass:strict-local-minimum-untied} is not met (this could possibly be addressed by adding regularization). Also, the architecture must be smooth, which precludes ReLUs -- but allows for smoothed ReLUs such as the GeLUs used in ViT \citep{vit}. Finally, the theory is for training with gradient flow, while other optimizers such as Adam are used in practice instead \citep{kingma2014adam}. Nevertheless, our experiments on ViTs indicate that the incremental learning dynamics occur even when training with Adam.

\paragraph{Future directions} An interesting avenue of future research is to develop a theoretical understanding of the implicit bias in function space of transformers whose weights are a low-rank perturbation of randomly initialized weights. Another promising direction is to examine the connection between our results on incremental dynamics and the LoRA method \citep{hu2021lora}, with the goal of explaining and improving on this algorithm; see also the discussion in Section~\ref{ssec:related-work}. Along this vein, a concurrent work \cite{zhao2023inrank} independently observes gradual rank increase dynamics during transformer training and this inspires a low-rank training algorithm that obtains runtime and memory improvements over regular training. The results of \cite{zhao2023inrank} are complementary to ours, since they study the feedforward layers of the transformer, and their theory applies to \textit{linear} networks in the standard initialization scale; in contrast, we study the attention layers, and our theory applies to \textit{nonlinear} networks with small initialization scale.

\section*{Acknowledgments}
We would like to thank Vimal Thilak for his help in setting up the infrastructure for conducting experiments, and the anonymous reviewers for their helpful feedback.

\bibliography{bibliography}

\newpage
\appendix

\tableofcontents

\newpage

\section{Experimental validation of the assumptions in Theorem~\ref{thm:diag-2-greedy-untied}}\label{app:toy-assumptions}

In Figures~\ref{fig:loss-toy}, \ref{fig:qk-toy-spread}, and \ref{fig:vo-toy-spread}, we plot the evolution of the losses, of the entries of $\bW_K\bW_Q^{\top} 
 = \diag(\bw_K)\diag(\bw_Q)$, and of the entries of $\bW_V\bW_O^{\top} = \diag(\bw_V)\diag(\bw_O)$ in the toy task of training an attention head  \eqref{eq:attention} with diagonal weights. The model is trained with SGD on the mean-squared error loss on 1000 random samples $(\bX,\by)$. Each random sample has $\bX \in \R^{10 \times 50}$, which a sequence of 10 tokens, each of dimension 50, which is distributed as isotropic Gaussian. The label $\by$ is given by a randomly-generated teacher model that is also an attention head \eqref{eq:attention} with diagonal weights. In Figures~\ref{fig:loss-toy}, \ref{fig:qk-toy-spread}, and \ref{fig:vo-toy-spread}, for $\alpha \in \{0.1,0.01, 0.0001, 10^{-8}, 10^{-16}, 10^{-32}\}$ we plot the evolution of the loss and of the weights when initialized at $\btheta(0) = \alpha \btheta_0$, for some random Gaussian $\btheta_0$. 
Qualitatively, as $\alpha \to 0$ we observe that the loss curve and the trajectories of the weights appear to converge to a limiting stagewise 
dynamics, where there are plateaus followed by movement on short time-scales, as predicted by the theory.

 \paragraph{Validation of Assumption~\ref{ass:nondegenerate-untied} (non-degeneracy of dynamics)} As $\alpha \to 0$, notice that the stages appear to separate and happen at distinct times. Furthermore, the extra technical condition on coordinates $i \in \supp(\btheta^k) \sm \supp(\btheta^{k-1})$ in Assumption~\ref{ass:nondegenerate-untied} is satisfied since no coordinates ever leave the support of $\btheta^k$.
 
 \paragraph{Validation of Assumption~\ref{ass:strict-local-minimum-untied} (stationary points are strict local minima)}
 
In Figure~\ref{fig:validate-strict-local-minimum} we consider the $\alpha = 10^{-32}$ trajectory, since this is closest to the dynamics in the $\alpha \to 0$ limit. We randomly select several epochs.
Since the transitions between stages are a vanishing fraction of the total training time, each of these randomly-selected epochs is likely during a plateau, as we see in the figure. For each epoch perform the following experiment. For each  nonnegligible coordinate of the weights (those where the weight is of magnitude greater than the threshold $\tau = 10^{-5}$), we perturb the weights by adding noise of standard deviation $0.05$. We then run the training dynamics starting at this perturbed initialization for 1000 epochs. We observe that the training dynamics quickly converge to the original unperturbed initialization, indicating that the weights were close to a strict local minimum of the loss.

\paragraph{Validation of Assumption~\ref{ass:robust-dynamics-untied} (noise-robustness of dynamics)}

In Figure~\ref{fig:validate-robust-dynamics} we perform the same experiment as in Figure~\ref{fig:validate-strict-local-minimum}, except that the epochs we select to perturb the weights are those where there is a newly-nonnegligible coordinate (less than $10^{-5}$ in magnitude in the previous epoch, and more than $10^{-5}$ in magnitude in this epoch). We find that the nonlinear dynamics are robust and tend to the limiting endpoint even under a random Gaussian perturbation of standard deviation $10^{-2}$ on each of the nonnegligible coordinates, supporting Assumption~\ref{ass:robust-dynamics-untied}.

\begin{figure}
\centering
\includegraphics[scale=0.5]{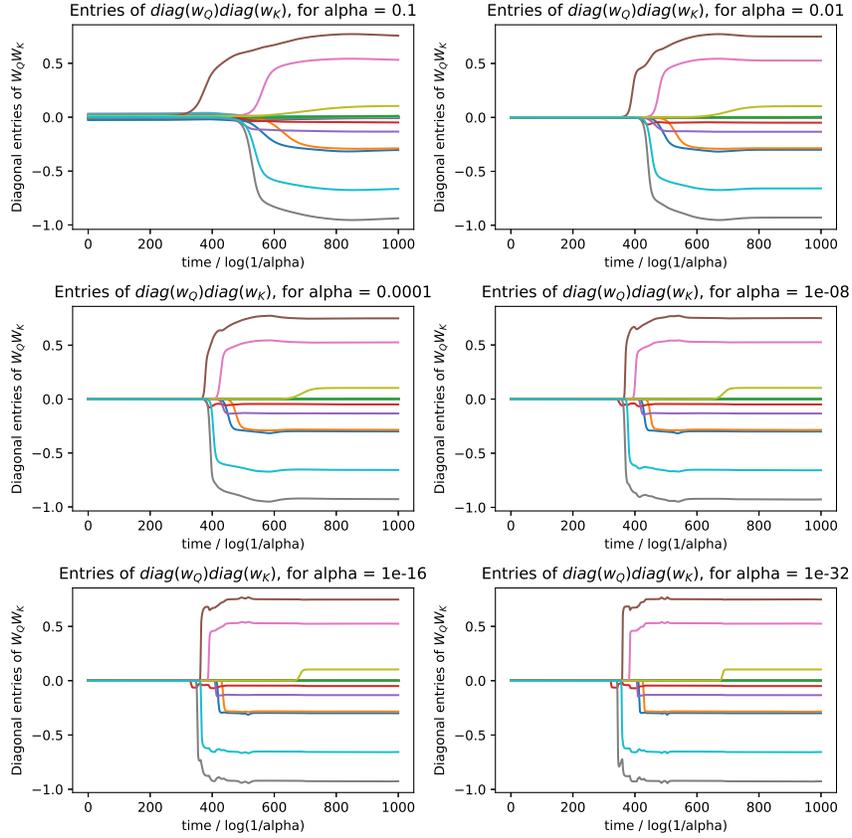}
\caption{Evolution of $\diag(\bw_Q)\diag(\bw_K)$ entries over rescaled time initializing at various scalings $\alpha$. Notice that as $\alpha \to 0$, the training trajectories tend to a limiting trajectory. Each line corresponds to a diagonal entry of $\diag(\bw_Q)\diag(\bw_K)$.}\label{fig:qk-toy-spread}
\end{figure}

\begin{figure}
\centering
\includegraphics[scale=0.5]{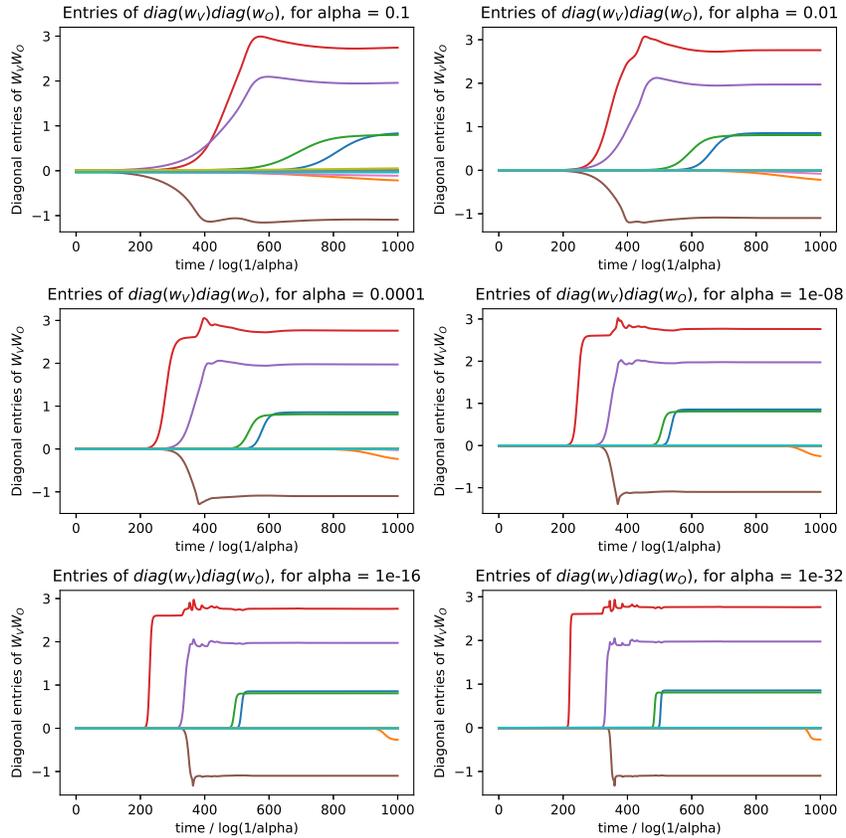}
\caption{Evolution of $\diag(\bw_V)\diag(\bw_O)$ entries in the toy task of learning an attention head with diagonal weights. Each line corresponds to the evolution of an entry of $\diag(\bw_V)\diag(\bw_O)$ over rescaled time. Each plot corresponds to a different initialization magnitude $\alpha$. Notice that as $\alpha \to 0$, the training trajectories tend to a limiting trajectory.}\label{fig:vo-toy-spread}
\end{figure}

\begin{figure}
\centering
\includegraphics[scale=0.5]{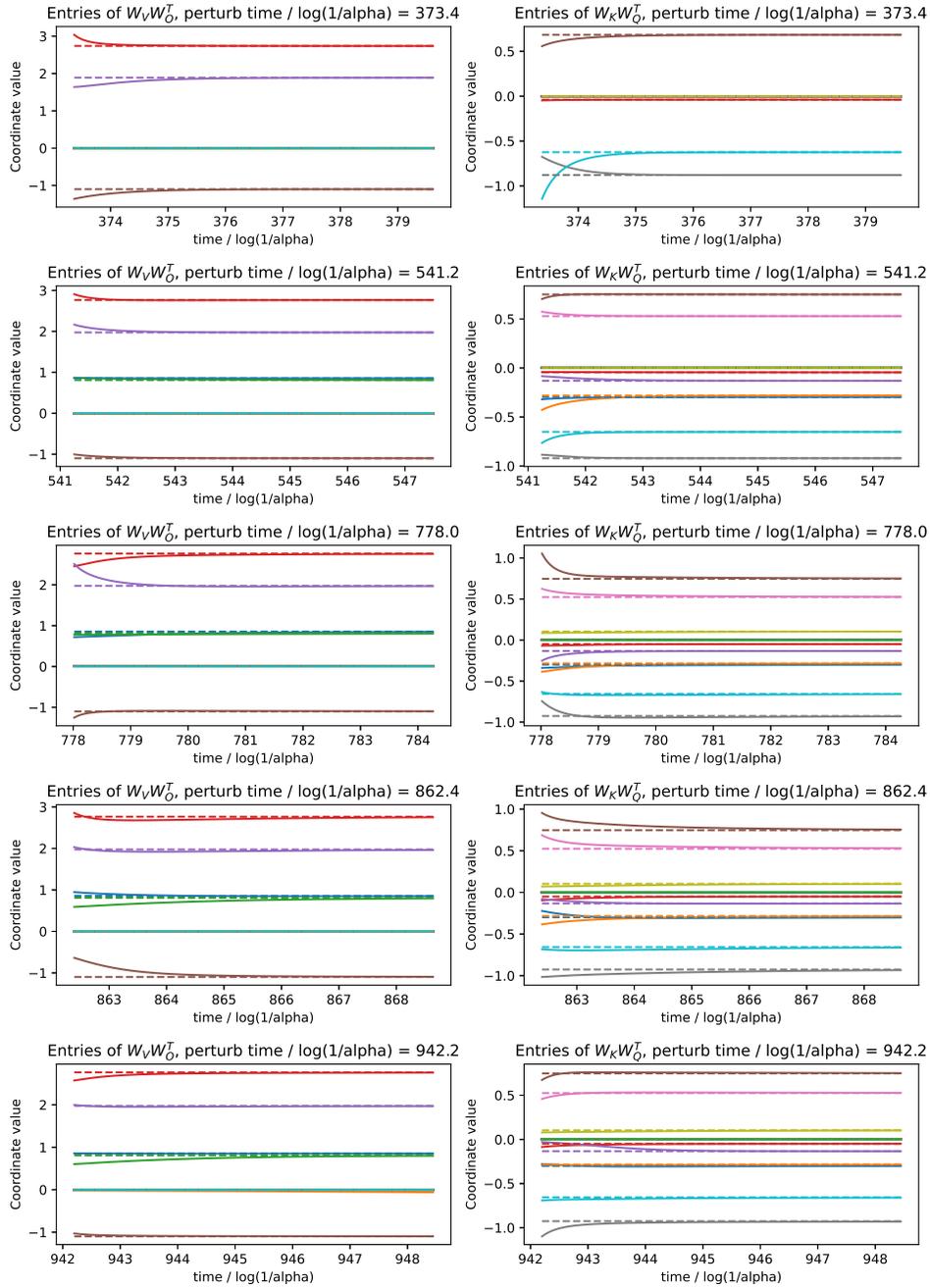}
\caption{Evolution of weights of toy attention model under perturbation, validating Assumption~\ref{ass:strict-local-minimum-untied}. At 5 different random times during training, we perturb the nonnegligible weight coordinates and continue to train with SGD. The evolution of each of the weights under the initial perturbation (solid line) is compared to the original evolution without perturbation (dashed line). Observe that the training dynamics quickly brings each weight back to the unperturbed weight trajectory, indicating that the weights are originally close to a strict local minimum.}\label{fig:validate-strict-local-minimum}
\end{figure}

\begin{figure}
\centering
\includegraphics[scale=0.5]{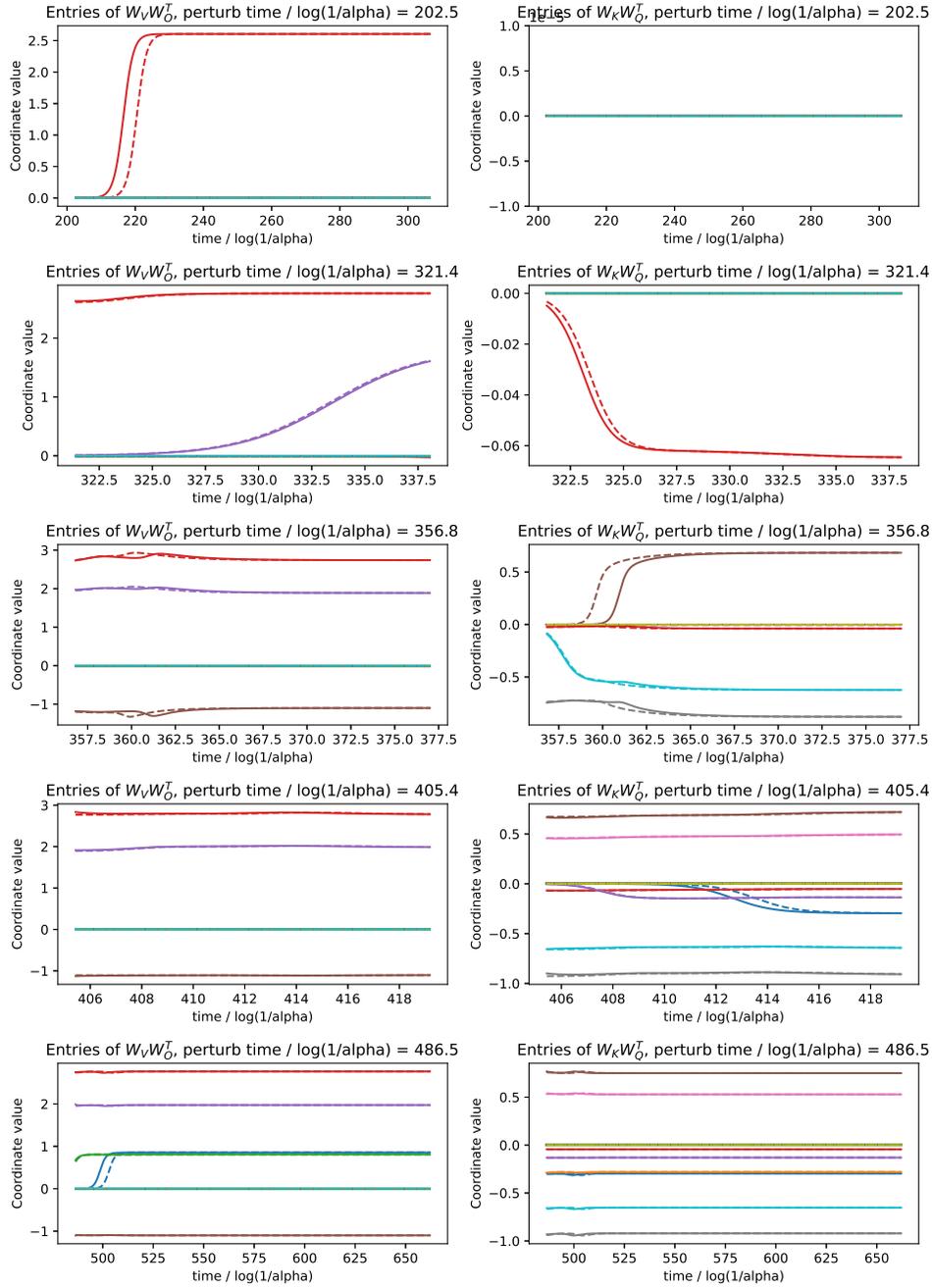}
\caption{Validating Assumption~\ref{ass:robust-dynamics-untied} with the same experiment as in Figure~\ref{fig:validate-strict-local-minimum}, except that the epochs for the perturbation chosen are those where there is a newly nonnegligible coordinate. Perturbed dynamics (solid lines) are again robust to perturbation and track the original dynamics (dashed lines).}\label{fig:validate-robust-dynamics}
\end{figure}

\clearpage
\section{Further experiments on vision and language transformers} \label{appendix:d1}
The practice of training transformer models often deviates substantially from the assumptions made in our theoretical analysis, and it is a priori unclear to what extent gradual rank increase behaviour and a low rank bias are manifested in setups more common in practical applications. To gauge the relevancy of our findings we conduct experiments on popular vision and language benchmarks, using algorithms and hyperparameters common in the literature.
We use the stable rank of a matrix $\bW$ given by $\frac{\|\bW\|_F^2}{\|\bW\|_2^2}$ as a smooth approximation of rank. We track the value of the stable rank for the different attention matrices throughout training. Although we do not expect our theoretical results to to hold precisely in practice, we find evidence of gradual increase in stable rank, leading to a low rank bias in \cref{fig:cifar10-sgd-experiment,fig:cifar100-sgd-experiment,fig:c10_2,fig:c100_2,fig:img_2}. 
In these experiments we use off-the-shelf vision transformers (ViT) \cite{vit} trained on popular vision benchmarks, as well as off-the-shelf GPT-2 trained on a popular language benchmark. We use no weight decay or dropout in our experiments. All models were initialized using the default initialization scale.

\subsection{SGD-trained transformers}

\paragraph{CIFAR-10/100} We trained a 6-layer ViT with 8 heads per layer, embedding dimension 512, head dimension 128, and MLP dimension 512 and patch-size 4 for 500 epochs on CIFAR10/CIFAR100 with SGD and learning rate 3e-1 and warmup. See Figures~\ref{fig:cifar10-sgd-experiment} and \ref{fig:cifar100-sgd-experiment}. Each run took 2 hours on one A100 GPU. 
\begin{figure}[h!]
\begin{tabular}{ccc}
\includegraphics[scale=0.25]{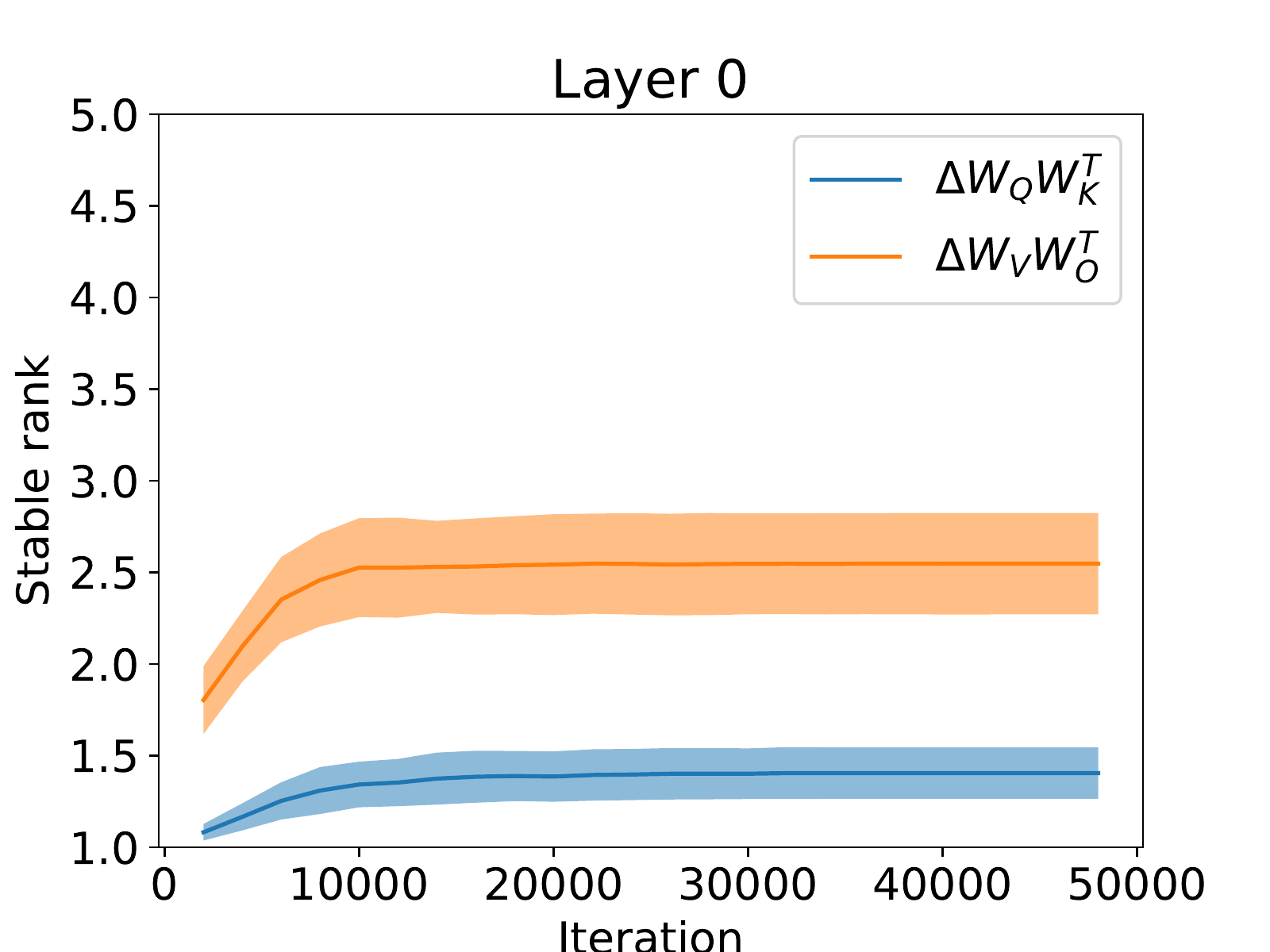} &
\includegraphics[scale=0.25]{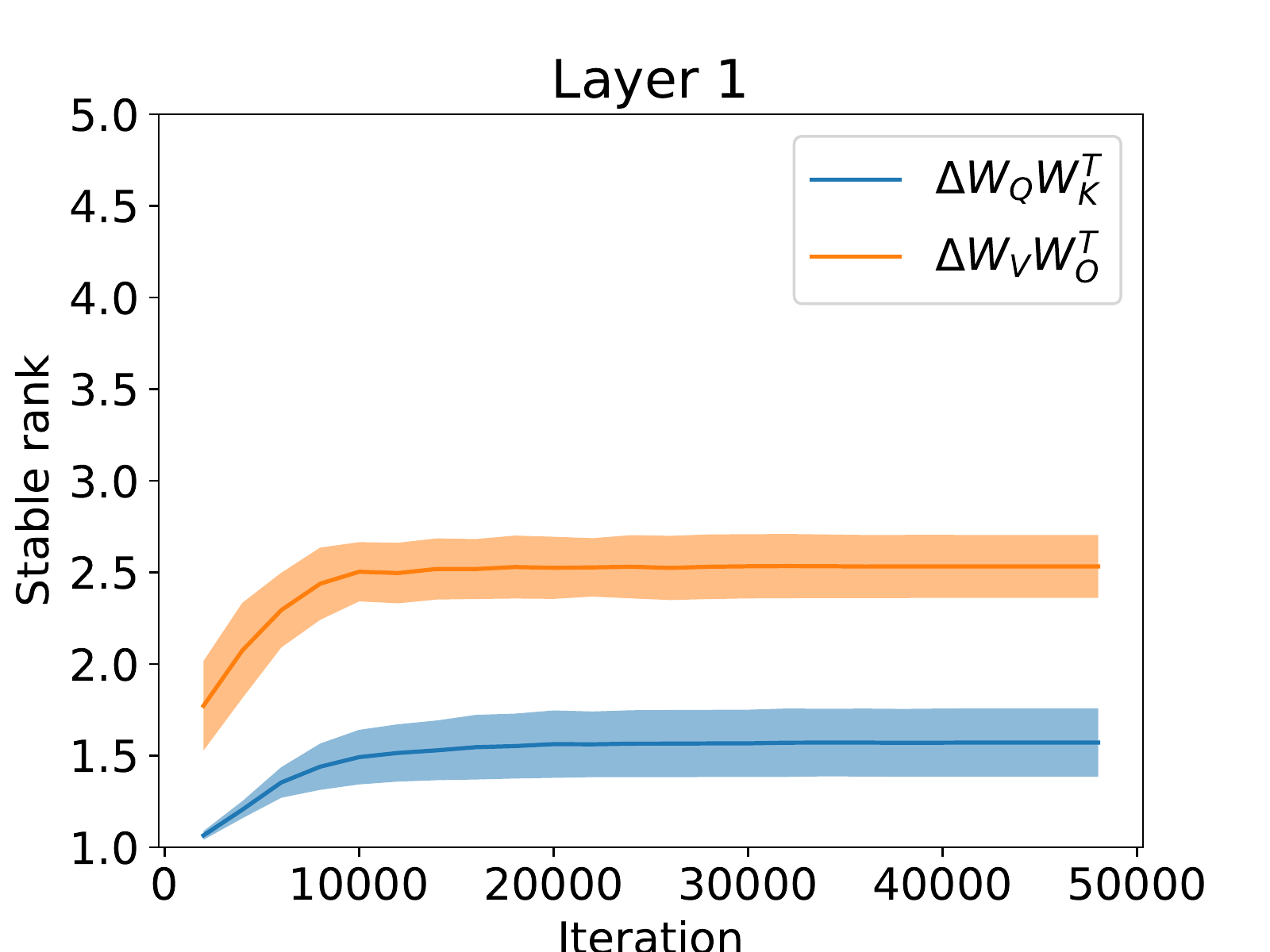} &
\includegraphics[scale=0.25]{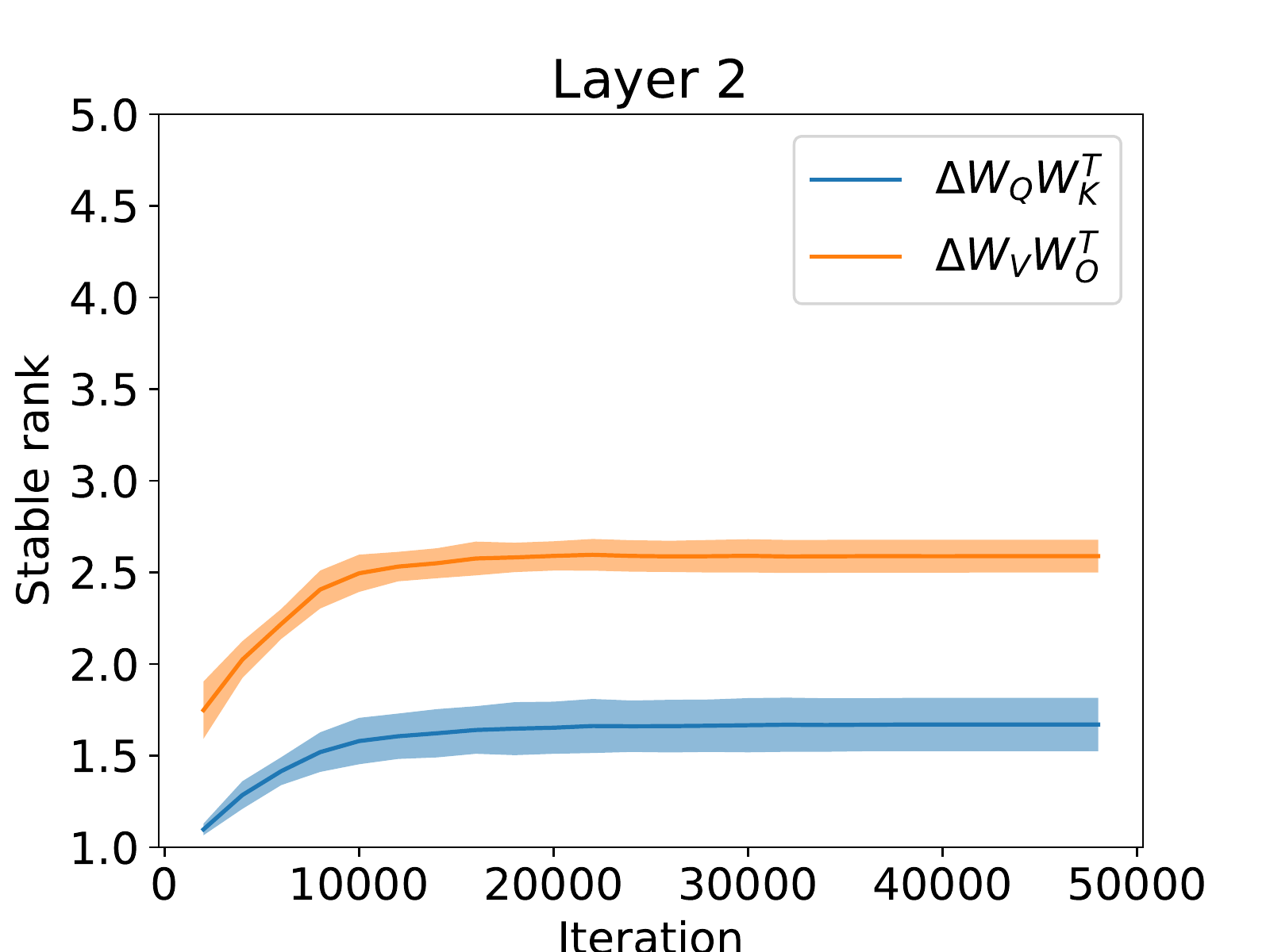} \\
\includegraphics[scale=0.25]{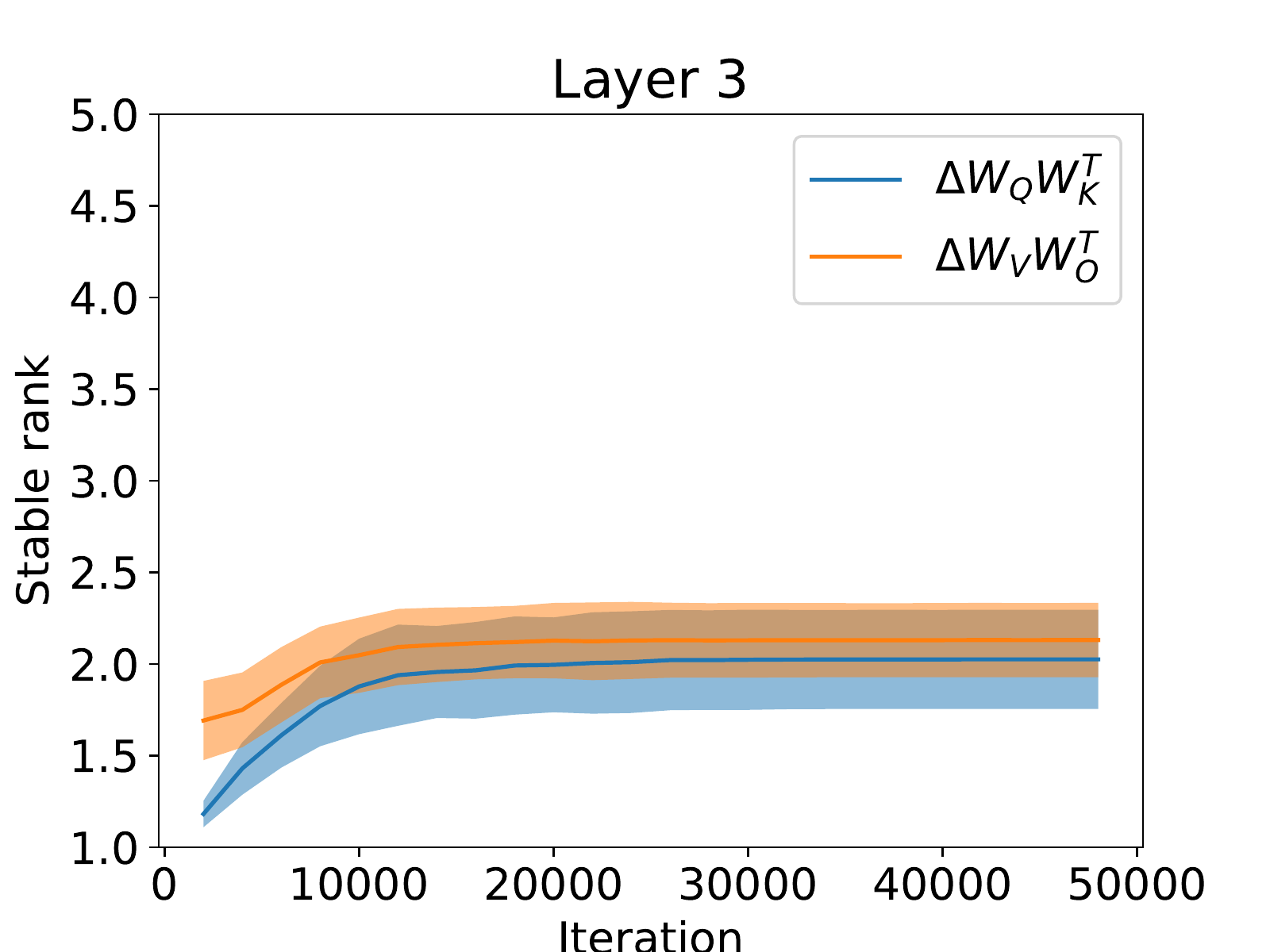} &
\includegraphics[scale=0.25]{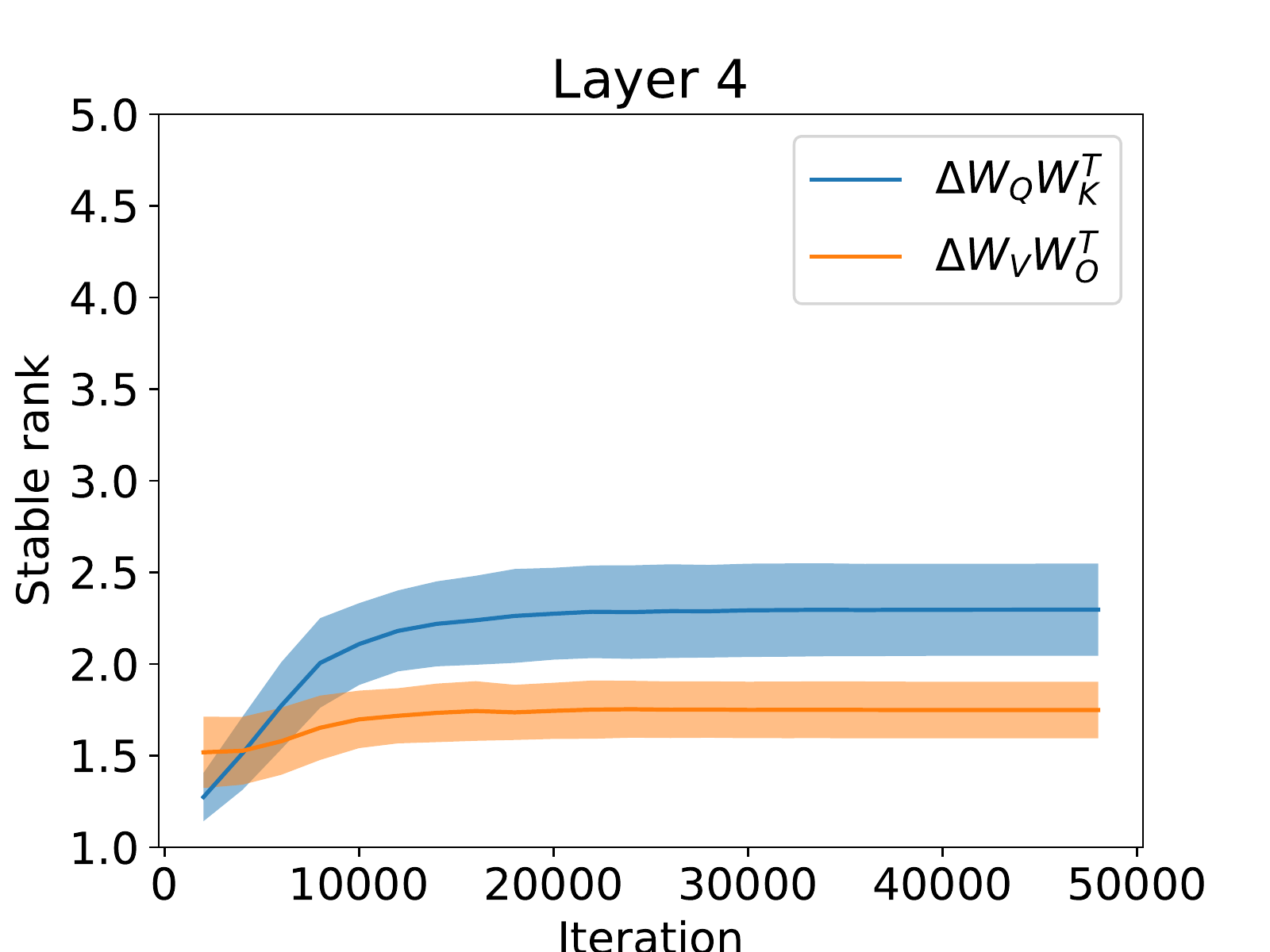} &
\includegraphics[scale=0.25]{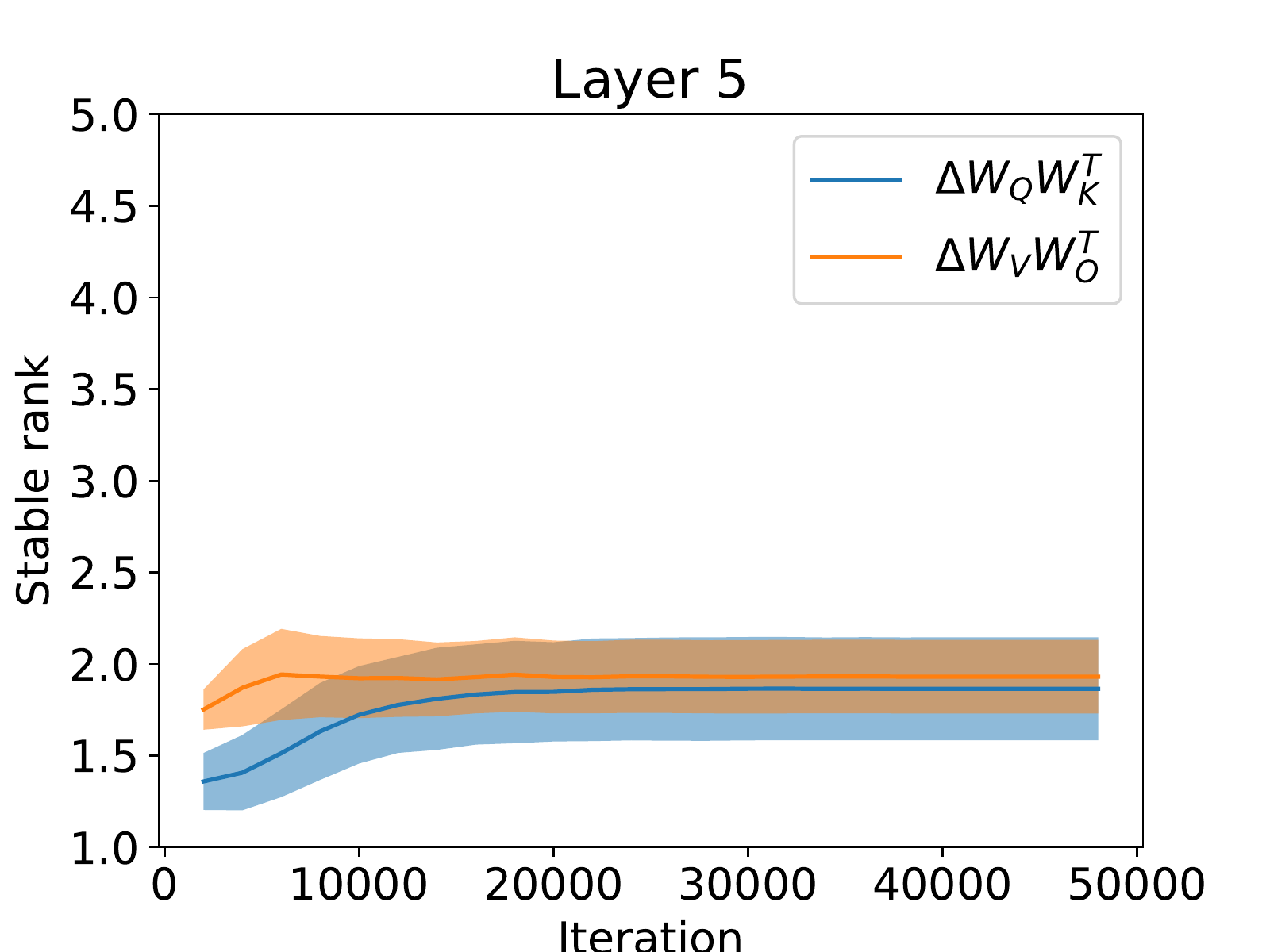}
\end{tabular}
\caption{CIFAR-10, ViT trained with SGD: Stable rank of $\Delta\bW_K \bW_Q^\top$ (blue) and $\Delta\bW_V \bW_O^\top$ (orange) throughout training. Mean and standard deviation (shaded area) are computed across 8 heads per attention layer.}\label{fig:cifar10-sgd-experiment}
\end{figure}

\begin{figure}[h!]
\begin{tabular}{ccc}
\includegraphics[scale=0.25]{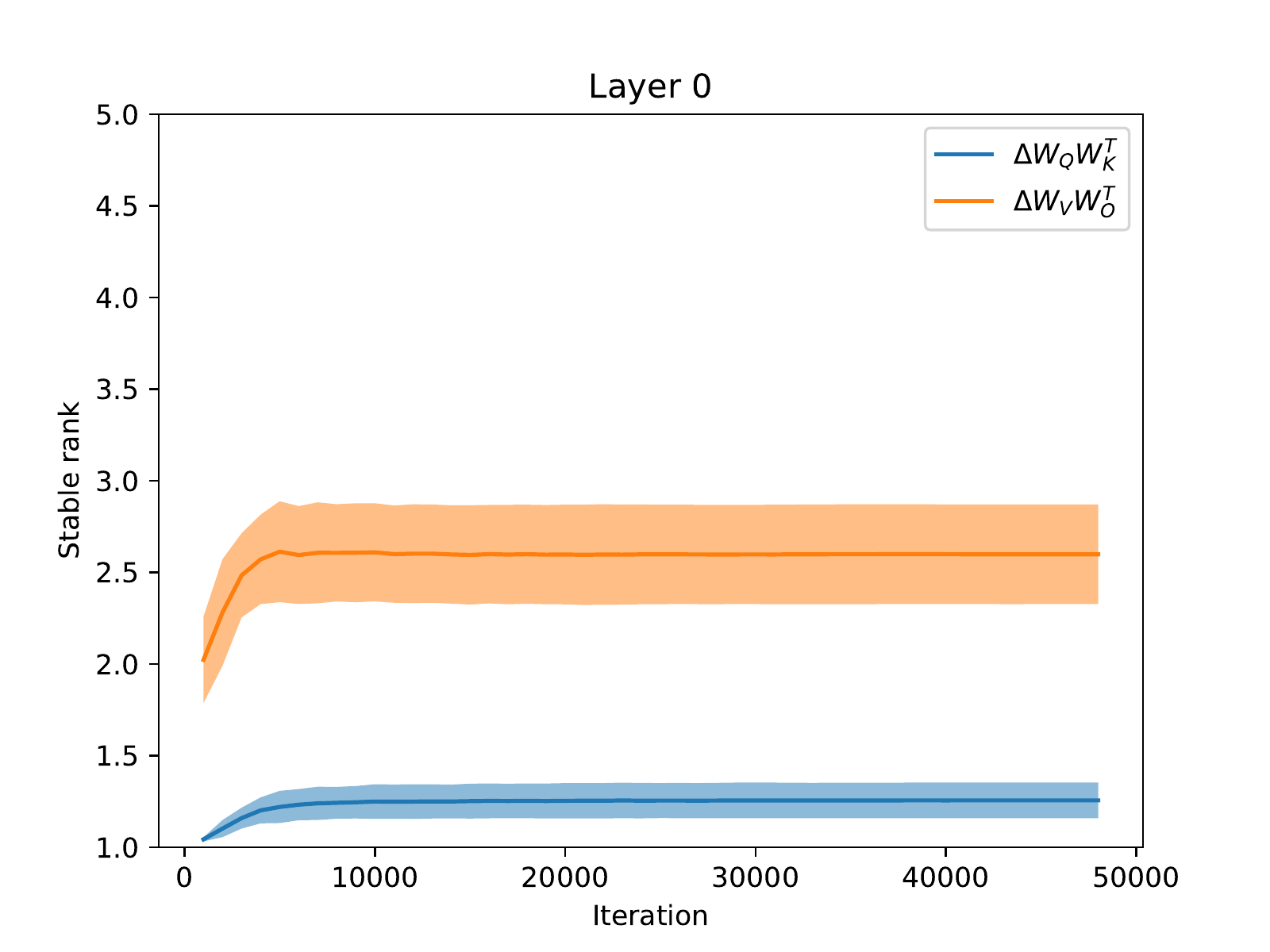} &
\includegraphics[scale=0.25]{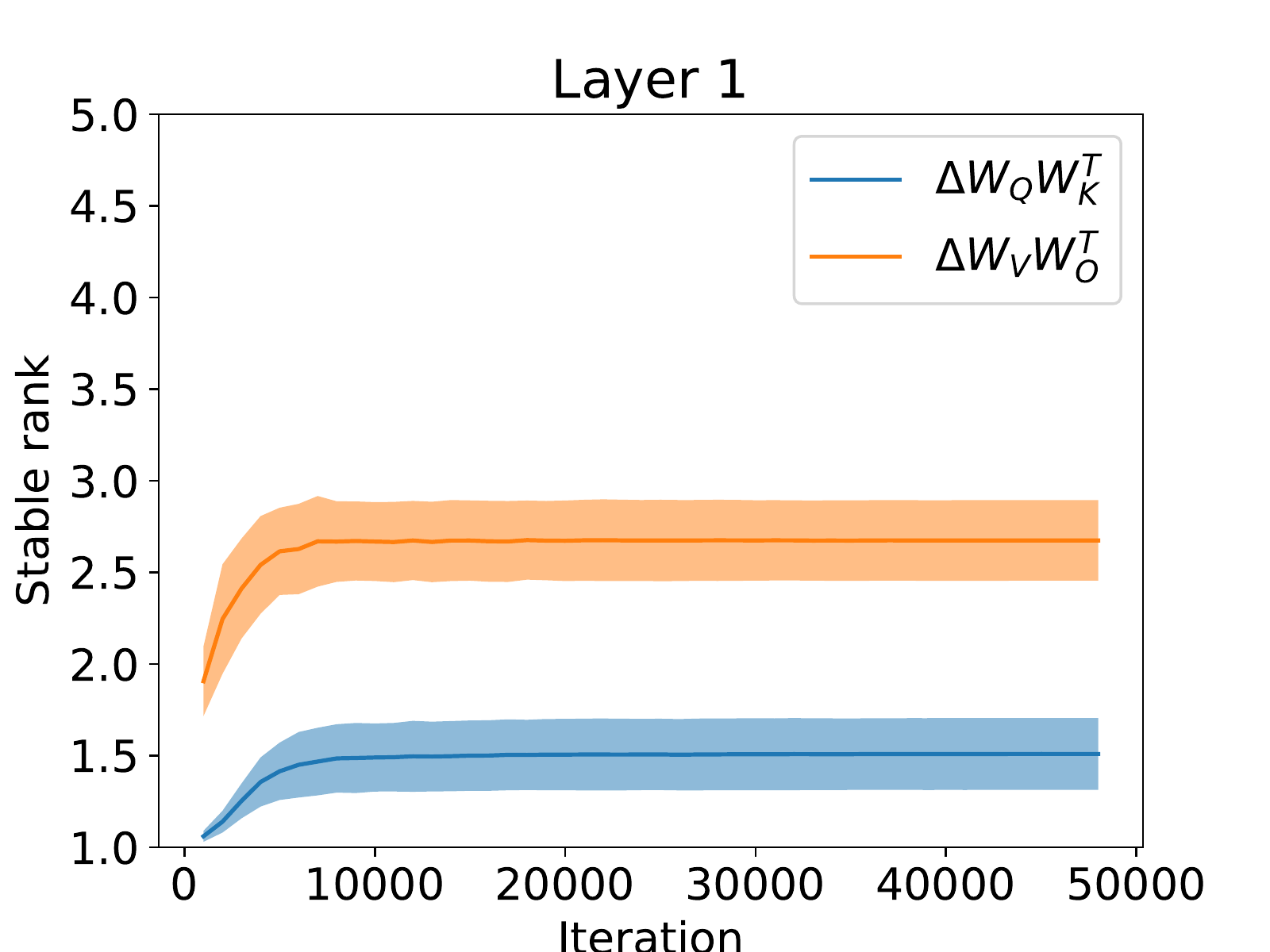} &
\includegraphics[scale=0.25]{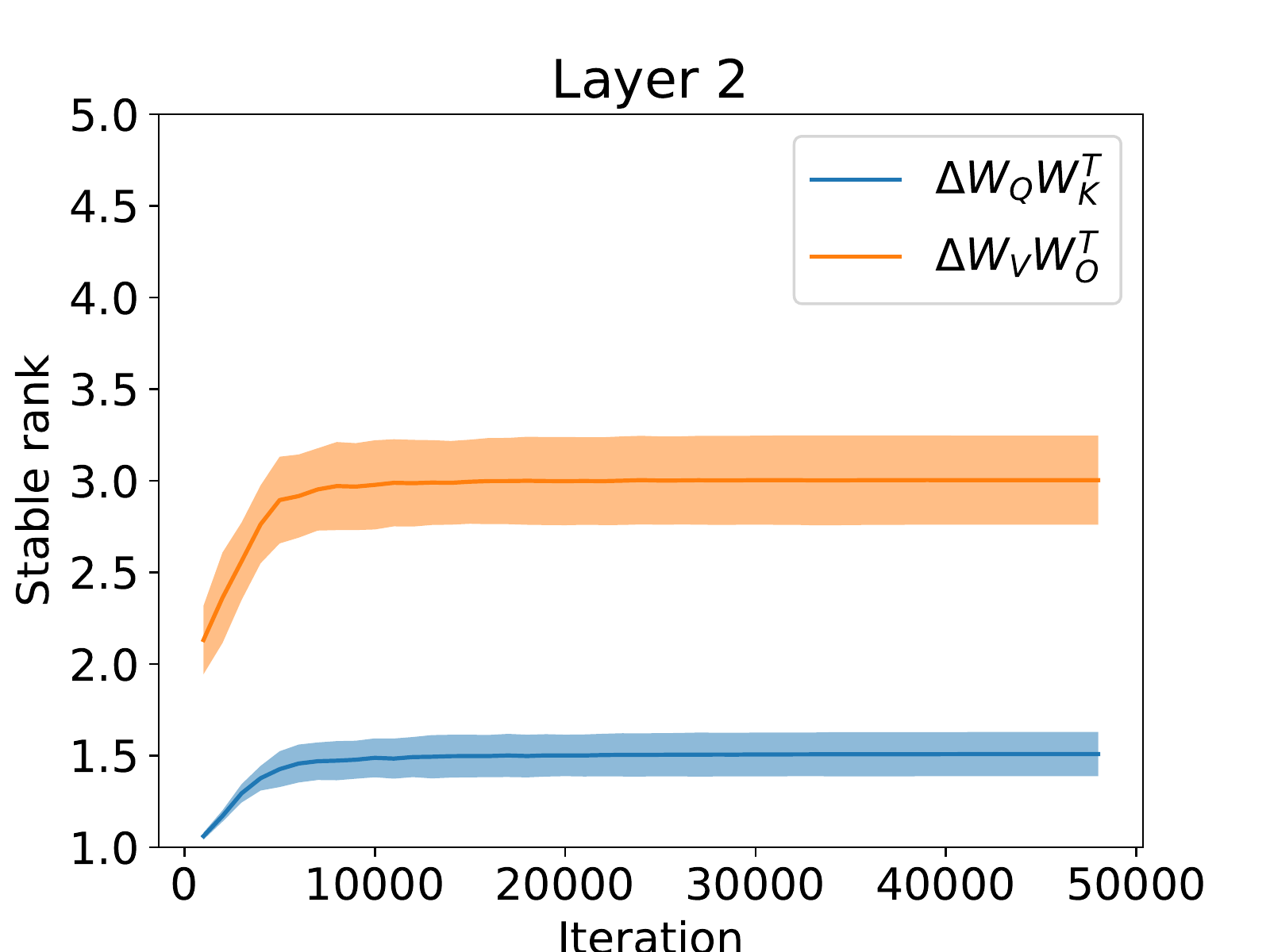} \\
\includegraphics[scale=0.25]{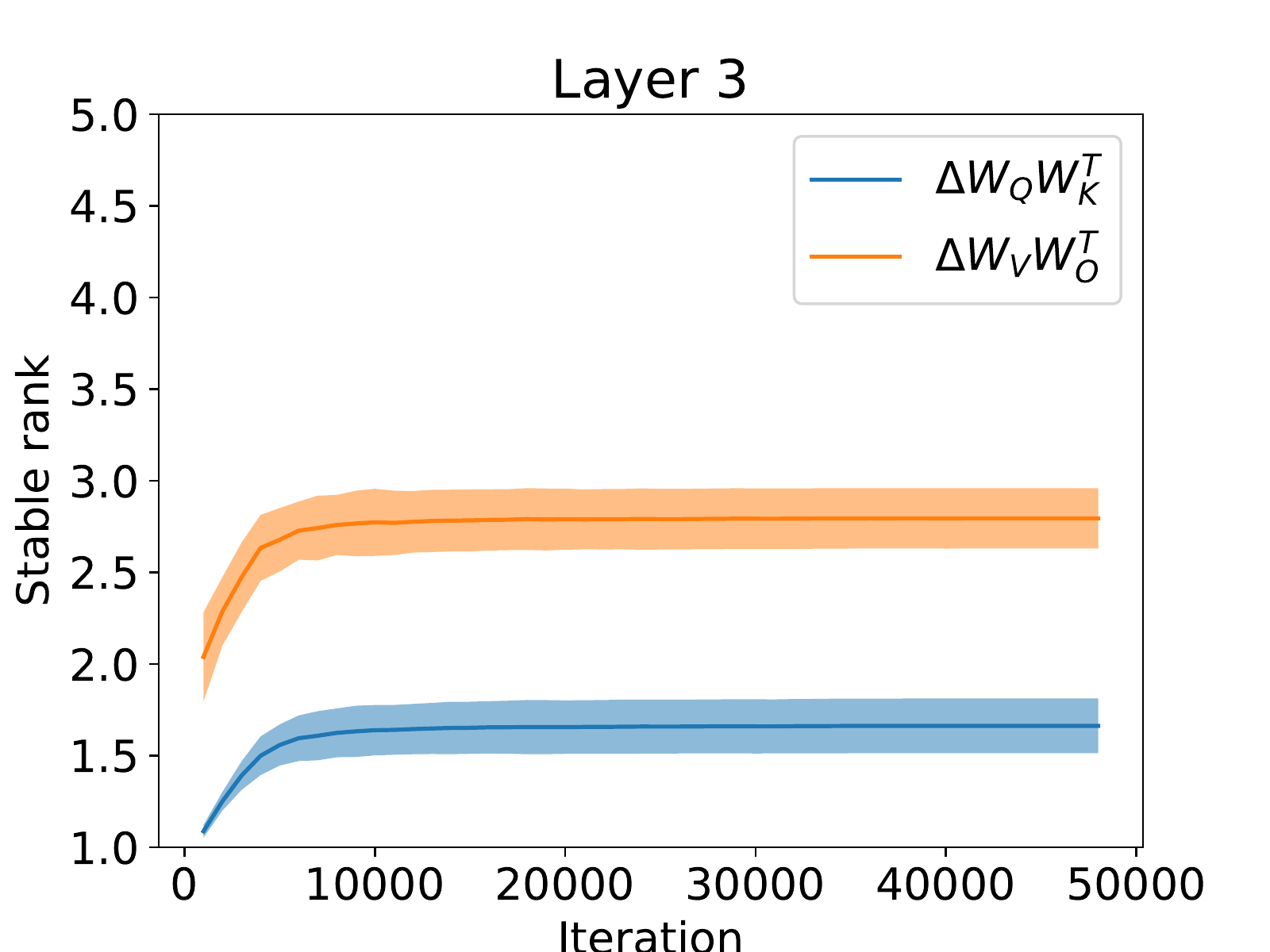} &
\includegraphics[scale=0.25]{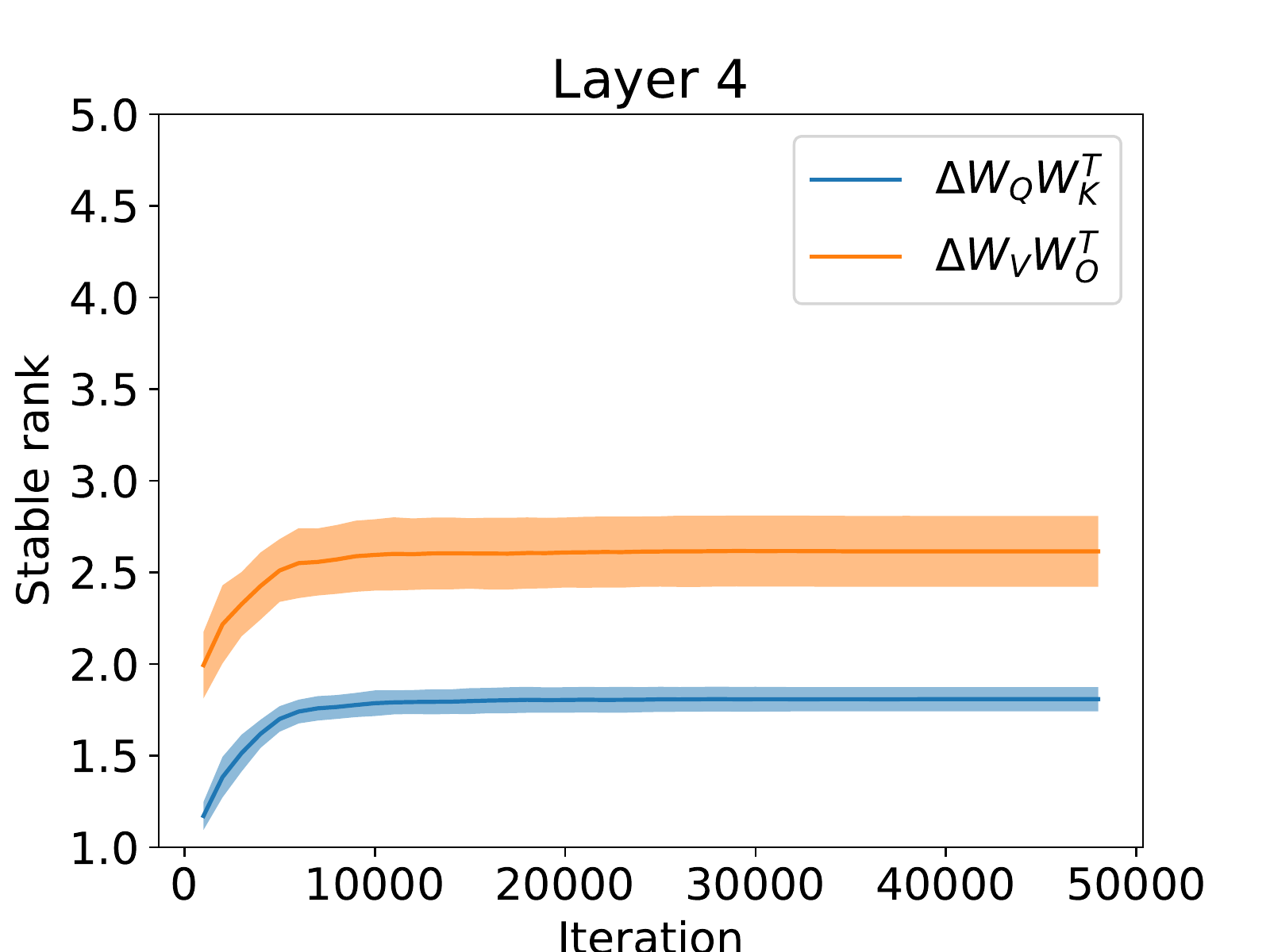} &
\includegraphics[scale=0.25]{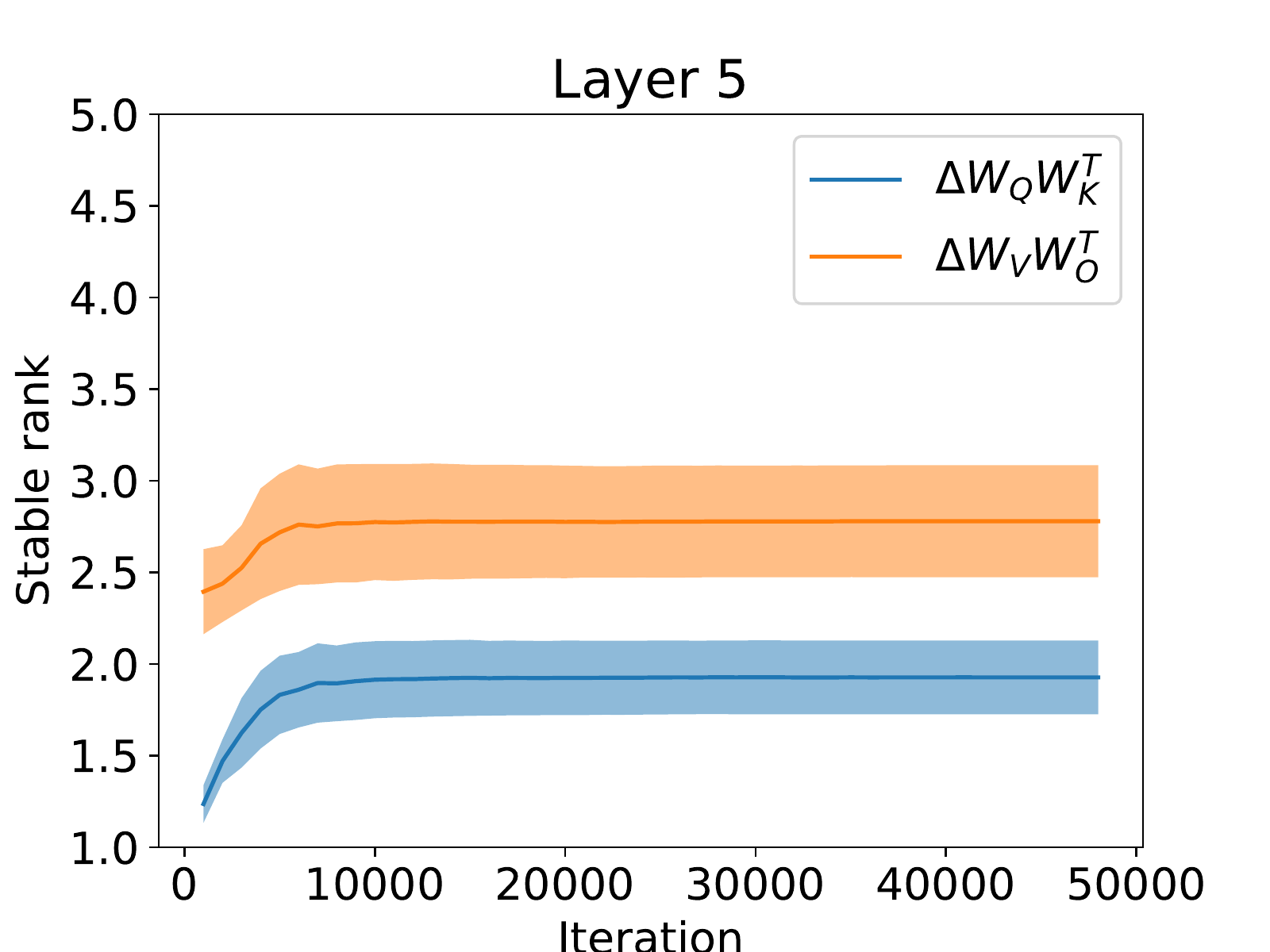}
\end{tabular}
\caption{CIFAR-100, ViT trained with SGD: Stable rank of $\Delta\bW_K \bW_Q^\top$ (blue) and $\Delta\bW_V \bW_O^\top$ (orange) throughout training. Mean and standard deviation (shaded area) are computed across 8 heads per attention layer.}\label{fig:cifar100-sgd-experiment}
\end{figure}

\newpage

\subsection{Adam-trained transformers}

\paragraph{CIFAR-10/100}  For the CIFAR-10/100 datasets we use a VIT with 6 layers, patchsize of 4, 8 heads per self attention layer, an embedding and MLP dimension of 512, and a head dimension of 128. We train the model using the Adam optimizer for 500 epochs with a base learning rate of 1e-4, a cyclic learning rate decay with a linear warmup schedule for 15 epochs and a batchsize of 512. Our results are summarized in \cref{fig:c10_1,fig:c10_2} for CIFAR-10, and \cref{fig:c100_1,fig:c100_2} for CIFAR-100.

\begin{figure*}[h]
\centering
  \begin{tabular}{@{}c@{}c@{}c@{}c@{}c@{}c@{}}

      \begin{tabular}{@{}c@{}}(a)\end{tabular} &
      \begin{tabular}{@{}c@{}}\includegraphics[width=0.28\linewidth]{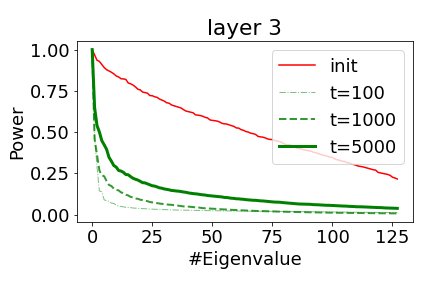}\end{tabular} & 
      
      \begin{tabular}{@{}c@{}}(b)\end{tabular} &
      \begin{tabular}{@{}c@{}}\includegraphics[width=0.28\linewidth]{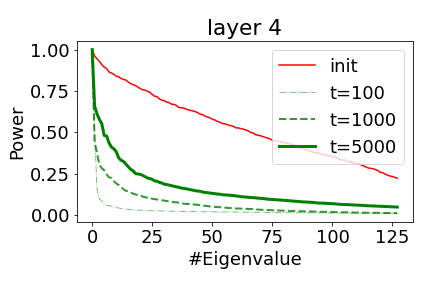}\end{tabular} &
      
      \begin{tabular}{@{}c@{}}(c)\end{tabular} &
      \begin{tabular}{@{}c@{}}\includegraphics[width=0.28\linewidth]{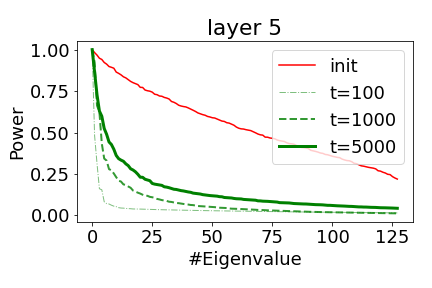}\end{tabular} \\

  \end{tabular}

  \begin{tabular}{@{}c@{}c@{}c@{}c@{}c@{}c@{}}

      \begin{tabular}{@{}c@{}}(d)\end{tabular} &
      \begin{tabular}{@{}c@{}}\includegraphics[width=0.28\linewidth]{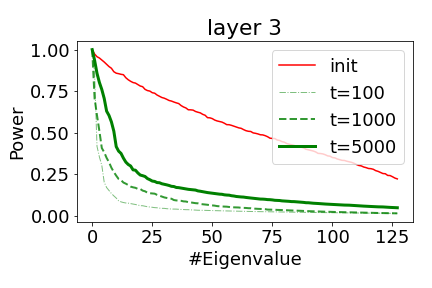}\end{tabular} & 
      
      \begin{tabular}{@{}c@{}}(e)\end{tabular} &
      \begin{tabular}{@{}c@{}}\includegraphics[width=0.28\linewidth]{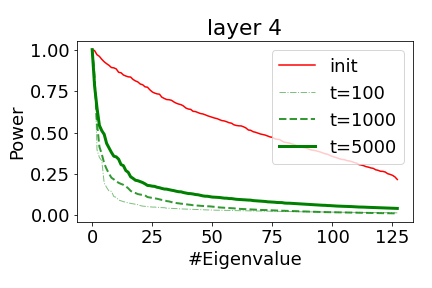}\end{tabular} &
      
      \begin{tabular}{@{}c@{}}(f)\end{tabular} &
      \begin{tabular}{@{}c@{}}\includegraphics[width=0.28\linewidth]{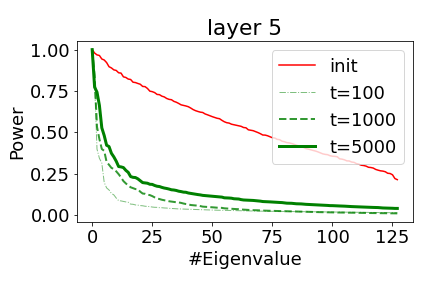}\end{tabular} \\

  \end{tabular}
  
 \caption{CIFAR-10, ViT trained with Adam: normalized spectrum at different stages of training. (a) - (c) Normalized spectrum of $\bW_K \bW_Q^\top$ at initialization and $\Delta\bW_K \bW_Q^\top$ during training for different attention heads at different layers. (d) - (e)  equivalent figures for $\bW_V \bW_O^\top$.} \label{fig:c10_1}
\end{figure*}

\begin{figure*}[h]
\centering
  \begin{tabular}{@{}c@{}c@{}c@{}c@{}c@{}c@{}}

      \begin{tabular}{@{}c@{}}(a)\end{tabular} &
      \begin{tabular}{@{}c@{}}\includegraphics[width=0.28\linewidth]{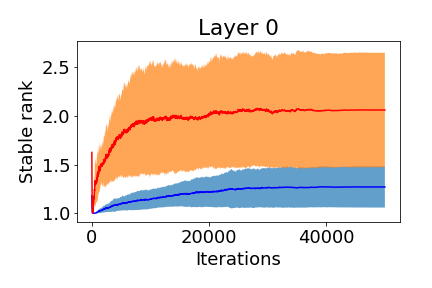}\end{tabular} & 
      
      \begin{tabular}{@{}c@{}}(b)\end{tabular} &
      \begin{tabular}{@{}c@{}}\includegraphics[width=0.28\linewidth]{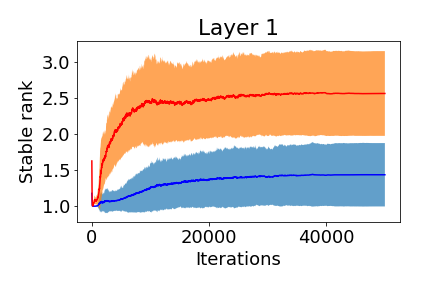}\end{tabular} &
      
      \begin{tabular}{@{}c@{}}(c)\end{tabular} &
      \begin{tabular}{@{}c@{}}\includegraphics[width=0.28\linewidth]{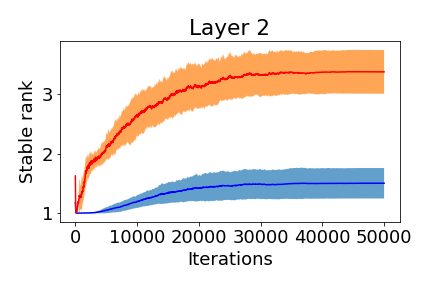}\end{tabular} \\
      
      \begin{tabular}{@{}c@{}}(d)\end{tabular} &
      \begin{tabular}{@{}c@{}}\includegraphics[width=0.28\linewidth]{fa/c10/AVsr_3.png}\end{tabular} & 
      
      \begin{tabular}{@{}c@{}}(e)\end{tabular} &
      \begin{tabular}{@{}c@{}}\includegraphics[width=0.28\linewidth]{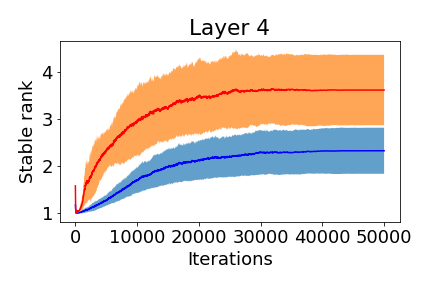}\end{tabular} &
      
      \begin{tabular}{@{}c@{}}(f)\end{tabular} &
      \begin{tabular}{@{}c@{}}\includegraphics[width=0.28\linewidth]{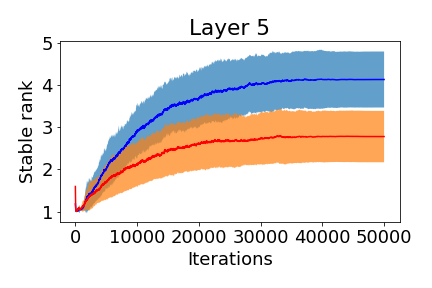}\end{tabular} \\
  \end{tabular}

 \caption{CIFAR-10, ViT trained with Adam: Stable rank of $\Delta\bW_K \bW_Q^\top$ (blue) and $\Delta\bW_V \bW_O^\top$ (red) throughout training. Mean and standard deviation (shaded area) are computed across 8 heads per attention layer.}  \label{fig:c10_2}
\end{figure*}

\begin{figure*}[h]
\centering
  \begin{tabular}{@{}c@{}c@{}c@{}c@{}c@{}c@{}}

      \begin{tabular}{@{}c@{}}(a)\end{tabular} &
      \begin{tabular}{@{}c@{}}\includegraphics[width=0.28\linewidth]{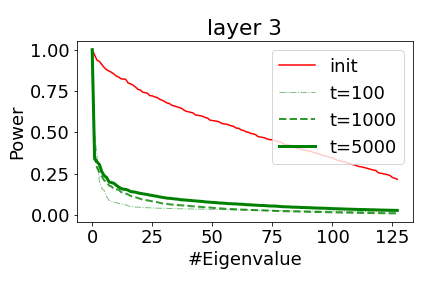}\end{tabular} & 
      
      \begin{tabular}{@{}c@{}}(b)\end{tabular} &
      \begin{tabular}{@{}c@{}}\includegraphics[width=0.28\linewidth]{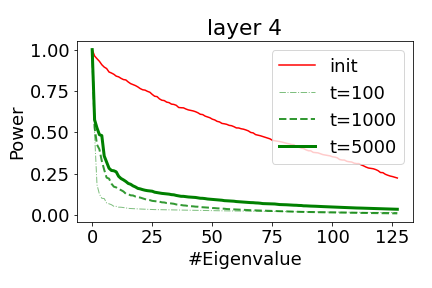}\end{tabular} &
      
      \begin{tabular}{@{}c@{}}(c)\end{tabular} &
      \begin{tabular}{@{}c@{}}\includegraphics[width=0.28\linewidth]{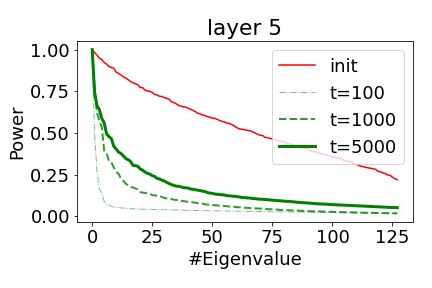}\end{tabular} \\

  \end{tabular}

  \begin{tabular}{@{}c@{}c@{}c@{}c@{}c@{}c@{}}

      \begin{tabular}{@{}c@{}}(d)\end{tabular} &
      \begin{tabular}{@{}c@{}}\includegraphics[width=0.28\linewidth]{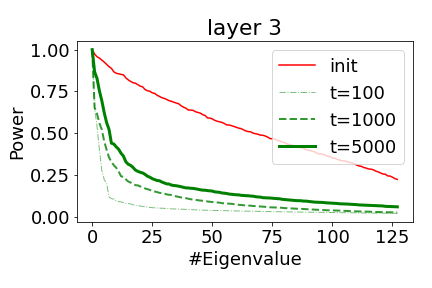}\end{tabular} & 
      
      \begin{tabular}{@{}c@{}}(e)\end{tabular} &
      \begin{tabular}{@{}c@{}}\includegraphics[width=0.28\linewidth]{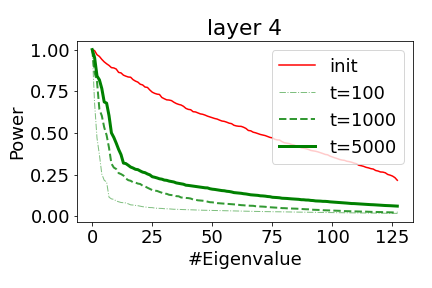}\end{tabular} &
      
      \begin{tabular}{@{}c@{}}(f)\end{tabular} &
      \begin{tabular}{@{}c@{}}\includegraphics[width=0.28\linewidth]{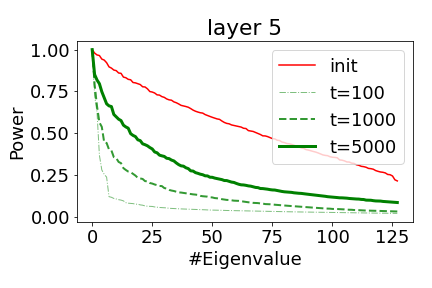}\end{tabular} \\

  \end{tabular}
  
 \caption{CIFAR-100, ViT trained with Adam: normalized spectrum at different stages of training. (a) - (c) Normalized spectrum of $\bW_K \bW_Q^\top$ at initialization and $\Delta\bW_K \bW_Q^\top$ during training for different attention heads at different layers. (d) - (e)  equivalent figures for $\bW_V \bW_O^\top$.} \label{fig:c100_1}
\end{figure*}

\begin{figure*}[h]
\centering
  \begin{tabular}{@{}c@{}c@{}c@{}c@{}c@{}c@{}}

      \begin{tabular}{@{}c@{}}(a)\end{tabular} &
      \begin{tabular}{@{}c@{}}\includegraphics[width=0.28\linewidth]{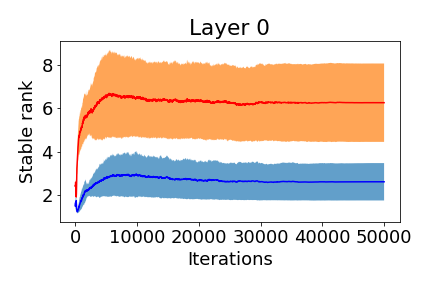}\end{tabular} & 
      
      \begin{tabular}{@{}c@{}}(b)\end{tabular} &
      \begin{tabular}{@{}c@{}}\includegraphics[width=0.28\linewidth]{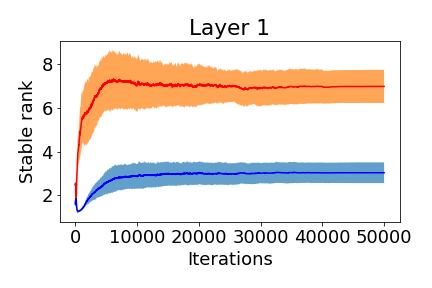}\end{tabular} &
      
      \begin{tabular}{@{}c@{}}(c)\end{tabular} &
      \begin{tabular}{@{}c@{}}\includegraphics[width=0.28\linewidth]{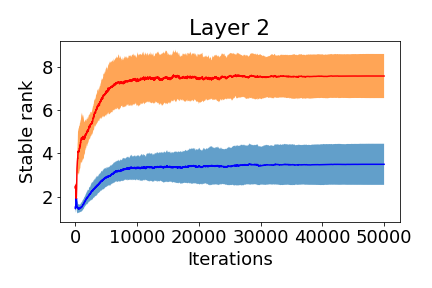}\end{tabular} \\
      
      \begin{tabular}{@{}c@{}}(d)\end{tabular} &
      \begin{tabular}{@{}c@{}}\includegraphics[width=0.28\linewidth]{fa/c100/AVsr_3.png}\end{tabular} & 
      
      \begin{tabular}{@{}c@{}}(e)\end{tabular} &
      \begin{tabular}{@{}c@{}}\includegraphics[width=0.28\linewidth]{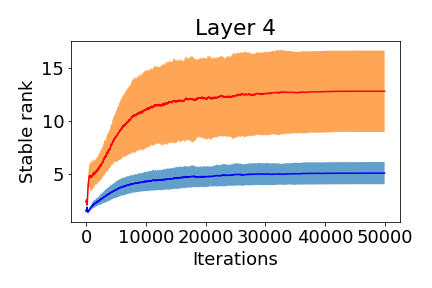}\end{tabular} &
      
      \begin{tabular}{@{}c@{}}(f)\end{tabular} &
      \begin{tabular}{@{}c@{}}\includegraphics[width=0.28\linewidth]{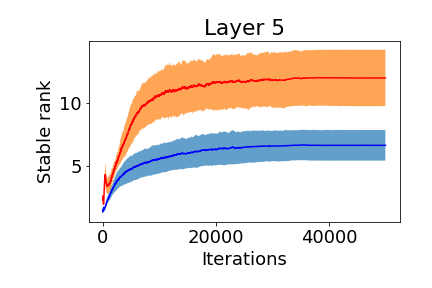}\end{tabular} \\
  \end{tabular}

 \caption{CIFAR-100, ViT trained with Adam: Stable rank of $\Delta\bW_K \bW_Q^\top$ (blue) and $\Delta\bW_V \bW_O^\top$ (red) throughout training. Mean and standard deviation (shaded area) are computed across 8 heads per attention layer.} \label{fig:c100_2}
\end{figure*}

\paragraph{ImageNet} For ImageNet, we use the VIT-Base/16 from \cite{vit} trained with Adam for 360 epochs with a base learning rate of 3e-3, a cyclic learning rate decay with a linear warmup schedule for 15 epochs and a batchsize of 4096. Our results are summarized in \cref{fig:img_1,fig:img_2} for ImageNet.

\begin{figure*}[h]
\centering
  \begin{tabular}{@{}c@{}c@{}c@{}c@{}c@{}c@{}}

      \begin{tabular}{@{}c@{}}(a)\end{tabular} &
      \begin{tabular}{@{}c@{}}\includegraphics[width=0.28\linewidth]{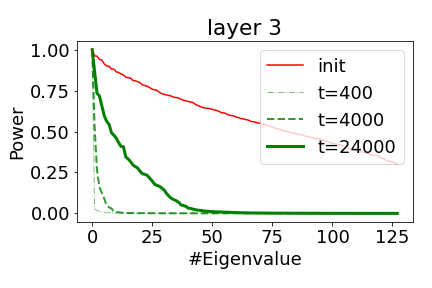}\end{tabular} & 
      
      \begin{tabular}{@{}c@{}}(b)\end{tabular} &
      \begin{tabular}{@{}c@{}}\includegraphics[width=0.28\linewidth]{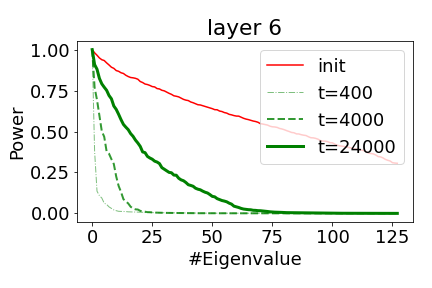}\end{tabular} &
      
      \begin{tabular}{@{}c@{}}(c)\end{tabular} &
      \begin{tabular}{@{}c@{}}\includegraphics[width=0.28\linewidth]{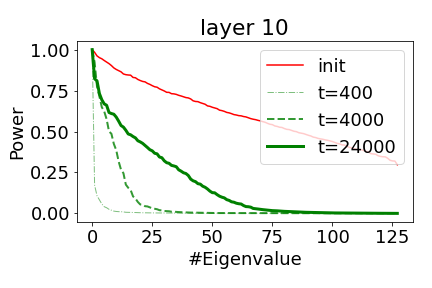}\end{tabular} \\

  \end{tabular}

  \begin{tabular}{@{}c@{}c@{}c@{}c@{}c@{}c@{}}

      \begin{tabular}{@{}c@{}}(d)\end{tabular} &
      \begin{tabular}{@{}c@{}}\includegraphics[width=0.28\linewidth]{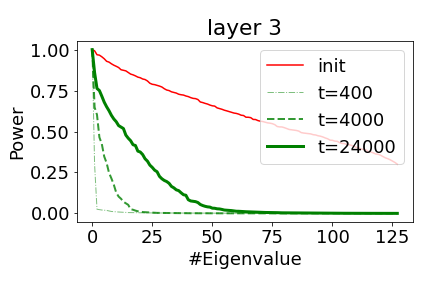}\end{tabular} & 
      
      \begin{tabular}{@{}c@{}}(e)\end{tabular} &
      \begin{tabular}{@{}c@{}}\includegraphics[width=0.28\linewidth]{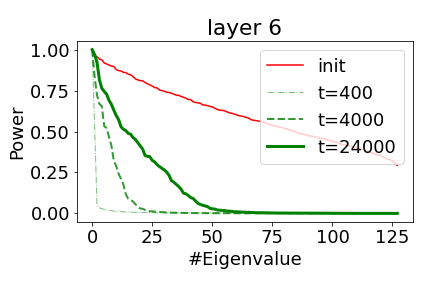}\end{tabular} &
      
      \begin{tabular}{@{}c@{}}(f)\end{tabular} &
      \begin{tabular}{@{}c@{}}\includegraphics[width=0.28\linewidth]{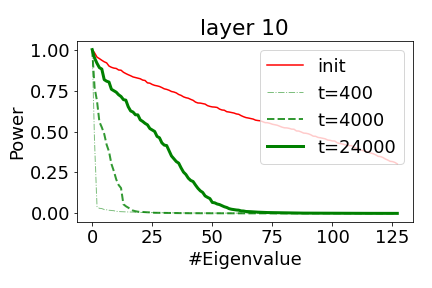}\end{tabular} \\

  \end{tabular}
 \caption{ImageNet, ViT trained with Adam: normalized spectrum at different stages of training. (a) - (c) Normalized spectrum of $\bW_K \bW_Q^\top$ at initialization and $\Delta\bW_K \bW_Q^\top$ during training for different attention heads at different layers. (d) - (e)  equivalent figures for $\bW_V \bW_O^\top$.} \label{fig:img_1}
\end{figure*}

\begin{figure*}[h]
\centering
  \begin{tabular}{@{}c@{}c@{}c@{}c@{}c@{}c@{}}

      \begin{tabular}{@{}c@{}}(a)\end{tabular} &
      \begin{tabular}{@{}c@{}}\includegraphics[width=0.28\linewidth]{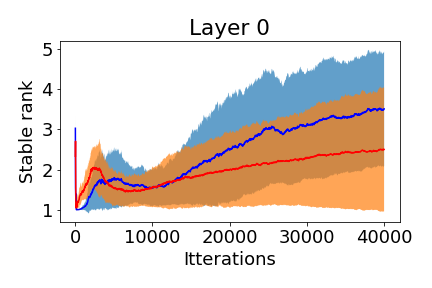}\end{tabular} & 
      
      \begin{tabular}{@{}c@{}}(b)\end{tabular} &
      \begin{tabular}{@{}c@{}}\includegraphics[width=0.28\linewidth]{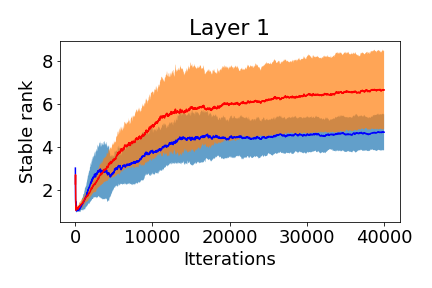}\end{tabular} &
      
      \begin{tabular}{@{}c@{}}(c)\end{tabular} &
      \begin{tabular}{@{}c@{}}\includegraphics[width=0.28\linewidth]{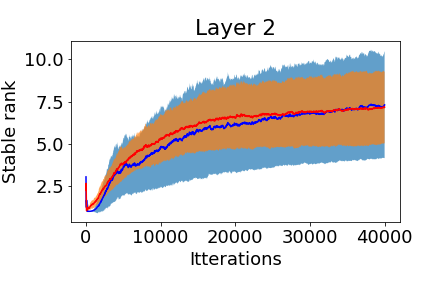}\end{tabular} \\
      
      \begin{tabular}{@{}c@{}}(d)\end{tabular} &
      \begin{tabular}{@{}c@{}}\includegraphics[width=0.28\linewidth]{fa/imagenet/AVsr_3.png}\end{tabular} & 
      
      \begin{tabular}{@{}c@{}}(e)\end{tabular} &
      \begin{tabular}{@{}c@{}}\includegraphics[width=0.28\linewidth]{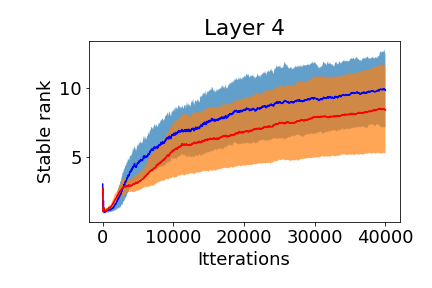}\end{tabular} &
      
      \begin{tabular}{@{}c@{}}(f)\end{tabular} &
      \begin{tabular}{@{}c@{}}\includegraphics[width=0.28\linewidth]{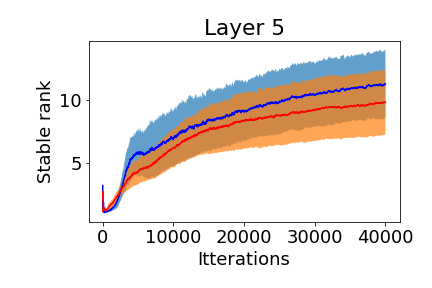}\end{tabular} \\

       \begin{tabular}{@{}c@{}}(g)\end{tabular} &
      \begin{tabular}{@{}c@{}}\includegraphics[width=0.28\linewidth]{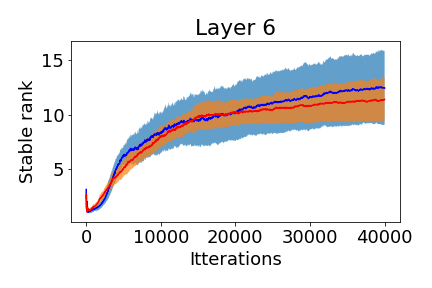}\end{tabular} & 
      
      \begin{tabular}{@{}c@{}}(h)\end{tabular} &
      \begin{tabular}{@{}c@{}}\includegraphics[width=0.28\linewidth]{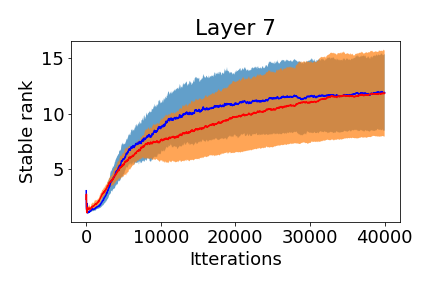}\end{tabular} &
      
      \begin{tabular}{@{}c@{}}(i)\end{tabular} &
      \begin{tabular}{@{}c@{}}\includegraphics[width=0.28\linewidth]{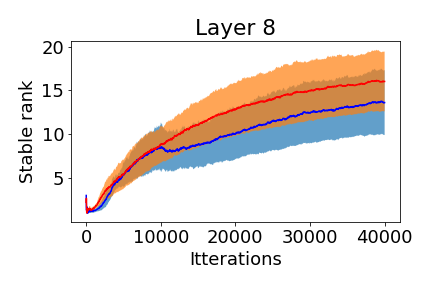}\end{tabular} \\

       \begin{tabular}{@{}c@{}}(j)\end{tabular} &
      \begin{tabular}{@{}c@{}}\includegraphics[width=0.28\linewidth]{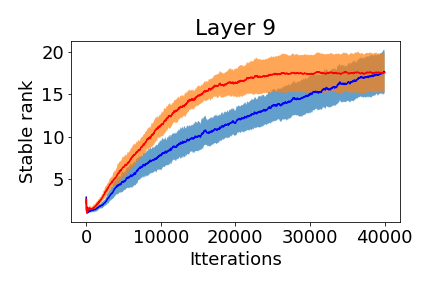}\end{tabular} & 
      
      \begin{tabular}{@{}c@{}}(k)\end{tabular} &
      \begin{tabular}{@{}c@{}}\includegraphics[width=0.28\linewidth]{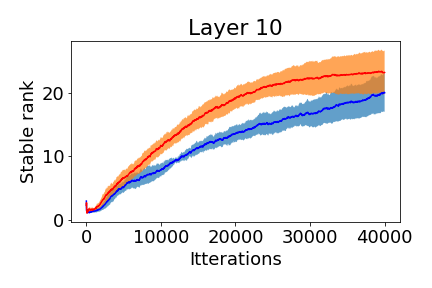}\end{tabular} &
      
      \begin{tabular}{@{}c@{}}(l)\end{tabular} &
      \begin{tabular}{@{}c@{}}\includegraphics[width=0.28\linewidth]{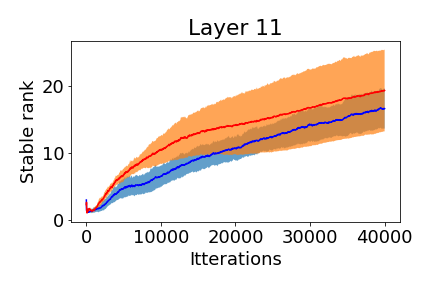}\end{tabular} \\
  \end{tabular}

 \caption{ImageNet, ViT trained with Adam: Stable rank of $\Delta\bW_K \bW_Q^\top$ (blue) and $\Delta\bW_V \bW_O^\top$ (red) throughout training. Mean and standard deviation (shaded area) are computed across 12 heads per attention layer.} \label{fig:img_2}
\end{figure*}

\clearpage

\paragraph{Wikitext-103}
The gradual rank increase phenomenon also occurs in the NLP setting with language transformers. We trained GPT-2 on Wikitext-103 using the HuggingFace training script with Adam learning rate 3e-4, per-GPU batch-size 8, and block-length 256. We trained for 3 epochs on 2 A100 GPUs, which took 12 hours. See Figure~\ref{fig:nlp-experiment}.

\begin{figure}[h!]
\begin{tabular}{ccc}
\includegraphics[scale=0.25]{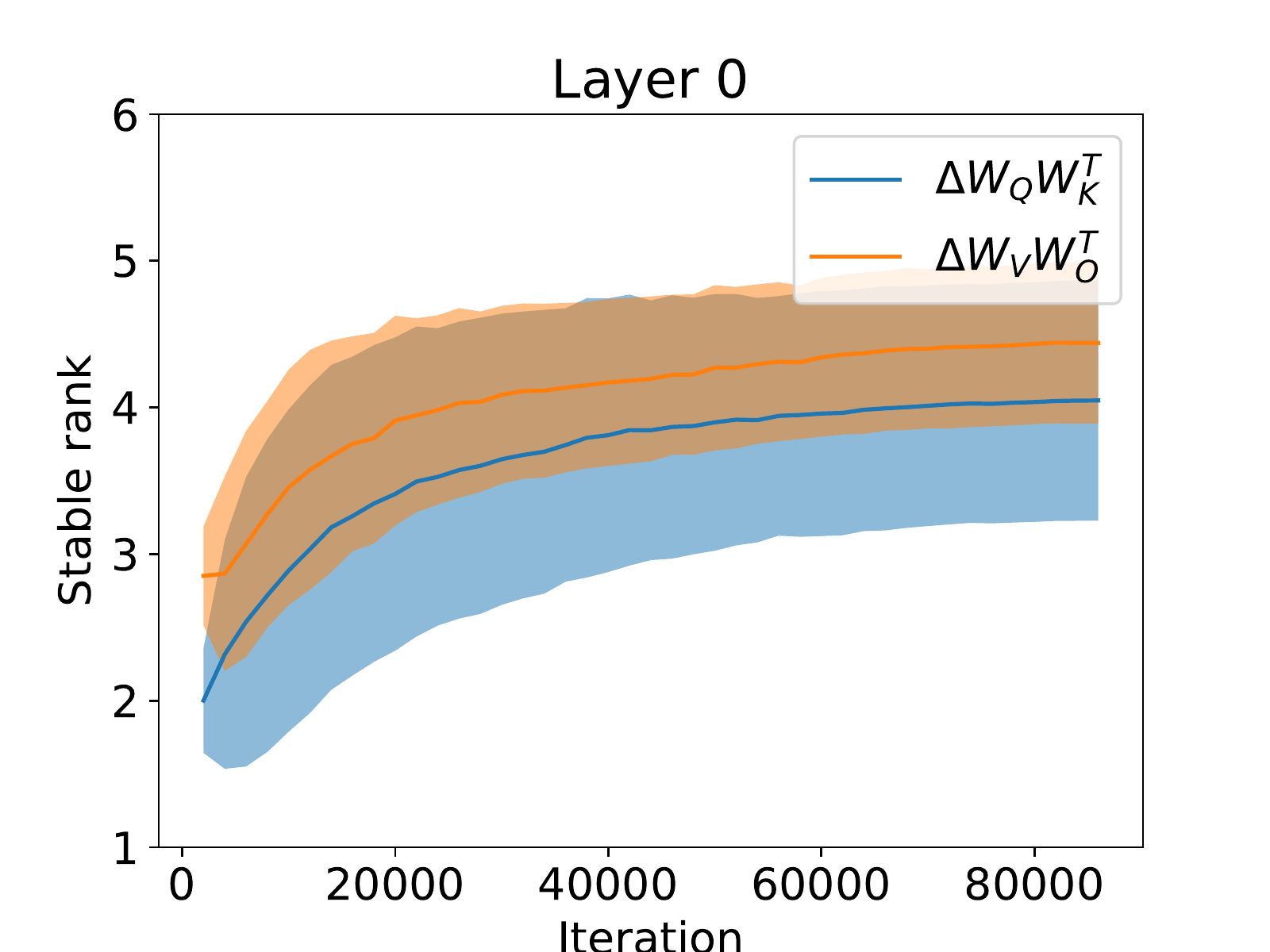} & 
\includegraphics[scale=0.25]{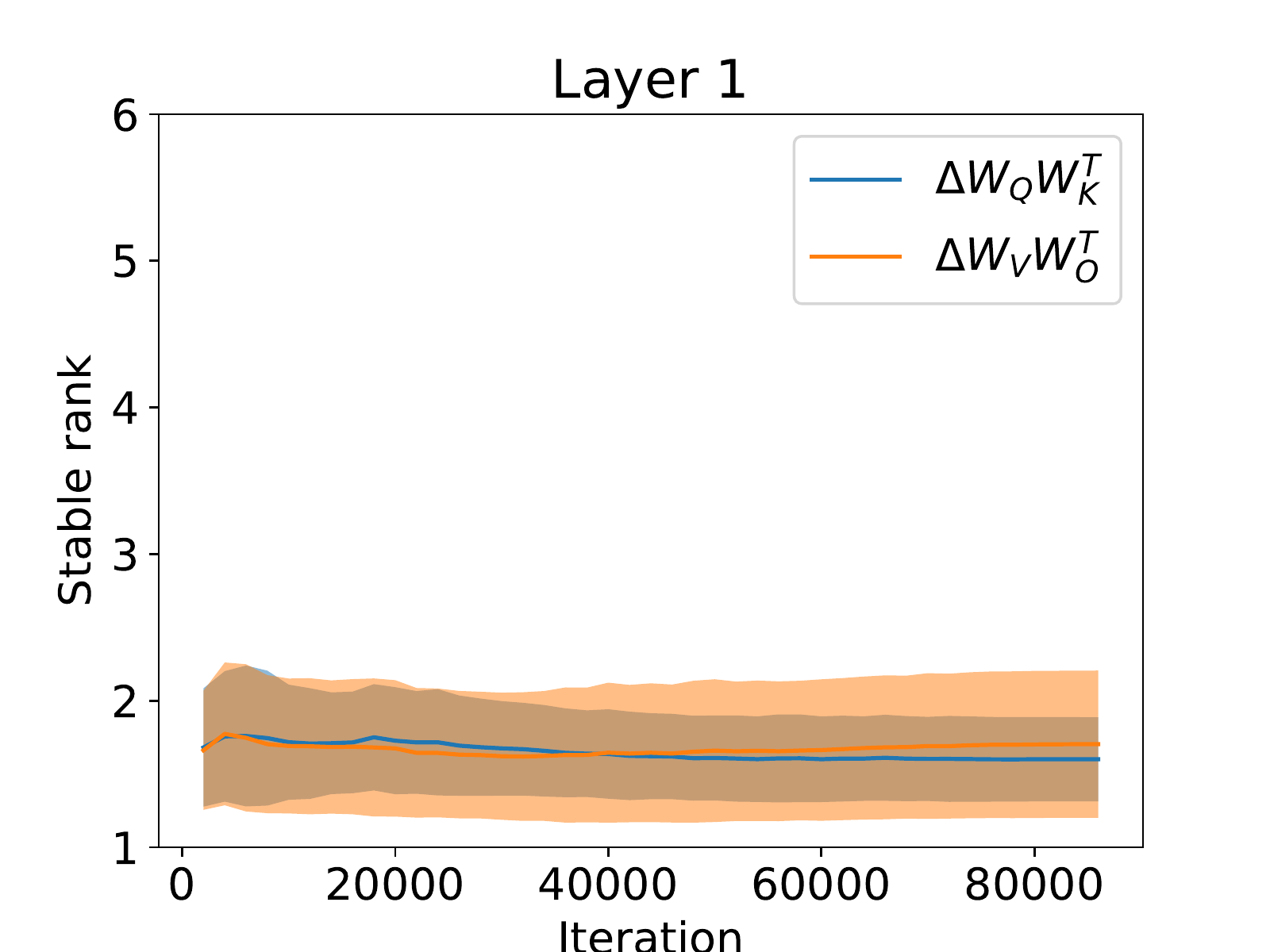} & 
\includegraphics[scale=0.25]{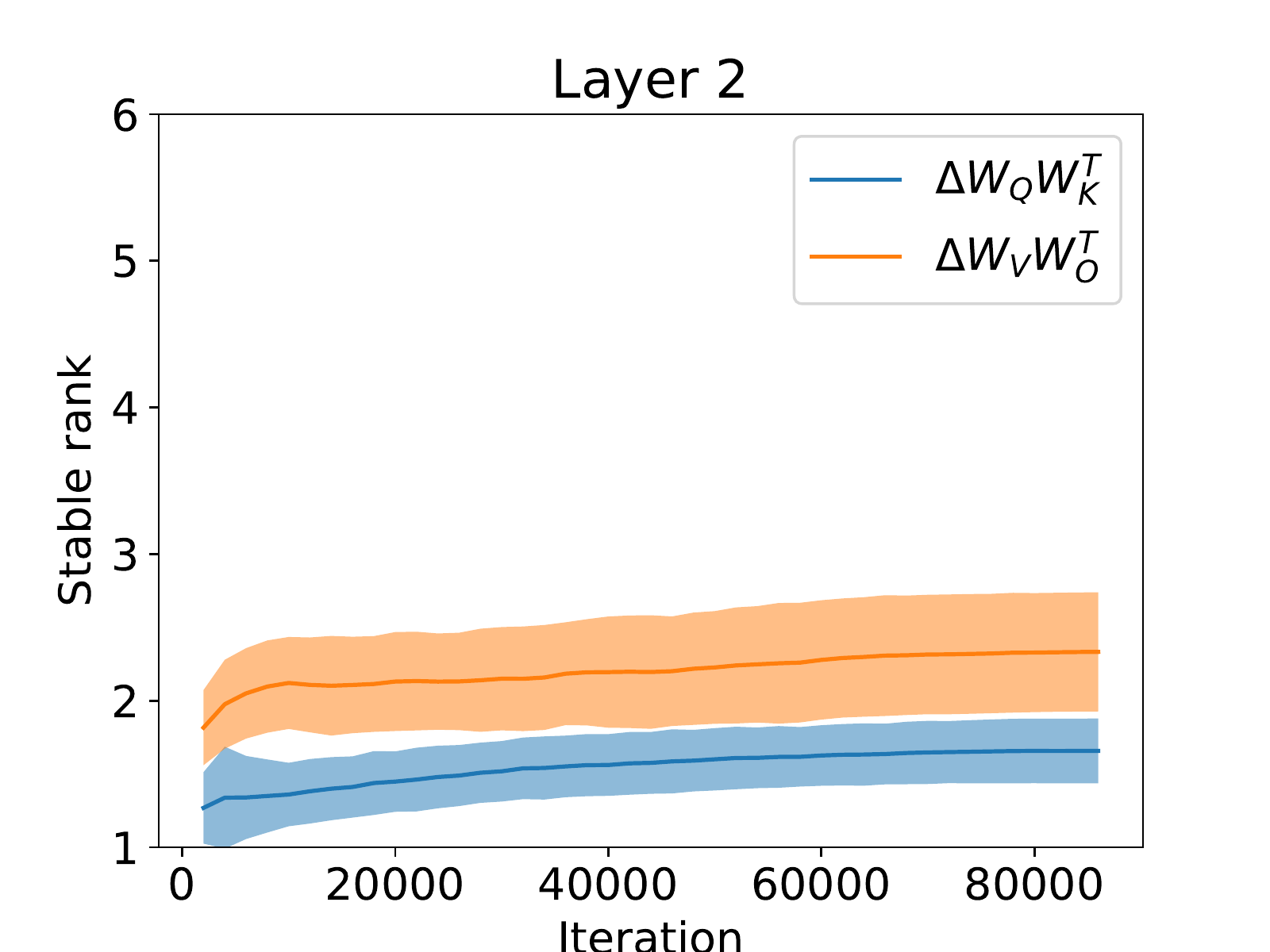} \\
\includegraphics[scale=0.25]{figures/wikitext/gpt2_layer3.pdf} & 
\includegraphics[scale=0.25]{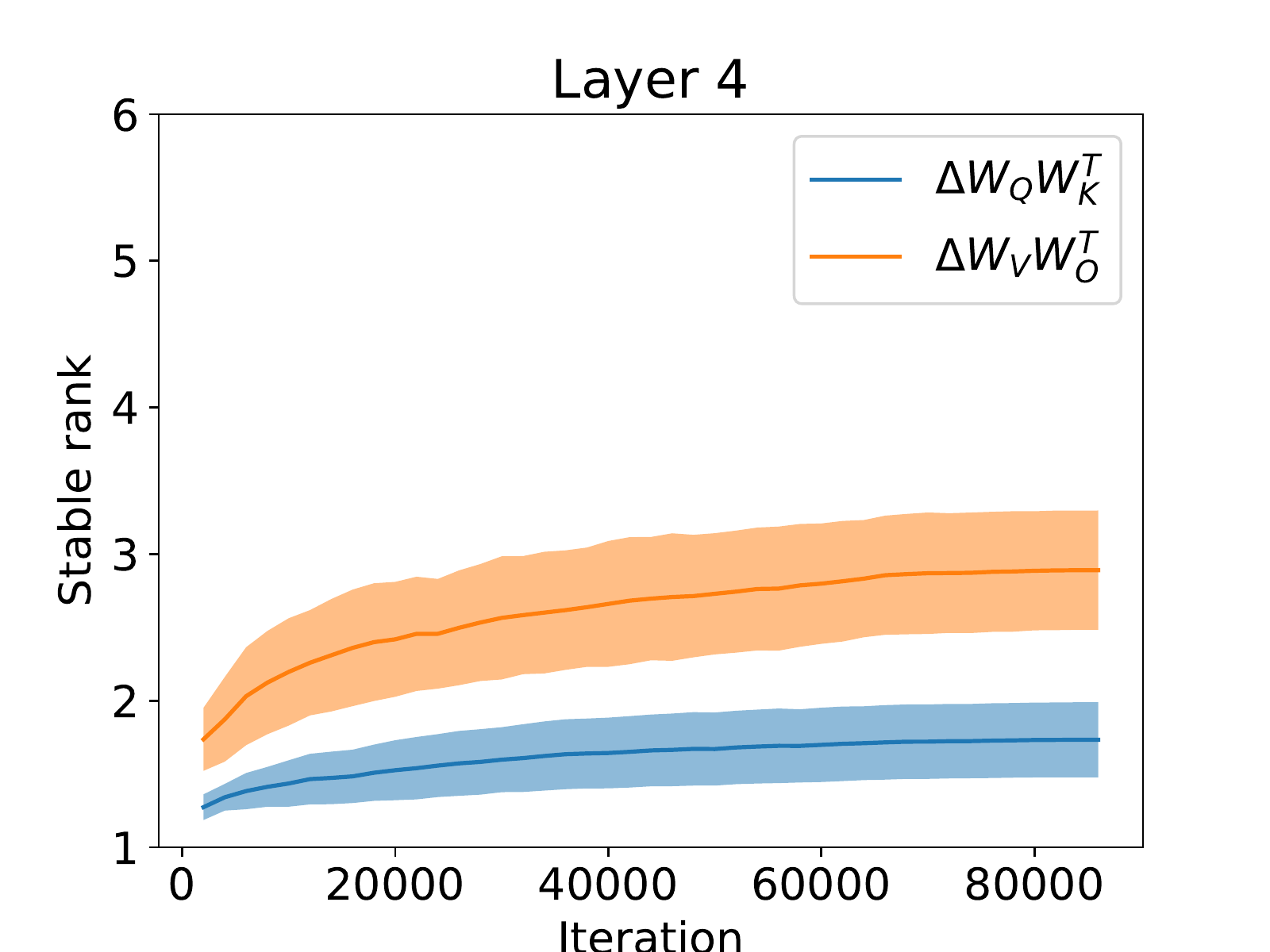} & 
\includegraphics[scale=0.25]{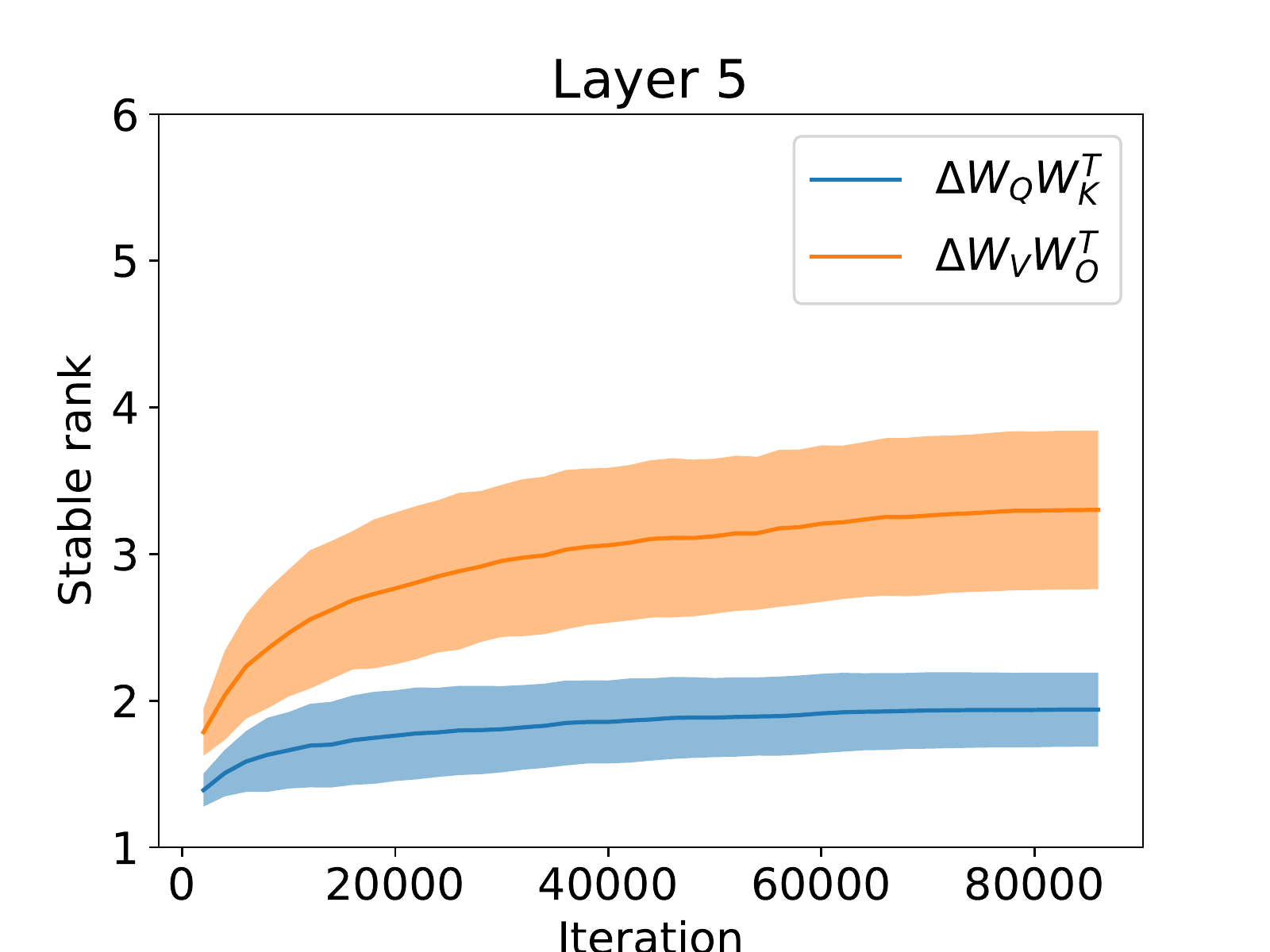} \\
\includegraphics[scale=0.25]{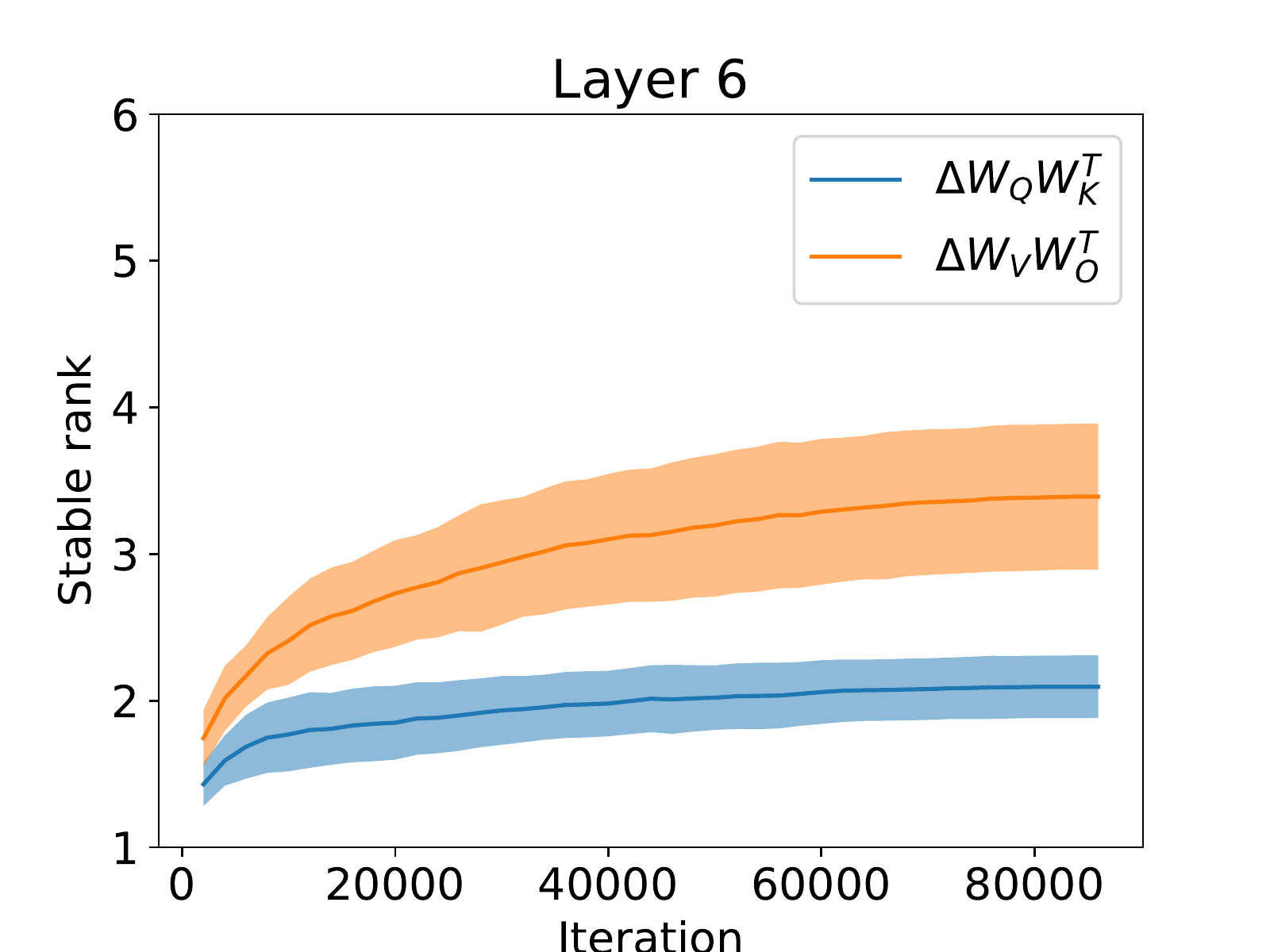} & 
\includegraphics[scale=0.25]{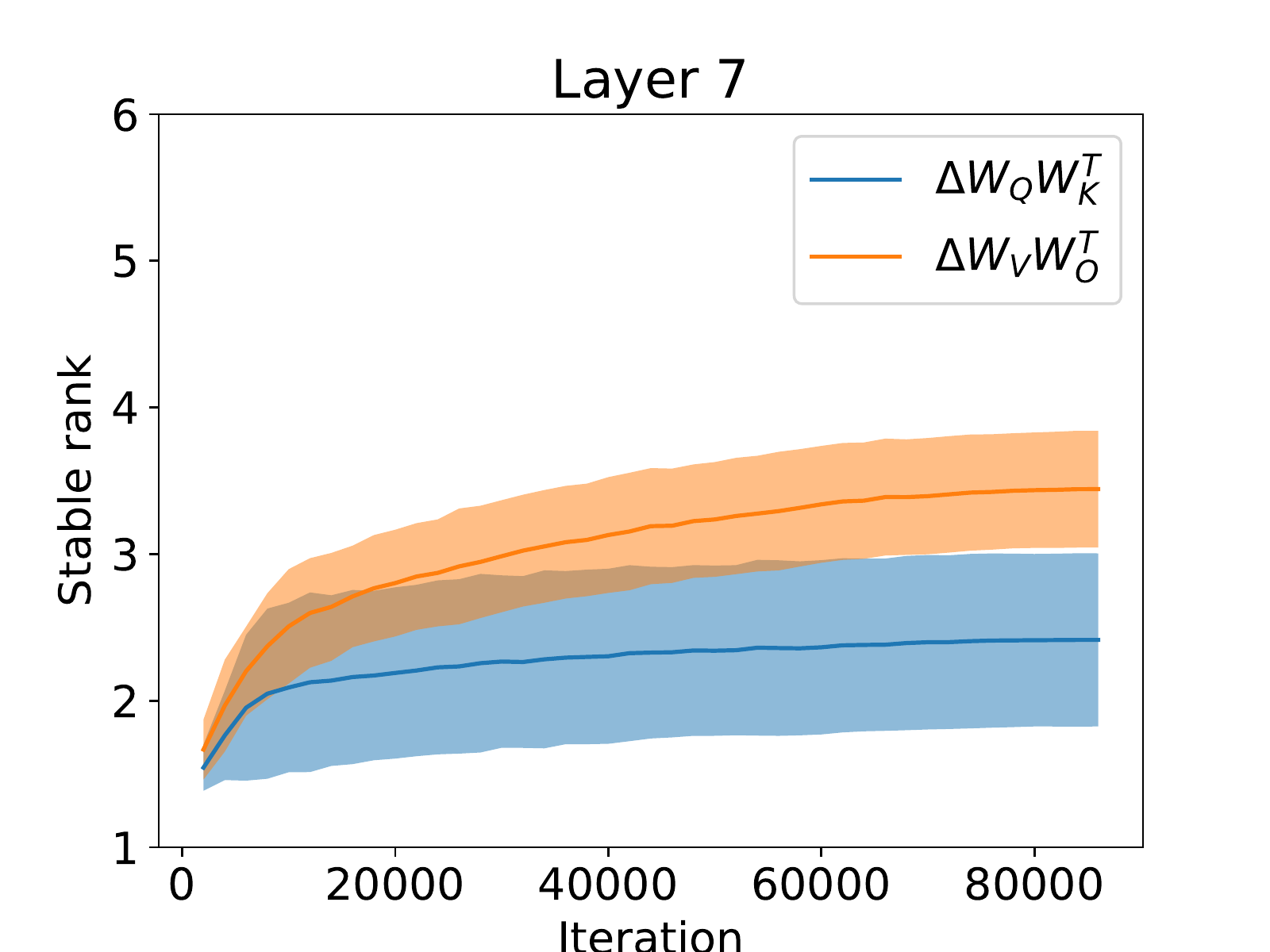} & 
\includegraphics[scale=0.25]{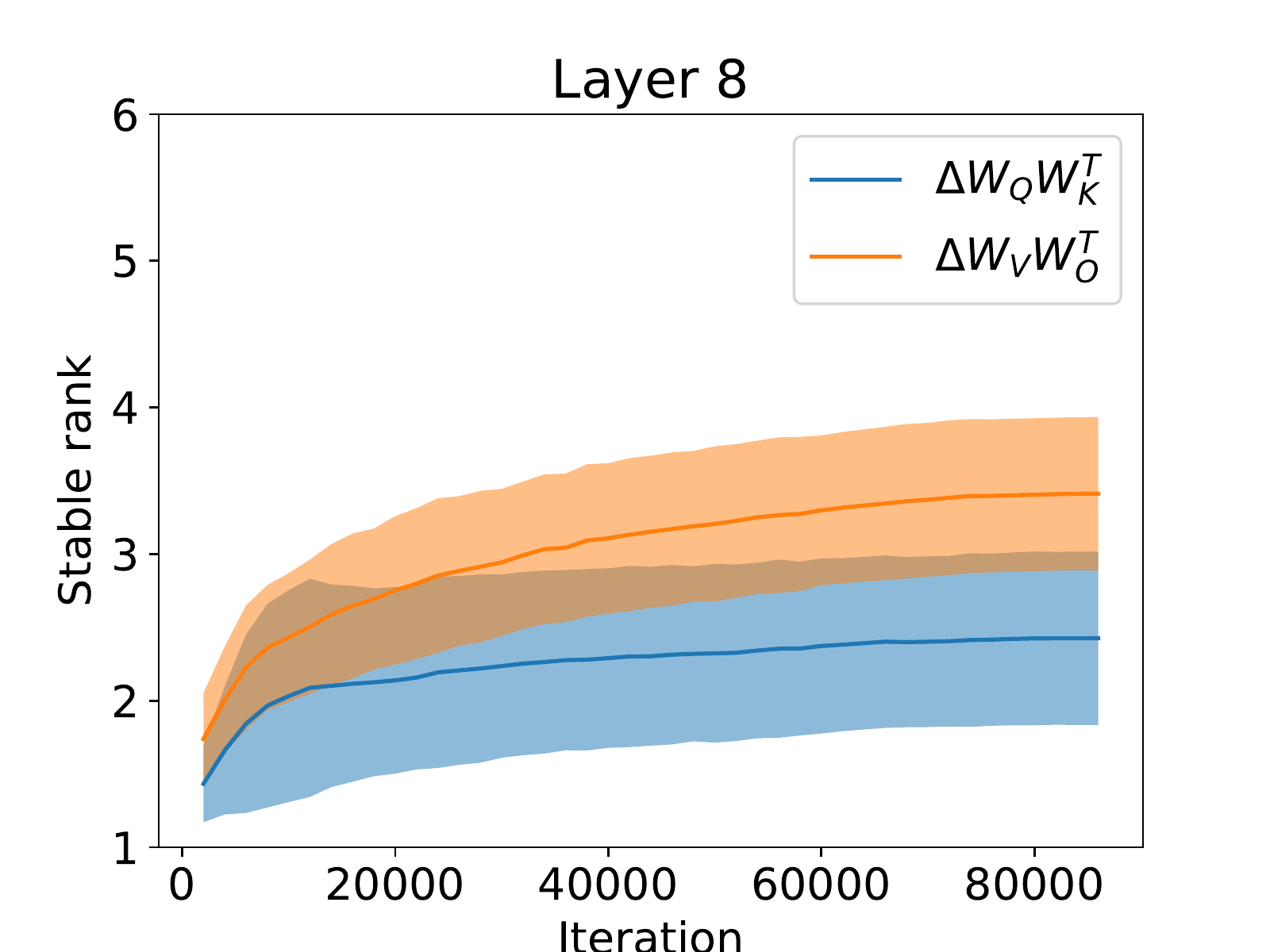} \\
\includegraphics[scale=0.25]{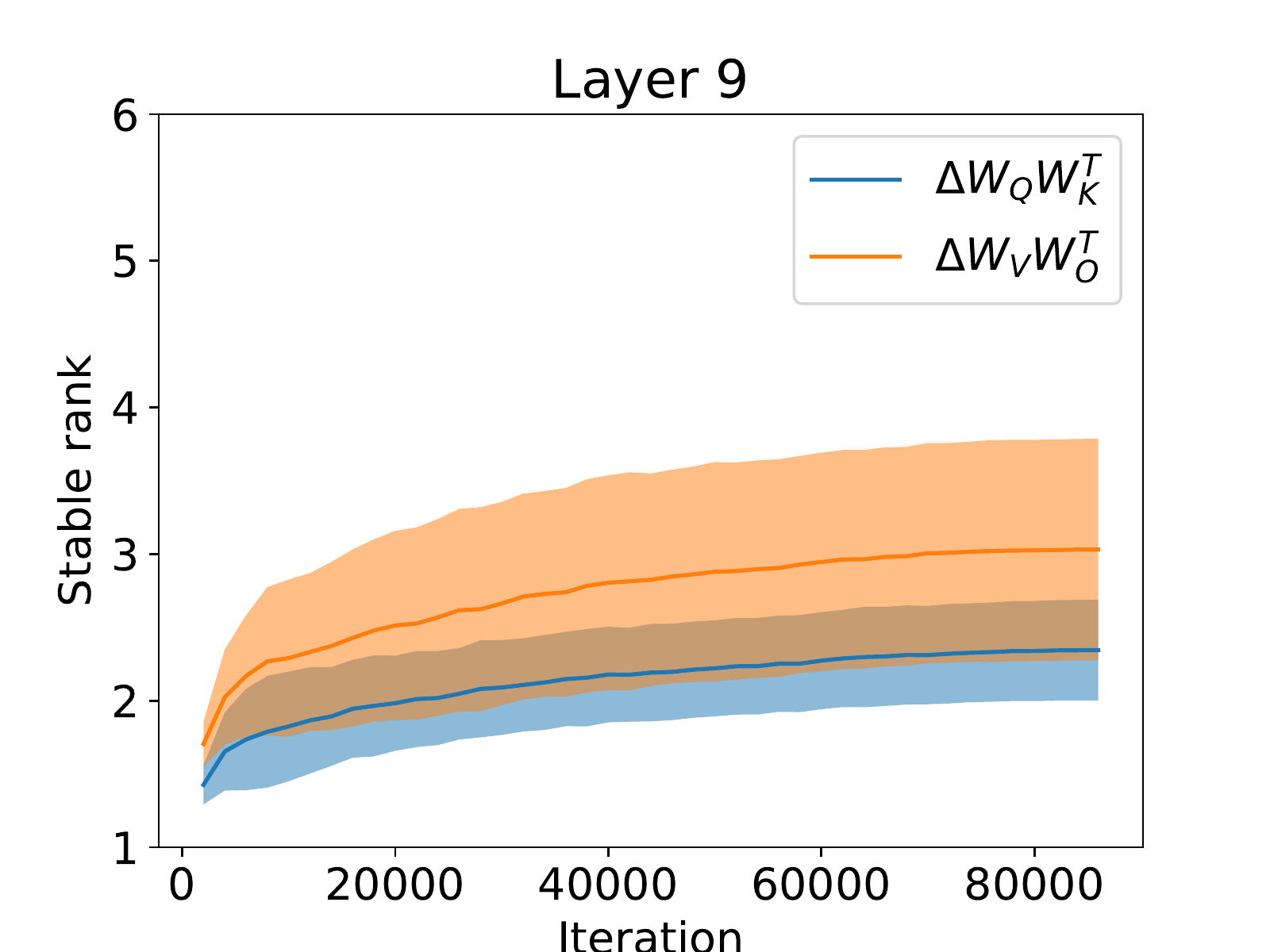} & 
\includegraphics[scale=0.25]{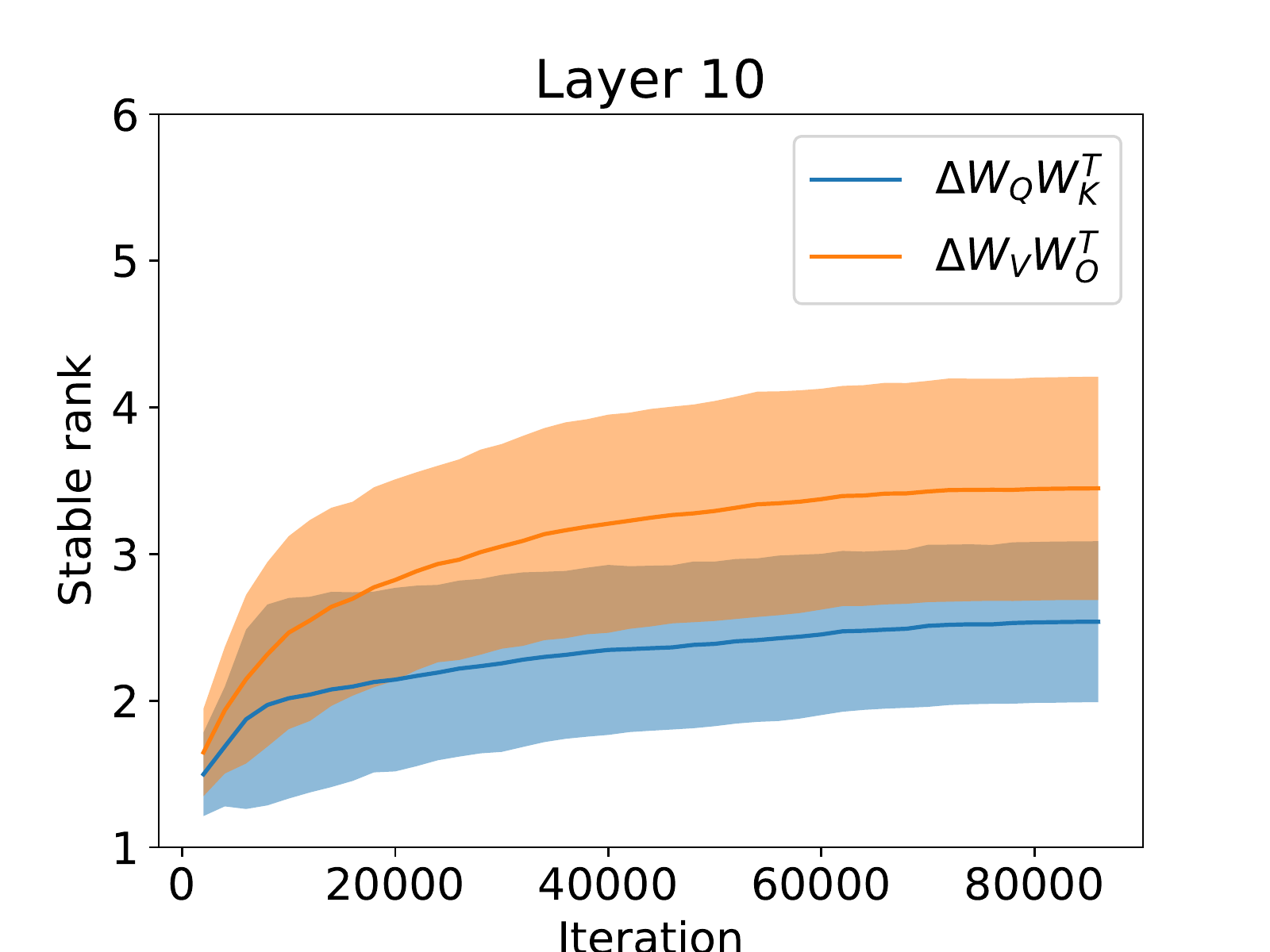} & 
\includegraphics[scale=0.25]{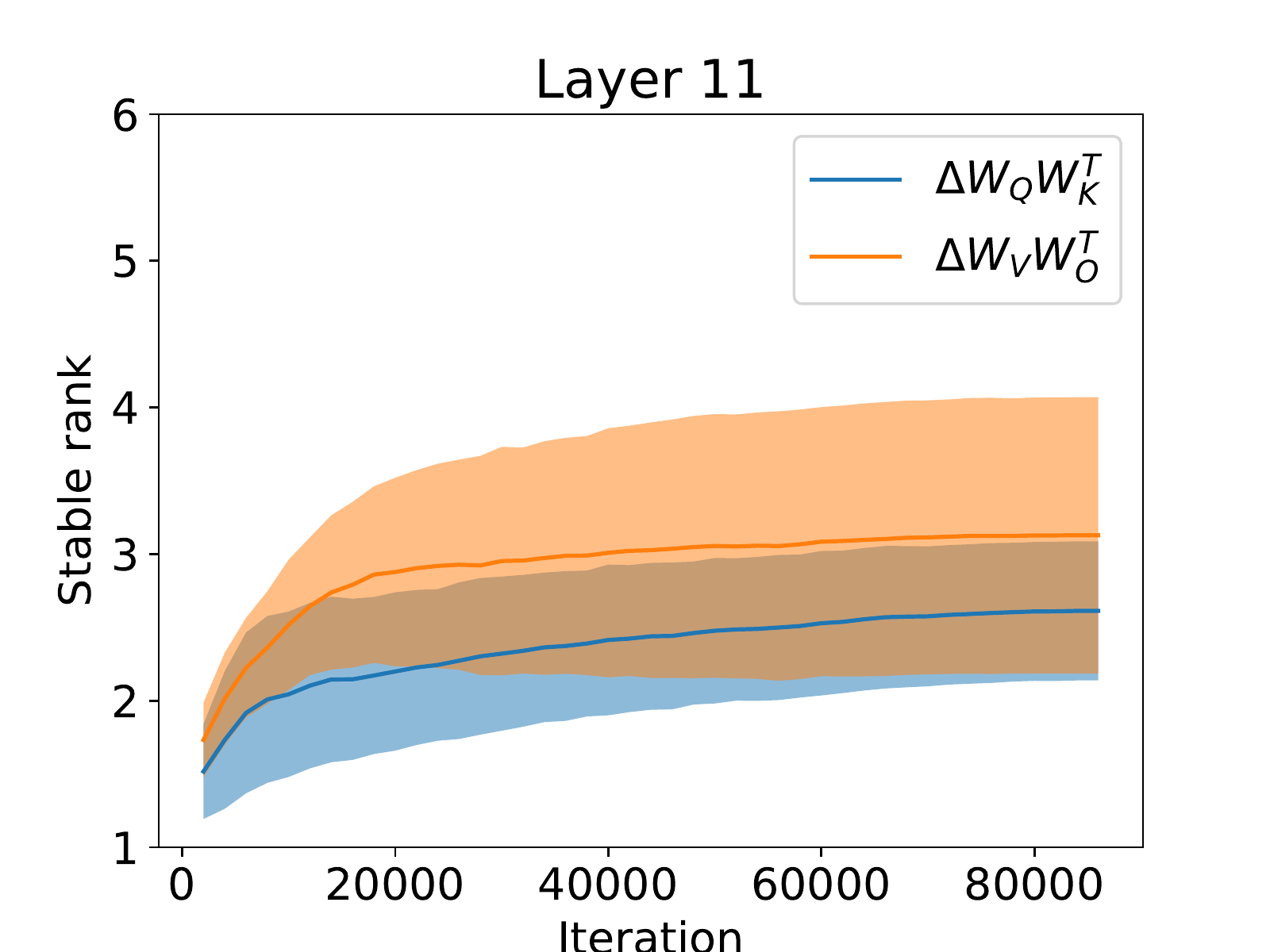} \\
\end{tabular}
\caption{Wikitext-103, GPT-2 trained with Adam: Stable rank of $\Delta\bW_V \bW_O^{\top}$ and $\Delta\bW_Q \bW_K^{\top}$, versus training iteration. Stable rank of the perturbation increases gradually, but remains small throughout training.}\label{fig:nlp-experiment}
\end{figure}

\clearpage

\section{Proof for dynamics of networks with diagonal parametrization (Theorem~\ref{thm:diag-2-greedy-untied})}\label{app:proof}

\subsection{Assumptions}\label{ssec:untied-weights-formal-assumptions}
Recall we have defined $\btheta^0,\ldots,\btheta^k,\ldots \in \R^{2p}$ as the sequence of weights such that $\btheta^0 = \bzero$ and $\btheta^{k+1}$ is defined inductively as follows. Consider the dynamics of $\bpsi^k(t,\eps) \in \R^{2p}$ initialized at $\bpsi^k(0,\eps) = \btheta^k + (\eps \be_{i_k}, \sgn(g_i(\btheta^k))\eps \be_{i_k})$ and evolving according to the gradient flow 
$\frac{d\bpsi^k(t,\eps)}{dt} = -\nabla_{\btheta} \cL(\bpsi^k)$. We assume that there is a limiting point $\btheta^{k+1}$ of these dynamics as $\epsilon$ is taken small and the time is taken large:
\begin{align*}
\lim_{\epsilon \to 0} \lim_{t \to \infty} \bpsi^k(t,\epsilon) = \btheta^{k+1} \,.
\end{align*}
Under the above assumption that this sequence $\btheta^0,\ldots,\btheta^k,\ldots$ is well-defined, we can derive a useful property of it for free. Namely, the conservation law \eqref{eq:conservation-law} implies that $\bu \odot \bu - \bv \odot \bv$ is preserved. It follows that for each $k$ we have that $\btheta^k = (\bu^k, \bv^k)$ satisfies $|\bu^k| = |\bv^k|$ entrywise. In other words, there is $\bs^k \in \{+1,-1\}^p$ satisfying
\begin{align*}
\btheta^k &= (\bu^k, \bs^k \odot \bu^k) \in \R^{2p}\,.
\end{align*}
We also abuse notation and write $\supp(\btheta^k) := \supp(\bu^k) \subseteq [p]$, since the support of $\btheta^k$ on the first $p$ coordinates matches its support on the last $p$ coordinates.

Having fixed this notation, we now recall the main assumptions of the theorem. 

\begin{assumption}[Nondegeneracy of dynamics in part (A); Assumption~\ref{ass:nondegenerate-untied}]\label{ass:nondegenerate-untied-restated}
The initialization satisfies $|u_i(0)| \neq |v_i(0)|$ for all $i$. For stage $k$, either $T_{k+1} = \infty$ or there is a unique minimizer $i_k$ to $\min_i \Delta_k(i_k)$ in \eqref{eq:winning-coordinate-and-new-time-untied}. Finally, for all $i \in \supp(\btheta^{k-1}) \setminus \supp(\btheta^k)$ we have $g_i(\btheta^k) \neq 0$.
\end{assumption}

\begin{assumption}[Stationary points are strict local minima; Assumption~\ref{ass:strict-local-minimum-untied}]\label{ass:strict-local-minimum-untied-restated}
For stage $k$, there exist $\delta_k > 0$ and $c_k > 0$ such that for $\tilde{\bu} \in B(\bu^k,\delta)$ supported on $\supp(\bu^k)$, we have
\begin{align*}
\cL(\tilde{\bu}, \bs^k \odot \tilde{\bu}) \geq c_k \|\bu^k - \tilde{\bu}\|^2\,.
\end{align*}
\end{assumption}

\begin{assumption}[Noise-robustness of dynamics in part (B); Assumption~\ref{ass:robust-dynamics-untied}]\label{ass:robust-dynamics-untied-restated}
For stage $k$, either $T_{k+1} = \infty$ or the following holds. For any $\epsilon > 0$, there are $\delta > 0$ and $\tau : \R_{> 0} \to \R$ such that the following holds. For any $\tilde{\bu} \in B(\bu^k,\delta) \cap \R_{\geq 0}^p$ supported on $\supp(\tilde{\bu}) \subseteq \supp(\bu^k) \cup \{i_k\}$, there exists a unique solution $\bpsi : [0,\infty) \to \R^p$ of the gradient flow $\frac{d\bpsi}{dt} = -\nabla_{\btheta} \cL(\bpsi)$ initialized at $\bpsi(0) = (\tilde{\bu}, \bs^{k+1} \odot \tilde{\bu})$, and at times $t \geq \tau(\tilde{u}_{i_k})$,
\begin{align*}
\|\bpsi(t) - \btheta^{k+1}\| < \epsilon\,.
\end{align*}
\end{assumption}

\subsection{Rescaling time for notational convenience}
For ease of notation, we rescale time
\begin{gather}
\bu_{\alpha}(0) = \alpha \bu(0),
\quad \bv_{\alpha}(0) = \alpha \bv(0) \nonumber \\
\frac{d\bu_{\alpha}}{dt} = \log(1/\alpha)
 \bv_{\alpha} \odot
\bg(\bu_{\alpha},\bv_{\alpha}), \quad  \frac{d\bv_{\alpha}}{dt} = \log(1/\alpha)
 \bu_{\alpha} \odot
\bg(\bu_{\alpha},\bv_{\alpha}). \label{eq:rescaled-grad-flow-untied}
\end{gather}

We also define
\begin{align*}
\btheta_{\alpha}(t) &= (\bu_{\alpha}(t), \bv_{\alpha}(t)) \in \R^{2p}\,.
\end{align*}

Because of this time-rescaling, we equivalently state Theorem~\ref{thm:diag-2-greedy-untied} as:

\begin{theorem}[Restatement of Theorem~\ref{thm:diag-2-greedy-untied}]\label{thm:diag-2-greedy-restated-untied}
Let $K \in \ZZ_{\geq 0}$ be such that Assumptions~\ref{ass:nondegenerate-untied} ~\ref{ass:strict-local-minimum-untied} hold for all $k \leq K$ and Assumption~\ref{ass:robust-dynamics-untied} holds for all $k < K$. Then for any $k \leq K$ and time $t \in (T_k, T_{k+1})$ the following holds.
There is $\alpha_0(t) > 0$ such that for all $\alpha < \alpha_0$, there exists a unique solution $\btheta_{\alpha} : [0,t] \to \R^p$ to the gradient flow \eqref{eq:rescaled-grad-flow-untied} and
\begin{align*}
\lim_{\alpha \to 0} \btheta_{\alpha}(t) \to \btheta^k\,,
\end{align*}
where at each stage $|\supp(\bu^k) \sm \supp(\bu^{k-1})| \leq 1$.
\end{theorem}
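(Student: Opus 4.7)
The plan is to proceed by strong induction on the stage number $k$, showing simultaneously that (i) $\btheta_\alpha(\ut_k) \to \btheta^k$ as $\alpha \to 0$, where $\ut_k \to T_k$, and (ii) the logarithmic-scale approximation $\log_\alpha(\bw_\alpha(\ut_k)) \to \bb^k$ holds coordinatewise, where $\bw_\alpha = \bu_\alpha + \bv_\alpha$. The base case $k=0$ is almost immediate: $\btheta_\alpha(0) = \alpha\btheta_0 \to \bzero$, and $\log_\alpha(w_{\alpha,i}(0)) = 1 + \log_\alpha(u_i(0) + v_i(0)) \to 1$ since Assumption~\ref{ass:nondegenerate-untied} guarantees $|u_i(0)| \neq |v_i(0)|$ and hence $u_i(0) + v_i(0) \neq 0$. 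The inductive step naturally splits along the two phases outlined in the intuition.

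\textbf{Phase (A): plateau.} Using the reduced dynamics $\frac{d\bw_\alpha}{dt} = \log(1/\alpha)\, \bw_\alpha \odot \bg(\btheta_\alpha)$ together with the conservation law \eqref{eq:conservation-law-application}, I would freeze a stopping time $\sigma$ as the first $t > \ut_k$ at which either $\btheta_\alpha(t)$ leaves a small neighborhood of $\btheta^k$ or some coordinate $w_{\alpha,i}$ with $i \notin \supp(\btheta^k)$ reaches a fixed small magnitude $\delta$. On $[\ut_k,\sigma]$ we have $\bg(\btheta_\alpha) = \bg(\btheta^k) + o_\alpha(1)$, so $w_{\alpha,i}(t) \approx w_{\alpha,i}(\ut_k)\exp(\log(1/\alpha)(t-\ut_k)g_i(\btheta^k))$, and solving for the hitting time of magnitude $\Theta(1)$ for each inactive coordinate yields precisely $\Delta_k(i)$ as defined in \eqref{eq:time-till-active-untied}. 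The unique-minimizer part of Assumption~\ref{ass:nondegenerate-untied} then identifies $i_k$ as the first coordinate to become active, and the accompanying condition $g_i(\btheta^k) \neq 0$ for $i \in \supp(\btheta^{k-1}) \setminus \supp(\btheta^k)$ handles the edge case of coordinates that must now shrink logarithmically. Updating the logarithmic scale through the plateau gives \eqref{eq:bb-kp1-untied}.

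\textbf{Phase (B): nonlinear saddle escape.} Let $\lt_{k+1}$ be a time slightly before $T_{k+1}$ at which the winning coordinate reaches a small magnitude $\eps > 0$ while all other inactive coordinates are still $o_\alpha(1)$. I would then compare $\btheta_\alpha$ on the interval $[\lt_{k+1},\ut_{k+1}]$, of rescaled length $O(1)$, with the auxiliary gradient flow $\bpsi^k(\cdot,\eps)$ from \eqref{eq:bz-kp1-def-untied} initialized at the noiseless point $\btheta^k + (\eps\be_{i_k}, \sgn(g_{i_k}(\btheta^k))\eps\be_{i_k})$. Because this interval has bounded length, a standard Gr\"{o}nwall estimate (using that $\cL$ is twice continuously differentiable by Assumption~\ref{ass:loss-reg-data-bounded} so the gradient is locally Lipschitz) controls $\|\btheta_\alpha(t) - \bpsi^k(t - \lt_{k+1},\eps)\|$ in terms of the $o_\alpha(1)$ initialization error. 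Assumption~\ref{ass:robust-dynamics-untied} then certifies that $\bpsi^k$ enters an arbitrarily small neighborhood of $\btheta^{k+1}$ within an $\eps$-dependent time, and taking $\eps\to 0$ closes the induction. The support growth bound $|\supp(\bu^{k+1})\setminus \supp(\bu^k)|\leq 1$ is automatic, since at the start of phase (B) only the $i_k$-th coordinate is nonnegligible outside $\supp(\btheta^k)$, and the noise-robustness assumption preserves this.

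\textbf{Main obstacle.} The delicate step is phase (A), because the plateau has rescaled duration $\Theta(1)$ but raw duration $\Theta(\log(1/\alpha))$; a direct Gr\"{o}nwall bound on $\|\btheta_\alpha - \btheta^k\|$ would blow up by a factor $\alpha^{-O(1)}$ due to the $\log(1/\alpha)$ prefactor in \eqref{eq:rescaled-grad-flow-untied}. The way to escape this is to use Assumption~\ref{ass:strict-local-minimum-untied}: on the subspace $\supp(\btheta^k)$ the loss $\cL$ is strictly locally convex at $\btheta^k$, so once $\btheta_\alpha$ enters a neighborhood the gradient flow is \emph{contractive} on that subspace and cannot drift over the long plateau, while the coordinates outside $\supp(\btheta^k)$ contribute only an $o_\alpha(1)$ perturbation to $\bg$ until the winning time. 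Combining contractivity on $\supp(\btheta^k)$ with the explicit exponential growth of inactive coordinates via a bootstrap over the stopping time $\sigma$ yields the uniform-in-$\alpha$ control needed to pass to the limit. A secondary technical point, consistent logarithmic-scale tracking of inactive coordinates across the brief phase (B), follows because their magnitudes change by at most a multiplicative $O(1)$ factor on an $O(1)$ time interval, which vanishes on the $\log_\alpha$ scale.
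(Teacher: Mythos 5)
Your overall architecture matches the paper's proof: induction over stages with the two invariants you name (closeness of $\btheta_\alpha(\ut_k)$ to $\btheta^k$ and of $\log_\alpha(\bw_\alpha(\ut_k))$ to $\bb^k$), a stopping time cutting off phase (A) when an inactive coordinate reaches a fixed small threshold, the exponential log-scale computation identifying $i_k$ and $\Delta_k(i_k)$, and a Gr\"onwall comparison plus Assumption~\ref{ass:robust-dynamics-untied} for the short nonlinear phase (B) (where the $\log(1/\alpha)$ Lipschitz factor is cancelled by the $O(1/\log(1/\alpha))$ rescaled duration, exactly as in Lemma~\ref{lem:part-b-main-lem-untied}). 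You have also correctly isolated the crux: a naive Gr\"onwall bound cannot control the drift of the active coordinates over the plateau, whose unrescaled duration is $\Theta(\log(1/\alpha))$.

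However, your resolution of that crux has a gap. You claim that Assumption~\ref{ass:strict-local-minimum-untied} makes the gradient flow \emph{contractive} near $\btheta^k$ on the support subspace, and that contractivity plus an $o_\alpha(1)$ forcing term gives uniform-in-time control. The assumption only provides quadratic growth of $\cL$ along the balanced slice $\{(\tilde\bu,\bs^k\odot\tilde\bu) : \supp(\tilde\bu)\subseteq\supp(\bu^k)\}$; it does not give a positive lower bound on the Hessian of $\cL$ in the full $(\bu,\bv)$ coordinates in which the flow actually evolves. Indeed that Hessian is necessarily degenerate along the directions associated with the conservation law \eqref{eq:conservation-law}, and the trajectory only lies near (not on) the balanced slice, so exponential stability of the flow toward $\btheta^k$ does not follow and would require a separate argument. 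The paper's Lemma~\ref{lem:stability-of-strict-local-minimum-untied} avoids contractivity entirely: it uses that $\cL(\btheta_\alpha(t))$ is nonincreasing along the gradient flow, truncates the $O(\rho)$-small inactive coordinates at a cost of $O(\rho K_{R_k})$ in loss via local Lipschitzness (Lemma~\ref{lem:locally-bounded-loss-derivs-untied}), and then observes that if the active coordinates had drifted by $\rho' = \max(24\rho,\,18\sqrt{\rho K_{R_k}/c_k})$, the quadratic growth would force the loss to exceed its value at time $\ut_k$ --- a contradiction. This sublevel-set confinement argument is what your proof needs in place of the contractivity claim; with that substitution (and the bookkeeping in Lemmas~\ref{lem:log-w-to-weight-magnitude}--\ref{lem:log-w-to-sign-v} translating $\log_\alpha(\bw_\alpha)$ bounds into bounds on $|u_{\alpha,i}|,|v_{\alpha,i}|$, plus the existence/uniqueness step via Cauchy--Lipschitz at the end), your outline coincides with the paper's proof.
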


For shorthand, we also write
$$S_k = \supp(\bu^k) \mbox{ and } S_k^c = [p] \sm \supp(\bu^k)\,.$$
\subsection{Simplifying problem without loss of generality}\label{ssec:sum-pos-wlog-untied}

For each coordinate $i \in [p]$ we have $|u_{\alpha,i}(0)| \neq |v_{\alpha,i}(0)|$ by the non-degeneracy Assumption~\ref{ass:nondegenerate-untied}. So we can assume $|u_{\alpha,i}(0)| > |v_{\alpha,i}(0)|$ without loss of generality. Furthermore, we can assume the entrywise inequality
\begin{align*}\bu_{\alpha}(0) > 0\end{align*} by otherwise training weights $\tilde{\bu}_{\alpha}(t), \tilde{\bv}_{\alpha}(t)$ initialized at $\tilde{\bu}_{\alpha}(0) = \sgn(\bu_{\alpha}(0)) \bu_{\alpha}(0)$ and $\tilde{\bv}_{\alpha}(0) = \sgn(\bv_{\alpha}(0)) \bv_{\alpha}(0)$, as $\tilde{\bu}_{\alpha}(t) \odot \tilde{\bv}_{\alpha}(t) = \bu_{\alpha}(t) \odot \bv_{\alpha}(t)$ at all times.

Since $u_{\alpha,i}^2(t) - v_{\alpha,i}^2(t) = u_{\alpha,i}^2(0) - v_{\alpha,i}^2(0)$ by the conservation law \eqref{eq:conservation-law}, it holds that $|u_{\alpha,i}(t)| > |v_{\alpha,i}(t)|$ throughout. So by continuity
\begin{align*}
\bu_{\alpha}(t) > 0
\end{align*}
throughout training.

\subsection{Tracking the sum of the weights}
We define
\begin{align*}
\bw_{\alpha}(t) = \bu_{\alpha}(t) + \bv_{\alpha}(t)\,.
\end{align*}
The reason for this definition is that during training we have
\begin{align}\label{eq:w-dynamics-untied}
\frac{d\bw_{\alpha}}{dt} = \log(1/\alpha)\bw_{\alpha} \odot \bg(\btheta_{\alpha})\,,
\end{align}
Notice that since that we have assumed $u_{\alpha,i}(0) > |v_{\alpha,i}(0)|$ for each $i \in [p]$ we have $\bw_{\alpha}(0) > 0$ entrywise. So, by \eqref{eq:w-dynamics-untied} for all $t > 0$\,,
\begin{align*}
\bw_{\alpha}(t) > 0\,.
\end{align*}

It suffices to track $\bw_{\alpha}(t)$ because we can relate the log-scale magnitude of $\bw_{\alpha}(t)$ to the magnitudes of the corresponding coordinates in $\bu_{\alpha}(t)$ and $\bv_{\alpha}(t)$ -- see technical Lemmas~\ref{lem:log-w-to-weight-magnitude}~\ref{lem:log-w-to-large-u} and \ref{lem:log-w-to-sign-v}.

\subsection{Claimed invariants in proof of Theorem~\ref{thm:diag-2-greedy-restated-untied}}\label{ssec:claimed-invariants-untied}

In order to prove Theorem~\ref{thm:diag-2-greedy-restated-untied}, we consider any gradient flow $\btheta_{\alpha} : [0,T^*] \to \R^p$ solving \eqref{eq:rescaled-grad-flow-untied} where $T^* \in (T_K,T_{K+1})$. For now, we focus only on proving properties of this gradient flow, and defer its existence and uniqueness to Section~\ref{sec:proof-untied-conclusion}.

We show the following invariants inductively on the stage $k$. For any $\epsilon > 0$, any stage $k \leq K$, there is $\alpha_k := \alpha_k(\epsilon) > 0$ such that for all $\alpha < \alpha_k$ the following holds. There are times $\ut_k := \ut_k(\alpha, \epsilon)$ and $\lt_{k+1} := \lt_{k+1}(\alpha,\epsilon)$, such that
\begin{align}\label{eq:approx-time-invariant-untied}
\ut_k &\in [T_k - \epsilon, T_k + \epsilon]\,, \\
\lt_{k+1} &\in \begin{cases}
    [T_{k+1} - \epsilon, T_{k+1} + \epsilon]\,, & \mbox{ if } T_{k+1} < \infty \\
    \{T^*\}, & \mbox{ if } T_{k+1} = \infty
\end{cases} \,. \end{align}
and the weights approximate the greedy limit for all times $t \in [\ut_{k},\lt_{k+1}]$
\begin{align}\label{eq:approx-weights-coarse-invariant-untied}
\|\btheta_{\alpha}(t) - \btheta^{k}\| < \epsilon\,,
\end{align}
and the weights at times $\ut_k$ and $\lt_{k+1}$ are correctly estimated by the incremental learning dynamics on the logarithmic-scale
\begin{align}\label{eq:approx-weights-log-invariant-untied}
\|\log_{\alpha}(\bw_{\alpha}(\ut_k)) - \bb^k\| < \epsilon\,
\end{align}
and if $T_{k+1} < \infty$ then
\begin{align}\label{eq:approx-weights-log-invariant-untied-part-2}
\|\log_{\alpha}(\bw_{\alpha}(\lt_{k+1})) - \bb^{k+1}\| < \epsilon\,.
\end{align}

\textit{Base case $k = 0$}: Take $\ut_0(\alpha,\epsilon) = 0$. Then
statement~\eqref{eq:approx-time-invariant-untied} holds since $T_0 = 0$. Notice that as $\alpha \to 0$ we have that $\bu_{\alpha}(0), \bv_{\alpha}(0) \to \bfzero = \bu^{0}$, and also $\log_{\alpha} \bw_{\alpha}(0) \to \bone = \bb^0$. So statement \eqref{eq:approx-weights-log-invariant-untied} follows if we take $\alpha_0$ small enough. In Section~\ref{ssec:utkminusone-to-ltk-untied} we show how to construct time $\lt_{1}$ such that \eqref{eq:approx-weights-coarse-invariant-untied} and \eqref{eq:approx-weights-log-invariant-untied-part-2} hold.

\textit{Inductive step}: Suppose that
\eqref{eq:approx-time-invariant-untied}, \eqref{eq:approx-weights-coarse-invariant-untied}, \eqref{eq:approx-weights-log-invariant-untied} and \eqref{eq:approx-weights-log-invariant-untied-part-2} hold for some iteration $k < K$. We prove them for iteration $k+1$. In Section~\ref{ssec:ltk-to-utk-untied} we construct time $\ut_{k}$. In Section~\ref{ssec:utkminusone-to-ltk-untied} we construct time $\lt_{k+1}$.

\subsection{Dynamics from time $\ut_k$ to time $\lt_{k+1}$ (Linear dynamics for $O(\log(1/\alpha))$ unrescaled time)}\label{ssec:utkminusone-to-ltk-untied}

Let $k \leq K$, and suppose that we know that for any $\ueps_{k} > 0$, there is $\ualpha_{k}(\ueps_{k}) > 0$ such that for all $0 < \alpha < \ualpha_{k}$, there is a time $\ut_{k} = \ut_{k}(\alpha,\ueps_{k})$ satisfying
\begin{align*}
|T_{k} - \ut_{k}| &< \ueps_{k} \\
\|\btheta_{\alpha}(\ut_{k}) - \btheta^{k}\| &< \ueps_{k} \\
\|\log_{\alpha}(\bw_{\alpha}(\ut_{k})) - \bb^{k}\| &< \ueps_{k}\,.
\end{align*}

\subsubsection{Analysis in case where $T_{k+1} < \infty$}
Consider first the case where $T_{k+1} < \infty$. We show that, for any $\leps_{k+1} > 0$, there is
$\rho_{k+1}(\leps_{k+1}) > 0$ such that for all $0 < \rho < \rho_{k+1}(\ueps_{k+1})$ there is $\lalpha_{k+1}(\rho,\leps_{k+1}) > 0$ such that for all $\alpha < \lalpha_{k+1}$, there is a time $\lt_{k+1} = \lt_{k+1}(\alpha, \rho, \leps_{k+1})$ satisfying
\begin{align}
|T_{k+1} - \lt_{k+1}| &< \leps_{k+1}\label{ineq:part-a-start-untied} \\
\|\btheta_{\alpha}(t) - \btheta^k\| &< \leps_{k+1} \mbox{ for all } t \in [\ut_k,\lt_{k+1}] \label{ineq:theta-close-part-a-untied} \\
\|\log_{\alpha}(\bw_{\alpha}(\lt_{k+1})) - \bb^{k+1}\| &< \leps_{k+1} \label{ineq:log-scale-close-part-a-untied} \\
u_{\alpha,i_k}(\lt_{k+1}) &\in [\rho,3\rho]\,, \label{ineq:part-a-geq-rho-k-untied} \\
\sgn(v_{\alpha,i_k}(\lt_{k+1})) &= s_{i_k}^{k+1}\,.  \label{eq:sign-v-untied-first}
\end{align}

For any $\rho,\alpha$, let $\ueps_k = \rho \leps_{k+1} / (4p)$ and choose $\ut_{k} = \ut_{k}(\alpha,\ueps_{k})$. Then define
\begin{align}\label{eq:close-on-inactive-coordinates-untied}
\lt_{k+1} &= \lt_{k+1}(\alpha,\rho,\leps_{k+1}) \\
&= \inf \{t \in [\ut_{k},\infty) : \|\bu_{\alpha,S_{k}^c}(t) - \bu_{\alpha,S_{k}^c}(\ut_{k})\| + \|\bv_{\alpha,S_{k}^c}(t) - \bv_{\alpha,S_{k}^c}(\ut_{k})\| > 4\rho\} \nonumber\,.
\end{align}

Now we show that the weights $\btheta_{\alpha}(t)$ cannot move much from time $\ut_{k}$ to $\lt_{k+1}$. The argument uses the local Lipschitzness of the loss $\cL$ (from technical Lemma~\ref{lem:locally-bounded-loss-derivs-untied}), and the strictness of $\btheta^{k}$ as a stationary point (from Assumption~\ref{ass:strict-local-minimum-untied}).

\begin{lemma}[Stability of active variables
 during part (A) of dynamics]\label{lem:stability-of-strict-local-minimum-untied}
There is $\rho_{k+1}$ small enough and $\lalpha_{k+1}(\rho)$ small enough depending on $\rho$,such that for all $\rho < \rho_{k+1}$ and $\alpha < \lalpha_{k+1}$ and all $t \in [\ut_{k},\lt_{k+1})$,
\begin{align}\label{eq:weights-stable-strict-local-untied}
\|\btheta_{\alpha}(t) - \btheta^k\| < \rho' := \max(24\rho, 18\sqrt{\rho K_{R_k} / c_{k}})\,.
\end{align}
where $c_{k}$ is the strict-minimum constant from Assumption~\ref{ass:strict-local-minimum-untied} and $K_{R_k}$ is the Lipschitzness constant from Lemma~\ref{lem:locally-bounded-loss-derivs-untied} for the ball of radius $R_k = \|\btheta^k\| + 1$.
\end{lemma}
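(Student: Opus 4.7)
The plan is to combine the monotone decrease of the loss along gradient flow with the local Lipschitzness from Lemma~\ref{lem:locally-bounded-loss-derivs-untied} and the quadratic lower bound of Assumption~\ref{ass:strict-local-minimum-untied}, run inside a continuity/bootstrap. The key observation is that, by the very definition of $\lt_{k+1}$, the coordinates outside $S_k$ move by at most $4\rho$ over the interval, so the entire problem reduces to controlling the active coordinates on $S_k$, for which the strict-local-min assumption is tailor-made once one \emph{projects} onto the constraint surface $\bv = \bs^k \odot \bu$ and transfers loss values between the actual and projected points via Lipschitzness.

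Concretely, take $\ueps_k := \rho \leps_{k+1}/(4p) \ll \rho$, so the inductive hypothesis $\|\btheta_\alpha(\ut_k) - \btheta^k\| < \ueps_k$ combined with local Lipschitzness yields $\cL(\btheta_\alpha(\ut_k)) - \cL(\btheta^k) \leq K_{R_k}\ueps_k$; this bound persists on $[\ut_k, \lt_{k+1})$ since gradient flow is non-increasing in loss. Define $\tilde{\bu}(t) \in \R^p$ to agree with $\bu_\alpha(t)$ on $S_k$ and vanish on $S_k^c$, and put $\tilde{\btheta}(t) := (\tilde{\bu}(t), \bs^k \odot \tilde{\bu}(t))$. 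The distance $D(t) := \|\btheta_\alpha(t) - \tilde{\btheta}(t)\|$ splits into an $S_k^c$-piece bounded by $\|\bu_{\alpha,S_k^c}(t)\| + \|\bv_{\alpha,S_k^c}(t)\| \leq 4\rho + 2\ueps_k$ (by the definition of $\lt_{k+1}$ and initial closeness), and an $S_k$-piece equal to $\|\bv_{\alpha,S_k}(t) - (\bs^k)_{S_k}\odot \bu_{\alpha,S_k}(t)\|$. The conservation law \eqref{eq:conservation-law} gives $u_{\alpha,i}^2(t) - v_{\alpha,i}^2(t) = O(\alpha^2)$, and under a bootstrap hypothesis $u_{\alpha,i}(t) \geq \tfrac{1}{2} u_i^k$ on $S_k$, this forces $|v_{\alpha,i}(t)| = u_{\alpha,i}(t) + O(\alpha^2)$ with sign frozen at $s_i^k$ (initial-time agreement plus $|v_{\alpha,i}|$ bounded away from zero forbids a zero crossing by continuity). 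Hence $D(t) \leq 9\rho$ for $\alpha$ small enough. Lipschitz transfer gives $\cL(\tilde{\btheta}(t)) - \cL(\btheta^k) \leq K_{R_k}(\ueps_k + D(t)) \leq 10 K_{R_k}\rho$; Assumption~\ref{ass:strict-local-minimum-untied} yields $c_k\|\tilde{\bu}(t) - \bu^k\|^2 \leq 10 K_{R_k}\rho$ and so $\|\tilde{\btheta}(t) - \btheta^k\| \leq 2\sqrt{10 K_{R_k}\rho/c_k}$. The triangle inequality then gives $\|\btheta_\alpha(t) - \btheta^k\| \leq 9\rho + 2\sqrt{10 K_{R_k}\rho/c_k}$, which is at most $\max(24\rho, 18\sqrt{K_{R_k}\rho/c_k})$ as required.

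To close the bootstrap, set $\rho'' := \min(1, \delta_k, \tfrac{1}{2}\min_{i \in S_k} u_i^k)$ and pick $\rho_{k+1}$ small enough that $\max(24\rho, 18\sqrt{K_{R_k}\rho/c_k}) < \rho''/2$ for all $\rho < \rho_{k+1}$, and $\lalpha_{k+1}(\rho)$ small enough that the $O(\alpha^2)$ conservation-law error is negligible. Then $\tau^* := \sup\{t \in [\ut_k,\lt_{k+1}) : \|\btheta_\alpha(s) - \btheta^k\| \leq \rho'' \text{ for all } s \in [\ut_k,t]\}$ satisfies $\tau^* > \ut_k$ (since $\ueps_k < \rho''$) and is forced to equal $\lt_{k+1}$: at any $t < \tau^*$ the derivation above yields the strictly smaller bound $\rho' < \rho''/2$, so continuity rules out $\tau^* < \lt_{k+1}$. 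The main obstacle is precisely this interlocking bootstrap: three conditions (local Lipschitzness on $B(\btheta^k, R_k)$, the quadratic lower bound on $B(\bu^k, \delta_k)$, and the lower bound $u_{\alpha,i} \geq \tfrac{1}{2} u_i^k$ on $S_k$ that powers both the conservation-law transfer and the exclusion of sign flips of $v_{\alpha,i}$) must be simultaneously maintained, and the choices of $\rho_{k+1}$ and $\lalpha_{k+1}(\rho)$ must be engineered so that the output of the quadratic-bound argument strictly improves on the bootstrap hypothesis. The conservation-law sign argument is especially delicate: it relies both on initial-time sign agreement inherited from $\|\btheta_\alpha(\ut_k) - \btheta^k\| < \ueps_k$ and on $|v_{\alpha,i}|$ being bounded away from zero throughout, which itself is part of the bootstrap to be verified.
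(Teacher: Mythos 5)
Your proof is correct and follows essentially the same route as the paper's: loss monotonicity along the gradient flow plus Lipschitz transfer to a point supported on $S_k$ to which Assumption~\ref{ass:strict-local-minimum-untied} applies, with the definition of $\lt_{k+1}$ controlling the inactive coordinates; your bootstrap is just the continuity form of the paper's first-violation-time contradiction. The one place you go beyond the paper is in explicitly using the conservation law and a sign-freezing argument to justify replacing $\bv_{\alpha,S_k}(t)$ by $\bs^k_{S_k}\odot\bu_{\alpha,S_k}(t)$ before invoking the assumption---a step the paper's proof glosses over by applying the assumption to $(\tilde{\bu},\tilde{\bv})$ with $\tilde{\bv}$ taken directly from the trajectory---which is a tightening rather than a divergence.
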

\begin{proof}
Assume by contradiction that \eqref{eq:weights-stable-strict-local-untied} is violated at some time $t < \lt_{k+1}$. Let us choose the first such time $$t^* = \inf \{t \in [\ut_{k},\lt_{k+1}) : \|\bu_{\alpha}(t^*) - \bu^{k}\| + \|\bv_{\alpha}(t^*) - \bs^k \odot \bu^{k}\|  \geq \rho'\}\,.$$

 Define $\tilde{\btheta} = (\tilde{\bu}, \tilde{\bv})$ by
\begin{align*}
\tilde{u}_i = \begin{cases} u_{\alpha,i}(t^*), & i \in S_{k} \\
0, & i \not\in S_{k}
\end{cases} \quad \mbox{ and } \quad \tilde{v}_i = \begin{cases} v_{\alpha,i}(t^*), & i \in S_{k} \\
0, & i \not\in S_{k}
\end{cases} \,.
\end{align*}

By the definition of $\lt_{k+1}$, this satisfies
\begin{align*}
\|\tilde{\bu} - \bu_{\alpha}(t^*)\| &= \| \bu_{\alpha,S_{k}^c}(t^*)\| \leq 4\rho + \| \bu_{\alpha,S_{k}^c}(\ut_{k})\| \leq 4\rho + \leps_k < 5 \rho\, , \\
\|\tilde{\bv} - \bv_{\alpha}(t^*)\| &= \| \bv_{\alpha,S_{k}^c}(t^*)\| \leq 4\rho + \| \bv_{\alpha,S_{k}^c}(\ut_{k})\| \leq 4\rho + \leps_k < 5 \rho\, .
\end{align*}

Also
\begin{align*}
\|\tilde\bu - \bu^{k}\| + \|\tilde\bv - \bs^k \odot \bu^{k}\| &= \|\bu_{\alpha,S_k}(t^*) - \bz_{S_k}^{k}\| + \|\bv_{\alpha,S_k}(t^*) - \bs^k_{S_k} \odot \bz_{S_k}^{k}\| \geq \rho' - 10 \rho \geq \rho' / 2 \,.
\end{align*}

Using (a) the strict minimum Assumption~\ref{ass:strict-local-minimum-untied} with constant $c_{k}$, since $\|\tilde{\btheta} - \btheta^k\| \leq \rho'$ and we take $\rho'$ small enough,
\begin{align*}
\cL(\btheta_{\alpha}(t^*)) &\geq \cL(\tilde{\btheta}) - 4\rho  K_{R_k} \stackrel{(a)}{\geq} \cL(\btheta^k) -  4\rho K_{R_k} + \frac{c_{k} (\rho')^2}{16} \\
&\geq \cL(\btheta_{\alpha}(\ut_k)) - (4\rho + \ueps_k) K_{R_k} + \frac{c_{k} (\rho')^2}{16} > \cL(\btheta_{\alpha}(\ut_k)) \,.
\end{align*}
This is a contradiction because $\cL$ is nondecreasing along the gradient flow.
\end{proof}

\begin{lemma}[Log-scale approximation is correct during part (A)]\label{lem:log-scale-weight-approx-untied}
There are functions $\rho_{k+1}(\leps_{k+1}) > 0$ and $\lalpha_{k+1}(\rho,\leps_{k+1}) > 0$ such that for all $\rho < \rho_{k+1}$ and $\alpha < \lalpha_{k+1}$, and for all $t \in (\ut_k, \lt_{k+1})$ we have for a constant $C$ depending on $k$,
\begin{align}\label{eq:log-scale-approx-lemma-untied}
\|\log_{\alpha}(\bw_{\alpha}(t)) - \bb^{k} + (t - \ut_{k}) \bg(\btheta^{k})\| < \rho \leps_{k+1} + C\rho'(t - \ut_{k})\,.
\end{align}
Furthermore, for all $i \in S_k^c$ and $t \in (\ut_k, \lt_{k+1})$ we have
\begin{align}\label{eq:signs-equal-untied}
\sgn(g_i(\btheta_{\alpha}(t))) = \sgn(g_i(\btheta^k)).
\end{align}
\end{lemma}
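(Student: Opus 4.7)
The plan is to convert the ODE \eqref{eq:w-dynamics-untied} for $\bw_\alpha$ into an exact integral equation for $\log_\alpha(\bw_\alpha)$, and then use the stability guarantee of Lemma~\ref{lem:stability-of-strict-local-minimum-untied} together with local Lipschitzness of $\bg$ to control the deviation between $\int \bg(\btheta_\alpha(s))\,ds$ and $(t - \ut_k)\bg(\btheta^k)$.

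First I would use that $\bw_\alpha(t) > 0$ coordinatewise (established in Section~\ref{ssec:sum-pos-wlog-untied}) to take logarithms of \eqref{eq:w-dynamics-untied} and divide by $\log(\alpha) = -\log(1/\alpha)$, obtaining the clean ODE
\begin{align*}
\frac{d}{dt}\log_\alpha(\bw_\alpha(t)) = -\bg(\btheta_\alpha(t)).
\end{align*}
Integrating from $\ut_k$ to $t$ and rearranging yields
\begin{align*}
\log_\alpha(\bw_\alpha(t)) - \bb^k + (t-\ut_k)\bg(\btheta^k) = \big(\log_\alpha(\bw_\alpha(\ut_k)) - \bb^k\big) - \int_{\ut_k}^{t}\big(\bg(\btheta_\alpha(s)) - \bg(\btheta^k)\big)\,ds.
\end{align*}
The first term on the right has norm at most $\ueps_k = \rho\leps_{k+1}/(4p) \leq \rho\leps_{k+1}$ by the inductive hypothesis \eqref{eq:approx-weights-log-invariant-untied} on $\ut_k$. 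For the integrand, Lemma~\ref{lem:stability-of-strict-local-minimum-untied} gives $\|\btheta_\alpha(s) - \btheta^k\| < \rho'$ throughout $[\ut_k, \lt_{k+1})$ provided $\rho < \rho_{k+1}$ and $\alpha < \lalpha_{k+1}(\rho)$; combining with the local Lipschitzness of $\bg$ on the ball $B(\btheta^k, 1)$ supplied by Lemma~\ref{lem:locally-bounded-loss-derivs-untied} yields $\|\bg(\btheta_\alpha(s)) - \bg(\btheta^k)\| \leq C\rho'$ for a constant $C = C(k)$. The triangle inequality then produces \eqref{eq:log-scale-approx-lemma-untied}.

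For the sign statement \eqref{eq:signs-equal-untied} I would invoke continuity of each $g_i$: shrinking $\rho_{k+1}$ further so that $\rho'$ is smaller than the sign-preservation radius around $\btheta^k$ for each of the finitely many $i \in S_k^c$ with $g_i(\btheta^k) \neq 0$ gives the desired sign equality; coordinates with $g_i(\btheta^k) = 0$ have $\Delta_k(i) = \infty$ and play no role in the subsequent construction of $\lt_{k+1}$. The main obstacle is not the analysis itself --- the integral estimate is essentially a direct application of the fundamental theorem of calculus on an interval of length $O(1)$ in rescaled time --- but the coordination of the threshold hierarchy $\leps_{k+1} \mapsto \rho_{k+1} \mapsto \lalpha_{k+1}$, so that $\rho'$ is simultaneously small enough for the strict-local-minimum argument of Lemma~\ref{lem:stability-of-strict-local-minimum-untied}, for the Lipschitz estimate on $\bg$, and for sign preservation, while the cumulative drift $C\rho'(t-\ut_k)$ stays within the desired $\leps_{k+1}$-scale over the stage's duration $T_{k+1} - T_k + O(\leps_{k+1})$.
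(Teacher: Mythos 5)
Your proof is correct and uses the same two key ingredients as the paper's: the $\rho'$-stability of $\btheta_\alpha$ from Lemma~\ref{lem:stability-of-strict-local-minimum-untied} combined with the local Lipschitzness of $\bg$ (Lemma~\ref{lem:locally-bounded-loss-derivs-untied}), plus the positivity of $\bw_\alpha$; the only difference is presentational, as the paper sandwiches $\bw_\alpha(t)$ between two exponentials $\bw_\alpha(\ut_k)e^{(t-\ut_k)\log(1/\alpha)\lbg}$ and $\bw_\alpha(\ut_k)e^{(t-\ut_k)\log(1/\alpha)\ubg}$ and then takes $\log_\alpha$, whereas you integrate the ODE for $\log_\alpha(\bw_\alpha)$ directly and apply the triangle inequality to $\int_{\ut_k}^t(\bg(\btheta_\alpha(s))-\bg(\btheta^k))\,ds$. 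Your treatment of the sign claim (restricting attention to coordinates with $g_i(\btheta^k)\neq 0$) matches how the claim is actually used downstream and is, if anything, slightly more careful than the paper's.
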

\begin{proof}
By Lemma~\ref{lem:stability-of-strict-local-minimum-untied} and Lemma~\ref{lem:smoothness-of-deriv-untied}, there is a constant $C$ depending on $\btheta^{k}$ such that for all $t \in (\ut_{k},\lt_{k+1})$,
\begin{align*}
\|\bg(\btheta_{\alpha}(t)) - \bg(\btheta^{k})\| \leq C\rho'\,.
\end{align*}
For shorthand, write $\ubg(\btheta^{k}) = \bg(\btheta^{k}) + C\rho' \bones$ and $\lbg(\btheta^{k}) = \bg(\btheta^{k}) - C\rho' \bones$.
Since $\bw_{\alpha}(t) > 0$ entrywise as we have assumed without loss of generality (see Section~\ref{ssec:sum-pos-wlog-untied}), we have the following entrywise inequalities
\begin{align}\label{eq:entrywise-g-appx-untied}
\lbg(\btheta^{k}) \odot \bw_{\alpha}(t) < \bg(\btheta_{\alpha}(t)) \odot \bw_{\alpha}(t) < \ubg(\btheta^{k}) \odot \bw_{\alpha}(t)\,.
\end{align}

Since the dynamics are given by $\frac{d\bw_{\alpha}}{dt} = \log(1/\alpha) \bg(\bw_{\alpha}) \odot \bw_{\alpha}$,
\begin{align*}
\bw_{\alpha}(\ut_{k})e^{(t - \ut_{k})\log(1/\alpha)\lbg(\btheta^{k})} \leq \bw_{\alpha}(t) \leq \bw_{\alpha}(\ut_{k})e^{(t - \ut_{k})\log(1/\alpha)\ubg(\btheta^{k})} \,.
\end{align*}
Taking the logarithms with base $\alpha \in (0,1)$,
\begin{align*}
(t - \ut_{k})\lbg(\bu^{k}) \leq \log_{\alpha}(\bw_{\alpha}(\ut_{k})) - \log_{\alpha} (\bw_{\alpha}(t)) \leq (t - \ut_{k})\ubg(\bu^{k})\,.
\end{align*}
The bound \eqref{eq:log-scale-approx-lemma-untied}
follows since $\|\log_{\alpha}(\bw_{\alpha}(\ut_{k})) - \bb^{k}\| < \ueps_{k} < \rho \leps_{k+1}$.

Finally, the claim \eqref{eq:signs-equal-untied} follows from \eqref{eq:entrywise-g-appx-untied} since $\sgn(\ubg(\btheta^k)) = \sgn(\lbg(\btheta^k)) = \sgn(\bg(\btheta^k))$ if we take $\rho$ small enough.
\end{proof}

First, we show that the weights must move significantly by time roughly $T_{k+1}$. This is because of the contribution of coordinate $i_k$.

\begin{lemma}[$\lt_{k+1}$ is not much larger than $T_{k+1}$]\label{lem:actual-kp1-time-not-much-larger-than-truth-untied}
Suppose that $T_{k+1} < \infty$. Then there are $\rho_{k+1}(\leps_{k+1}) > 0$ and $\lalpha_{k+1}(\rho,\leps_{k+1}) > 0$ such that for all $\rho < \rho_{k+1}$ and $\alpha < \lalpha_{k+1}$, the following holds.
\begin{align*}
\lt_{k+1} < T_{k+1} + \leps_{k+1}\,.
\end{align*}
\end{lemma}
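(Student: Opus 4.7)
The plan is to argue by contradiction. Suppose $\lt_{k+1} \ge T_{k+1} + \leps_{k+1}$; I will exhibit the time $t^\star := T_{k+1} + \leps_{k+1}/2 \in (\ut_k,\lt_{k+1})$ at which the defining stopping condition of $\lt_{k+1}$ in \eqref{eq:close-on-inactive-coordinates-untied} is already violated. Before using any dynamical estimate I would first record that $i_k \in S_k^c$: for each $i \in S_k$ the conservation law forces $|u_i^k| = |v_i^k|$ to both be nonzero, and since $\btheta^k$ is a stationary point of the gradient flow, this implies $g_i(\btheta^k) = 0$ and $\Delta_k(i) = \infty$; the uniqueness part of Assumption~\ref{ass:nondegenerate-untied} then forces $i_k \in S_k^c$.

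Since $t^\star \in (\ut_k,\lt_{k+1})$ under the contradiction hypothesis, Lemma~\ref{lem:log-scale-weight-approx-untied} applies at $t^\star$. Using $|\ut_k - T_k| \le \ueps_k \le \rho\leps_{k+1}/(4p)$ and the identity $b_{i_k}^{k+1} = b_{i_k}^k - \Delta_k(i_k)\,g_{i_k}(\btheta^k)$ from \eqref{eq:bb-kp1-untied}, the $i_k$-coordinate of the log-scale estimate rearranges to
\[
\log_\alpha\!\bigl(w_{\alpha,i_k}(t^\star)\bigr) \;=\; b_{i_k}^{k+1} \;-\; \tfrac{\leps_{k+1}}{2}\,g_{i_k}(\btheta^k) \;+\; E,
\]
with $|E| \le C\rho\leps_{k+1}$ for a $\btheta^k$-dependent constant $C$ (combining the $\rho\leps_{k+1}$ and the $\rho'(t^\star-\ut_k)$ terms from Lemma~\ref{lem:log-scale-weight-approx-untied}, with $\rho'\to 0$ as $\rho\to 0$ by Lemma~\ref{lem:stability-of-strict-local-minimum-untied}). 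Shrinking $\rho_{k+1}$ so that $C\rho < |g_{i_k}(\btheta^k)|/4$, the main term dominates the error.

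Next I turn this into a violation of the $4\rho$ threshold by splitting on the sign of $g_{i_k}(\btheta^k)$. If $g_{i_k}(\btheta^k) > 0$ then $b_{i_k}^{k+1} = 0$, so $\log_\alpha(w_{\alpha,i_k}(t^\star)) \le -\leps_{k+1}|g_{i_k}|/4 < 0$ and hence $w_{\alpha,i_k}(t^\star) \to \infty$ as $\alpha\to 0$, which (since $\bu_\alpha > 0$ componentwise as arranged in Section~\ref{ssec:sum-pos-wlog-untied}) forces $u_{\alpha,i_k}(t^\star),v_{\alpha,i_k}(t^\star) = \Omega(1)$. If $g_{i_k}(\btheta^k) < 0$ then $b_{i_k}^{k+1} = 2$, so $w_{\alpha,i_k}(t^\star) \le \alpha^{2+\leps_{k+1}|g_{i_k}|/4}$; then the conservation law $u_{\alpha,i_k}^2 - v_{\alpha,i_k}^2 = \Theta(\alpha^2)$ gives $|u_{\alpha,i_k}(t^\star) - v_{\alpha,i_k}(t^\star)| = \Theta(\alpha^2)/w_{\alpha,i_k}(t^\star) \to \infty$, so one of $|u_{\alpha,i_k}|, |v_{\alpha,i_k}|$ diverges. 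Since $\|\bu_{\alpha,S_k^c}(\ut_k)\|, \|\bv_{\alpha,S_k^c}(\ut_k)\| \le \ueps_k \ll \rho$, in either case the $i_k$-coordinate alone contributes more than $4\rho$ to the sum in \eqref{eq:close-on-inactive-coordinates-untied}, contradicting $t^\star < \lt_{k+1}$.

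The main obstacle will be the $g_{i_k}(\btheta^k) < 0$ case: there the ``winning'' coordinate's sum-of-weights $w$ actually \emph{shrinks} during stage $k{+}1$, so the intuitive ``a dormant coordinate wakes up'' picture gives no direct movement bound, and one must instead detect the blow-up of $(u_{\alpha,i_k},v_{\alpha,i_k})$ indirectly via the conservation law. Formalizing this will require the technical log-to-magnitude translation lemmas anticipated in Section~\ref{ssec:utkminusone-to-ltk-untied} (e.g.\ Lemma~\ref{lem:log-w-to-large-u}), and the only piece of real bookkeeping is choosing $\rho_{k+1}$ so that the error $E$ stays below $\leps_{k+1}|g_{i_k}(\btheta^k)|/4$ uniformly, given that $\rho'$ itself depends on $\rho$ through Lemma~\ref{lem:stability-of-strict-local-minimum-untied}.
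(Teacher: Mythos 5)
Your proposal is correct and follows essentially the same route as the paper: argue by contradiction, use Lemma~\ref{lem:log-scale-weight-approx-untied} together with $b_{i_k}^k-\Delta_k(i_k)g_{i_k}(\btheta^k)\in\{0,2\}$ to show $\log_\alpha(w_{\alpha,i_k})$ exits $(-\delta,2+\delta)$ by a margin independent of $\alpha$, and then translate this into $|u_{\alpha,i_k}|$ or $|v_{\alpha,i_k}|$ becoming $\Omega(1)$, contradicting the defining threshold of $\lt_{k+1}$ — your two-case analysis via the conservation law is exactly the content of the paper's Lemma~\ref{lem:log-w-to-large-u}, which the paper invokes as a black box. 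The only nit is that the error term $E$ is $O(\sqrt{\rho})$ rather than $O(\rho\leps_{k+1})$ (since $\rho'=\Theta(\sqrt{\rho})$), but this does not affect the argument because you only need $E\to 0$ as $\rho\to 0$.
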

\begin{proof}
Assume by contradiction that $\lt_{k+1} < T_{k+1} + \leps_{k+1}$. For all times $t \in [\ut_k, \min(\lt_{k+1},T_{k+1}+\leps_{k+1})]$, by Lemma~\ref{lem:log-scale-weight-approx-untied},
\begin{align*}
|\log_{\alpha}(w_{\alpha,i_k}(t)) - b_{i_k}^t + (t - \ut_k) g_{i_k}(\btheta^k)| < O(\sqrt{\rho})\,.
\end{align*}
Since we know $|\Delta_k(i_k) - (T_{k+1}-\ut_k)| < \ueps_k$ and $b_i^k - \Delta_k(i_k) g_{i_k}(\btheta^k) \in \{0,2\}$, it follows that
\begin{align*}
\log_{\alpha}(w_{\alpha,i_k}(T_{k+1} + \leps_{k+1})) \not\in (-|g_{i_k}(\btheta^k)| (\leps_{k+1} - \ueps_{k+1}),2+|g_{i_k}(\btheta^k)| (\leps_{k+1} - \ueps_{k+1})) + O(\sqrt{\rho}).
\end{align*}
By taking $\rho$ small enough, we see that $|g_{i_k}(\btheta^k)| (\leps_{k+1} - \ueps_{k+1}) + O(\sqrt{\rho}) > \delta > 0$ for some $\delta > 0$ that is independent of $\alpha$, so
\begin{align*}
\log_{\alpha}(w_{\alpha,i_k}(T_{k+1} + \leps_{k+1})) \not\in (-\delta,2+\delta)\,.
\end{align*}
So $|u_{\alpha,i_k}(T_{k+1} + \leps_{k+1})| > 1$ by Lemma~\ref{lem:log-w-to-large-u}. But by the construction of $\lt_{k+1}$ this means that $\lt_{k+1} < T_{k+1} + \leps_{k+1}$.
\end{proof}

Next, we show that until time $\lt_{k+1}$, none of the coordinates in $S_k^c$ move significantly, with the possible exception of coordinate $i_k$.

\begin{lemma}[No coordinates in $S_k^c \sm \{i_k\}$ move significantly during part (A)]\label{lem:other-coords-lose-race-untied}
Suppose $T_{k+1} < \infty$. Then there are $\rho_{k+1}(\leps_{k+1}) > 0$ and $\lalpha_{k+1}(\rho,\leps_{k+1}) > 0$ such that for all $\rho < \rho_{k+1}$ and $\alpha < \lalpha_{k+1}$, the following holds. There is a constant $c > 0$ depending on $k$ such that for all $i \in S_k^c \sm \{i_k\}$ and $t \in [\ut_k,\lt_{k+1}]$,
\begin{align*}
|u_{\alpha,i}(t) - u_{\alpha,i}(\ut_k)|, |v_{\alpha,i}(t) - v_{\alpha,i}(\ut_k)| < \alpha^{c} + \ueps_k\,.
\end{align*}
\end{lemma}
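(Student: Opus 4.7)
The plan is to combine Lemma~\ref{lem:log-scale-weight-approx-untied} with the strict minimality of $i_k$ in $\Delta_k$ (Assumption~\ref{ass:nondegenerate-untied}) to show that $w_{\alpha,i}(t)$ stays polynomially small in $\alpha$ for every $i \in S_k^c \setminus \{i_k\}$, and then to turn this into a bound on $u_{\alpha,i}$ and $v_{\alpha,i}$ via the conservation-law formula \eqref{eq:conservation-law-application}.

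Define the uniform gap
$$\delta := \min_{i \in S_k^c \setminus \{i_k\},\, g_i(\btheta^k) \neq 0} \bigl(\Delta_k(i) - \Delta_k(i_k)\bigr) > 0,$$
which is positive by Assumption~\ref{ass:nondegenerate-untied}. By Lemma~\ref{lem:actual-kp1-time-not-much-larger-than-truth-untied}, $t - \ut_k \le \Delta_k(i_k) + \leps_{k+1} + \ueps_k$ on $[\ut_k,\lt_{k+1}]$, so Lemma~\ref{lem:log-scale-weight-approx-untied} yields
$$\log_\alpha(w_{\alpha,i}(t)) = b_i^k - (t - \ut_k)\, g_i(\btheta^k) + E_i(t),$$
with $|E_i(t)| \le \rho \leps_{k+1} + C\rho'(\Delta_k(i_k) + \leps_{k+1} + \ueps_k)$, which can be driven to zero by first taking $\leps_{k+1}$ small and then $\rho$ small. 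I would sandwich $\log_\alpha(w_{\alpha,i}(t))$ strictly between two constants $0 < c_1 \le c_2 < 2$ uniformly in $t$. For the lower bound: if $g_i(\btheta^k) \le 0$ the drift is nonnegative, so $\log_\alpha(w_{\alpha,i}(t)) \ge b_i^k - |E_i(t)| \ge b_i^k/2$, positive because $b_i^k > 0$ for $i \in S_k^c$; if $g_i(\btheta^k) > 0$, the identity $b_i^k = \Delta_k(i)\, g_i(\btheta^k)$ from \eqref{eq:time-till-active-untied} combined with the gap gives $\log_\alpha(w_{\alpha,i}(t)) \ge (\Delta_k(i) - \Delta_k(i_k))\, g_i(\btheta^k) - O(\text{err}) \ge \delta g_i(\btheta^k)/2$. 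The upper bound is symmetric: if $g_i(\btheta^k) \ge 0$, use $b_i^k < 2$ (discussed below); if $g_i(\btheta^k) < 0$, the identity $-\Delta_k(i)\, g_i(\btheta^k) = 2 - b_i^k$ together with the gap yields $\log_\alpha(w_{\alpha,i}(t)) \le 2 - \delta |g_i(\btheta^k)|/2$.

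With $\log_\alpha(w_{\alpha,i}(t)) \in [c_1,c_2] \subseteq (0,2)$, we have $\alpha^{c_2} \le w_{\alpha,i}(t) \le \alpha^{c_1}$. The conservation-law formula \eqref{eq:conservation-law-application}, applied with rescaled initial data $(\alpha \bu(0), \alpha \bv(0))$, gives
$$u_{\alpha,i}(t),\, v_{\alpha,i}(t) \;=\; \tfrac{1}{2}\Bigl(w_{\alpha,i}(t) \,\pm\, \tfrac{\alpha^2 (u_i^2(0) - v_i^2(0))}{w_{\alpha,i}(t)}\Bigr),$$
so both $|u_{\alpha,i}(t)|$ and $|v_{\alpha,i}(t)|$ are bounded by $\tfrac{1}{2}\bigl(\alpha^{c_1} + |u_i^2(0) - v_i^2(0)|\, \alpha^{2 - c_2}\bigr) = O(\alpha^{\min(c_1,\, 2 - c_2)})$. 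Thus for some constant $c > 0$ depending only on $k$ and all $\alpha$ small enough, $|u_{\alpha,i}(t)|, |v_{\alpha,i}(t)| \le \alpha^c$, and a triangle inequality with the inductive bound $|u_{\alpha,i}(\ut_k)|, |v_{\alpha,i}(\ut_k)| \le \ueps_k$ yields the claim (after possibly shrinking $c$).

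The main obstacle is securing the strict upper bound $c_2 < 2$, which rests on proving $b_i^k < 2$ for $i \in S_k^c$. This is not immediate from the algorithm but follows from the inductive invariant \eqref{eq:approx-weights-coarse-invariant-untied}: if instead $b_i^k \ge 2$, then $w_{\alpha,i}(\ut_k) \lesssim \alpha^{2 - \ueps_k}$, and the conservation law forces $|u_{\alpha,i}(\ut_k) - v_{\alpha,i}(\ut_k)| \gtrsim \alpha^{\ueps_k}\, |u_i^2(0) - v_i^2(0)|$, which tends to the positive constant $|u_i^2(0) - v_i^2(0)| > 0$ (by Assumption~\ref{ass:nondegenerate-untied}) as $\alpha \to 0$, contradicting $|u_{\alpha,i}(\ut_k)|, |v_{\alpha,i}(\ut_k)| < \ueps_k$ once $\ueps_k$ is small. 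A second subtlety is parameter ordering: $\leps_{k+1}$ must be chosen before $\rho$ and $\rho$ before $\alpha$ so that the error $E_i(t)$ and the gap terms are simultaneously controllable.
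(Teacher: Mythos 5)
Your argument for the generic case $b_i^k \in (0,2)$ is essentially the paper's: use the strict uniqueness of the minimizer $i_k$ to sandwich $\log_\alpha(w_{\alpha,i}(t))$ in $(c,2-c)$ via Lemma~\ref{lem:log-scale-weight-approx-untied}, then convert to $|u_{\alpha,i}|,|v_{\alpha,i}| \leq \alpha^{\Theta(c)}$ through the conservation law (this conversion is exactly Lemma~\ref{lem:log-w-to-weight-magnitude}). However, there is a genuine gap: the corner cases $b_i^k = 0$ and $b_i^k = 2$ for $i \in S_k^c$ do occur and cannot be ruled out. By Lemma~\ref{lem:non-degeneracy-technical-untied} they arise precisely when $i \in \supp(\bu^{k-1}) \sm \supp(\bu^k)$, i.e.\ a coordinate that was activated in an earlier stage but whose limit value dropped to zero. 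Your attempt to exclude $b_i^k \geq 2$ fails on a limit computation: with $\ueps_k$ fixed and $\alpha \to 0$, the quantity $\alpha^{\ueps_k}\,|u_i^2(0)-v_i^2(0)|$ tends to $0$, not to the positive constant $|u_i^2(0)-v_i^2(0)|$, so there is no contradiction with $|u_{\alpha,i}(\ut_k)|,|v_{\alpha,i}(\ut_k)| < \ueps_k$. Indeed $b_i^k = 2$ is perfectly consistent with the inductive invariants: it corresponds to $w_{\alpha,i}(\ut_k) = \Theta(\alpha^2/\ueps_k)$ and $u_{\alpha,i}(\ut_k) - v_{\alpha,i}(\ut_k) = O(\ueps_k)$, both coordinates small. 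Your parallel assertion that $b_i^k > 0$ for all $i \in S_k^c$ is likewise unjustified and false in the dropped-coordinate case.

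In these corner cases the sandwich $\log_\alpha(w_{\alpha,i}(t)) \in (c,2-c)$ fails at $t = \ut_k$ itself, so a different argument is needed. The paper handles it by invoking the last clause of Assumption~\ref{ass:nondegenerate-untied} (which guarantees $g_i(\btheta^k) \neq 0$ for such $i$, with a definite sign by Lemma~\ref{lem:non-degeneracy-technical-untied}) and tracking signs directly: during an initial interval the weights $u_{\alpha,i}, v_{\alpha,i}$ monotonically shrink toward zero, moving by at most $\ueps_k$, until $v_{\alpha,i}$ crosses zero at some time $t^*$ bounded below by a constant; only after $t^*$ does $b_i$ enter $(c,2-c)$ and the $\alpha^{c}$ bound kick in. This is exactly why the lemma's conclusion reads $\alpha^c + \ueps_k$ rather than $\alpha^c$ alone — the $+\ueps_k$ term absorbs the $O(\ueps_k)$ motion of the dropped coordinates during the initial phase, and your proof, which produces only $\alpha^c$-type bounds, could not be correct as stated for those coordinates.
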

\begin{proof}
The previous lemma combined with the inductive hypothesis gives $$\lt_{k+1} - \ut_k < \Delta_{k}(i_k) + 2\leps_{k+1} \sm \{i_k\}.$$ We analyze the movement of each coordinate $i \in S_k^c \sm \{i_k\}$ by breaking into two cases:
\begin{itemize}

\item \underline{Coordinate $i \neq i_k$ such that $b_i^k \in (0,2)$}. By Assumption~\ref{ass:nondegenerate-untied}, there is a unique winning coordinate so $b_i^k - \tau g_i(\btheta^k) \in (c,2-c)$ for some constant $c > 0$ for all $\tau \in [0,\lt_{k+1} - \ut_k] \subseteq [0,\Delta_k(i_k)+2\leps_{k+1}]$. By Lemma~\ref{lem:log-scale-weight-approx-untied},
$\log_{\alpha}(w_{\alpha,i}(t)) \in (-c/2,2-c/2)$ for all times $t \in [\ut_{k},\lt_{k+1}]$. So by Lemma~\ref{lem:log-w-to-weight-magnitude}, $|u_{\alpha,i}(t)|, |v_{\alpha,i}(t)| \leq \alpha^{c/4}$.

\item \underline{Coordinate $i \neq i_k$ such that $b_i^k = 0$}. By Lemma~\ref{lem:non-degeneracy-technical-untied}, we must be in the corner case where $i \in S_{k-1} \cap S_k^c$ (i.e., the coordinate was active in the previous stage but was dropped from the support in this stage).

By Lemma~\ref{lem:non-degeneracy-technical-untied}, since $b_i^k = 0$ we have $g_i(\btheta^k) < 0$. By Lemma~\ref{lem:log-scale-weight-approx-untied}, this means $\sgn(g_i(\btheta_{\alpha}(t))) = \sgn(g_i(\btheta^k)) < 0$ for all $t \in (\ut_k,\lt_{k+1})$.

We break the analysis into two parts. Since $b_i^k = 0$, the sign is $s_i^k = +1$. The inductive hypothesis $\|\btheta_{\alpha}(\ut_k) - \btheta^k\| < \ueps_k$ implies that $|u_{\alpha,i}(\ut_k) - z_i^k| < \ueps_k$ and $|v_{\alpha,i}(\ut_k) - z_i^k| < \ueps_k$. For small enough $\ueps_k$ this means that $\sgn(u_{\alpha,i}(\ut_k)) = \sgn(v_{\alpha,i}(\ut_k)) = +1$. Now let $t^* = \min(\lt_{k+1},\inf \{t > \ut_k : v_{\alpha,i}(t) = 0\})$. Since $u_{\alpha,i}(t) > v_{\alpha,i}(t)$ without loss of generality (see Section~\ref{ssec:sum-pos-wlog-untied}), we have $\sgn(u_{\alpha,i}(t)) = \sgn(v_{\alpha,i}(t)) = +1$ for all $t \in [\ut_k,t^*]$. So $\frac{du_{\alpha,i}(t)}{dt}, \frac{dv_{\alpha,i}(t)}{dt} < 0$ for all $t \in [\ut_k, t^*]$. So, for any $t \in [\ut_k, t^*]$,
\begin{align*}
|u_{\alpha,i}(t) - u_{\alpha,i}(\ut_k)|, |v_{\alpha,i}(t) - v_{\alpha,i}(\ut_k)| < \ueps_k
\end{align*}
Also, since $\log_{\alpha}(w_{\alpha,i}(t^*)) \approx 1$, by Lemma~\ref{lem:log-scale-weight-approx-untied} we have $t^* > c > 0$ for some constant $c$ independent of $\alpha$. So for all $t \in [t^*,\lt_{k+1}]$ we have $b_i^k - \tau g_i(\btheta^k) \in (c,2-c)$ for some constant $c > 0$. So $|u_{\alpha,i}(t)|, |v_{\alpha,i}(t)| \leq \alpha^{c/4}$ for all $t \in [t^*,\lt_{k+1}]$. The conclusion follows by triangle inequality.

\item \underline{Coordinate $i \neq i_k$ such that $b_i^k = 2$}. The analysis is analogous to the case $b_i^k = 0$, except that we have $s_i^k = -1$ instead and $g_i(\btheta^k) > 0$ by Lemma~\ref{lem:non-degeneracy-technical-untied}.
\end{itemize}

\end{proof}

Finally, we use this conclude that $\lt_{k+1} \approx T_{k+1}$ and that the weights at coordinate $i_k$ are the only weights that change significantly, and by an amount approximately $\rho$.

\begin{lemma}[Coordinate $i_k$ wins the part (A) race at time $\lt_{k+1} \approx T_{k+1}$]
Suppose that $T_{k+1} < \infty$. Then there are $\rho_{k+1}(\leps_{k+1}) > 0$ and $\lalpha_{k+1}(\rho,\leps_{k+1}) > 0$ such that for all $\rho < \rho_{k+1}$ and $\alpha < \lalpha_{k+1}$, the following holds.
\begin{align*}
|\lt_{k+1} - T_{k+1}| < \leps_{k+1}\,,
\end{align*}
$$u_{\alpha,i_k}(\lt_{k+1}) \in [\rho, 3\rho]\,,$$
$$\sgn(v_{\alpha,i_k}(\lt_{k+1})) = s_{i_k}^{k+1}\,.$$
\end{lemma}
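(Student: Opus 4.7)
My plan is to combine the upper bound from Lemma~\ref{lem:actual-kp1-time-not-much-larger-than-truth-untied} with the coordinate-wise control from Lemma~\ref{lem:other-coords-lose-race-untied} to pin down $|u_{\alpha,i_k}(\lt_{k+1})|$ and $|v_{\alpha,i_k}(\lt_{k+1})|$ to approximately $2\rho$, and then derive the matching lower bound $\lt_{k+1} \geq T_{k+1} - \leps_{k+1}$ by contradiction using the log-scale approximation from Lemma~\ref{lem:log-scale-weight-approx-untied}.

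For the magnitudes, I would first note that by continuity and the definition \eqref{eq:close-on-inactive-coordinates-untied},
\[
\|\bu_{\alpha,S_k^c}(\lt_{k+1}) - \bu_{\alpha,S_k^c}(\ut_k)\| + \|\bv_{\alpha,S_k^c}(\lt_{k+1}) - \bv_{\alpha,S_k^c}(\ut_k)\| = 4\rho.
\]
By Lemma~\ref{lem:other-coords-lose-race-untied}, for every $i \in S_k^c \sm \{i_k\}$ the movements $|u_{\alpha,i}(\lt_{k+1}) - u_{\alpha,i}(\ut_k)|$ and $|v_{\alpha,i}(\lt_{k+1}) - v_{\alpha,i}(\ut_k)|$ are bounded by $\alpha^c + \ueps_k$, so their total $\ell^2$ contribution is $O(\sqrt{p}(\alpha^c + \ueps_k)) = o(\rho)$ (using the earlier choice $\ueps_k = \rho\leps_{k+1}/(4p)$ and $\alpha$ sufficiently small). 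Hence the entire $4\rho$ budget must be absorbed by coordinate $i_k$ up to $o(\rho)$ error, and combining with $|u_{\alpha,i_k}(\ut_k)|,|v_{\alpha,i_k}(\ut_k)| \leq \ueps_k$ from the inductive hypothesis yields $|u_{\alpha,i_k}(\lt_{k+1})| + |v_{\alpha,i_k}(\lt_{k+1})| = 4\rho \pm o(\rho)$. The conservation law \eqref{eq:conservation-law} gives $\bigl||u_{\alpha,i_k}(\lt_{k+1})| - |v_{\alpha,i_k}(\lt_{k+1})|\bigr| = O(\alpha^2/\rho) = o(\rho)$, so both magnitudes equal $2\rho \pm o(\rho)$. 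Since $\bu_\alpha > 0$ throughout (Section~\ref{ssec:sum-pos-wlog-untied}), taking $\rho_{k+1}$ and $\lalpha_{k+1}$ small enough forces $u_{\alpha,i_k}(\lt_{k+1}) \in [\rho, 3\rho]$.

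For the sign of $v_{\alpha,i_k}(\lt_{k+1})$, I would use \eqref{eq:signs-equal-untied} to conclude $\sgn(g_{i_k}(\btheta_\alpha(t))) = \sgn(g_{i_k}(\btheta^k)) = s_{i_k}^{k+1}$ for all $t \in (\ut_k, \lt_{k+1})$. The flow equation \eqref{eq:rescaled-grad-flow-untied} then gives $\sgn(dv_{\alpha,i_k}/dt) = \sgn(u_{\alpha,i_k}) \cdot \sgn(g_{i_k}(\btheta_\alpha(t))) = s_{i_k}^{k+1}$, since $u_{\alpha,i_k} > 0$. As $v_{\alpha,i_k}$ starts at $|v_{\alpha,i_k}(\ut_k)| \leq \ueps_k \ll \rho$ and grows in absolute value to $|v_{\alpha,i_k}(\lt_{k+1})| \approx 2\rho$ while its time derivative maintains a constant sign, we get $\sgn(v_{\alpha,i_k}(\lt_{k+1})) = s_{i_k}^{k+1}$.

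For the lower bound on $\lt_{k+1}$, I would argue by contradiction. Suppose $\lt_{k+1} \leq T_{k+1} - \leps_{k+1}$. Then by \eqref{eq:log-scale-approx-lemma-untied}, $\log_\alpha(w_{\alpha,i_k}(\lt_{k+1}))$ lies in $(0,2)$ with margin $\Omega(\leps_{k+1} |g_{i_k}(\btheta^k)|)$ bounded away from both endpoints (absorbing the $\rho\leps_{k+1} + C\rho'(t-\ut_k)$ error by choosing $\rho_{k+1}$ small). The technical lemma relating $\log_\alpha w$ to weight magnitudes (Lemma~\ref{lem:log-w-to-weight-magnitude}) then forces $|u_{\alpha,i_k}(\lt_{k+1})|, |v_{\alpha,i_k}(\lt_{k+1})| \leq \alpha^{c'}$ for some $c' > 0$ independent of $\alpha$, contradicting $|u_{\alpha,i_k}(\lt_{k+1})| \approx 2\rho$ once $\alpha$ is small enough. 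Combined with the upper bound from Lemma~\ref{lem:actual-kp1-time-not-much-larger-than-truth-untied}, this gives $|\lt_{k+1} - T_{k+1}| < \leps_{k+1}$. The main delicacy is the bookkeeping in step one: the $4\rho$ threshold in \eqref{eq:close-on-inactive-coordinates-untied}, the choice $\ueps_k = \rho\leps_{k+1}/(4p)$, and the bound from Lemma~\ref{lem:other-coords-lose-race-untied} must conspire so that a single coordinate absorbs essentially all of the $\ell^2$ budget — which is exactly what the layered choice of constants was designed to achieve.
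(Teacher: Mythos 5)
Your overall strategy matches the paper's: use Lemma~\ref{lem:actual-kp1-time-not-much-larger-than-truth-untied} for the upper bound on $\lt_{k+1}$, Lemma~\ref{lem:other-coords-lose-race-untied} plus the $4\rho$ threshold in the definition of $\lt_{k+1}$ and the conservation law to pin $u_{\alpha,i_k}(\lt_{k+1})$ near $2\rho$, and the log-scale approximation to rule out an early exit. Your sign argument is actually a nice simplification of the paper's: rather than reading off $\sgn(v_{\alpha,i_k})$ from $\log_\alpha(w_{\alpha,i_k}(\lt_{k+1}))$ via Lemma~\ref{lem:log-w-to-sign-v}, you observe that \eqref{eq:signs-equal-untied} together with $u_{\alpha,i_k}>0$ makes $\sgn(dv_{\alpha,i_k}/dt)$ constant equal to $\sgn(g_{i_k}(\btheta^k))=s_{i_k}^{k+1}$ on $(\ut_k,\lt_{k+1})$, which combined with $|v_{\alpha,i_k}(\ut_k)|\leq\ueps_k\ll\rho\approx\tfrac12|v_{\alpha,i_k}(\lt_{k+1})|$ forces the final sign; this handles both cases of $b_{i_k}^k$ uniformly.

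There is, however, a genuine gap in your lower bound $\lt_{k+1}\geq T_{k+1}-\leps_{k+1}$, namely the corner case $b_{i_k}^k\in\{0,2\}$, which occurs when the winning coordinate is one that was dropped from the support at the previous stage ($i_k\in S_{k-1}\cap S_k^c$, see Lemma~\ref{lem:non-degeneracy-technical-untied}); the paper treats this case separately in its last paragraph. Your contradiction argument asserts that if $\lt_{k+1}\leq T_{k+1}-\leps_{k+1}$ then $\log_\alpha(w_{\alpha,i_k}(\lt_{k+1}))$ lies in $(0,2)$ with a margin bounded away from \emph{both} endpoints, so that Lemma~\ref{lem:log-w-to-weight-magnitude} applies. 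But when $b_{i_k}^k=0$ (say), the log-scale trajectory $b_{i_k}^k-(t-\ut_k)g_{i_k}(\btheta^k)$ \emph{starts at the endpoint} $0$, so for $\lt_{k+1}-\ut_k$ small the value is not bounded away from $0$, $w_{\alpha,i_k}$ may be of constant order, and Lemma~\ref{lem:log-w-to-weight-magnitude} gives nothing; your contradiction with $u_{\alpha,i_k}(\lt_{k+1})\approx 2\rho$ evaporates. To close this you need the paper's extra monotonicity observation: since $\sgn(g_{i_k}(\btheta_\alpha(t)))$ is constant and (for $b_{i_k}^k=0$) negative while $u_{\alpha,i_k},v_{\alpha,i_k}$ are both initially positive and $\leq 2\ueps_k$, the flow first \emph{decreases} both coordinates, so during this initial window the movement of coordinate $i_k$ stays $O(\ueps_k)\ll\rho$ and cannot trigger the $4\rho$ threshold; only after $v_{\alpha,i_k}$ crosses zero does $\log_\alpha(w_{\alpha,i_k})$ enter $(c,2-c)$, where your argument resumes. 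Without this (or an equivalent) step, the proof does not rule out $\lt_{k+1}$ occurring prematurely in the corner case.
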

\begin{proof}

Let us analyze the case that $b_{i_k}^k \in (0,2)$. Notice that $b_{i_k}^{k+1} = b_{i_k}^k - \Delta_k(i_k) g_{i_k}(\btheta^k) \in \{0,2\}$ and that if $b_i^{k+1} = 0$ then $g_{i_k}(\btheta^k) > 0$ and if it is $2$ then $b_{i_k}^{k+1} = g_{i_k}(\btheta^k) < 0$. So by Lemma~\ref{lem:log-scale-weight-approx-untied}, for all times $t \in [\ut_k, \min(\lt_{k+1}, T_{k+1} - \leps_{k+1})]$, we have $w_{\alpha,i_k}(t) \in (c,2-c)$ for some $c > 0$. So for small enough $\alpha$ by Lemma~\ref{lem:log-w-to-weight-magnitude}, $|u_{\alpha,i_k}(t)|, |v_{\alpha,i_k}(t)| \leq \alpha^{c/2}$. Combining this with Lemma~\ref{lem:other-coords-lose-race-untied}, we see that for $t \in [\ut_k, \min(\lt_{k+1}, T_{k+1} - \leps_{k+1})]$ we have
\begin{align*}
\|\bu_{\alpha}(t) - \bu_{\alpha}(\ut_k)\| + \|\bv_{\alpha}(t) - \bv_{\alpha}(\ut_k)\| < 2(\alpha^c + \ueps_k)p < \rho\,,
\end{align*}
for small enough $\alpha$. So by definition of $\lt_{k+1}$ we must have $\lt_{k+1} > T_{k+1} - \leps_{k+1}$. Combined with Lemma~\ref{lem:actual-kp1-time-not-much-larger-than-truth-untied}, we conclude that $|T_{k+1} - \lt_{k+1}| < \leps_{k+1}$, which is the first claim of the lemma. Furthermore, by Lemma~\ref{lem:other-coords-lose-race-untied},
\begin{align*}
\sum_{i \in S_k^c \sm \{i_k\}} |u_{\alpha,i}(\lt_{k+1}) - u_{\alpha,i}(\ut_k)| + |v_{\alpha,i}(\lt_{k+1}) - v_{\alpha,i}(\ut_k)| \leq 2p (\alpha^c + \ueps_{k})) < \rho / 2,
\end{align*}
so by definition of $\lt_{k+1}$ and triangle inequality we have $|u_{\alpha,i_k}(\lt_{k+1})| + |v_{\alpha,i_k}(\lt_{k+1})| \geq 4\rho - \rho / 2 = 7\rho / 2$. Also, since $u_{\alpha,i_k}^2(\lt_{k+1}) - v_{\alpha,i_k}^2(\lt_{k+1}) = \Theta(\alpha^2)$ we have $u_{\alpha,i_k}(\lt_{k+1}) \in [\rho,3\rho]$. Finally, if $b_{i_k}^{k+1} = 2$, then $s_{i_k}^{k+1} = -1$ and $\log_{\alpha}(w_{\alpha,i_k}(\lt_{k+1})) > 1.5$ so $\sgn(v_{\alpha,i_k}(t)) < 0$ by Lemma~\ref{lem:log-w-to-sign-v}; analogously, if $b_{i_k}^{k+1} = 0$, we have $s_{i_k}^{k+1} = 1$ and $\log_{\alpha}(w_{\alpha,i_k}(\lt_{k+1}) < 0.5$ so $\sgn(v_{\alpha,i_k}(\lt_{k+1}) > 0$.

The case $b_{i_k}^k \in \{0,2\}$ can be proved similarly to the analysis in Lemma~\ref{lem:other-coords-lose-race-untied}, where one shows that during the first period of time the magnitudes of $|u_{i_k}(t)|$ and $|v_{i_k}(t)|$ decrease, until the sign of $v_{i_k}$ flips and they once again increase.

\end{proof}

We have shown the claims \eqref{ineq:part-a-start-untied}, \eqref{ineq:theta-close-part-a-untied}, \eqref{ineq:log-scale-close-part-a-untied} \eqref{ineq:part-a-geq-rho-k-untied}, and \eqref{eq:sign-v-untied-first} for the time $\lt_{k+1}$.  In fact, if we let $\lt_{k+1}' \in [\ut_k,\infty)$ be the first time $t$ such that $u_{\alpha,i_k}(t) = \rho$ we still have  \eqref{ineq:part-a-start-untied}, \eqref{ineq:theta-close-part-a-untied}, \eqref{ineq:log-scale-close-part-a-untied} and \eqref{eq:sign-v-untied-first} by the same analysis as above, and \eqref{ineq:part-a-geq-rho-k-untied} can be replaced with the slightly more convenient
\begin{align*}
u_{\alpha,i_k}(\lt_{k+1}') = \rho\,.
\end{align*}

\subsubsection{Analysis in case where $T_{k+1} = \infty$}

In this case that $T_{k+1}$, we just have to show that the weights remain close to $\btheta^k$. We show that for any $\leps_{k+1} > 0$, there is $\lalpha_{k+1}(\leps_{k+1}) > 0$ such that for all $\alpha < \lalpha_{k+1}$ and times $t \in [T_k + \leps_{k+1}, T^*]$,
\begin{align*}
\|\btheta_{\alpha}(t) - \btheta^{k}\| < \leps_{k+1}.
\end{align*}
We can use Lemmas~\ref{lem:stability-of-strict-local-minimum-untied} and \ref{lem:log-scale-weight-approx-untied}, which were developed for the case of $T_{k+1} < \infty$, but still hold for $T_{k+1} = \infty$. 
Lemma~\ref{lem:stability-of-strict-local-minimum-untied} guarantees that the weights do not move much until time $\lt_{k+1}$, and so we only need to show that $\lt_{k+1} \geq T^*$ when we take $\rho$ small enough. For this, observe that $g_i(\btheta^k) = 0$ for all $i \not\in S_k$, because otherwise $T_{k+1} < \infty$. Therefore Lemma~\ref{lem:log-scale-weight-approx-untied} guarantees that until time $\min(T_*,\lt_{k+1})$ all weights are close to the original on the logarithmic scale. Namely,
\begin{align*}
\|\log_{\alpha}(\bw_{\alpha}(t)) - \bb^k\| < \rho \leps_{k+1} + C \rho'(T^*- \ut_k)
\end{align*}
Furthermore, by the non-degeneracy Assumption~\ref{ass:nondegenerate-untied} we know that $b_i^k \in (0,2)$ for all $i \not\in S_k$ by Lemma~\ref{lem:non-degeneracy-technical-untied}. So if we take $\rho$ small enough and $\lalpha_{k+1}$ small enough, we must have that $\lt_{k+1} \geq T^*$.

\subsection{Dynamics from time $\lt_{k}$ to time $\ut_{k}$ (Nonlinear evolution for $O(1)$ unrescaled time)}\label{ssec:ltk-to-utk-untied}

Suppose that we know for some $k \leq K$ that for any $\leps_{k} > 0$, there is $\rho_k(\leps_{k}) > 0$ such that for all $\rho < \rho_k$ there is $\lalpha_k(\rho,\leps_k) > 0$ such that for all $\alpha < \lalpha_k$, there is a time $\lt_k = \lt_k(\alpha,\rho,\leps_k)$ satisfying
\begin{align}
|T_k - \lt_k| &< \leps_k \label{eq:part-b-induction-0-untied} \\
\|\btheta_{\alpha}(\lt_k) - \btheta^{k-1}\| &< \leps_k \label{eq:part-b-induction-1-untied} \\
\|\log_{\alpha}(\bw_{\alpha}(\lt_k)) - \bb^k\| &< \leps_k \label{eq:part-b-induction-2-untied} \\
u_{\alpha,i_{k-1}}(\lt_k) &= \rho\,, \label{eq:rho-k-untied} \\
\sgn(v_{\alpha,i_{k-1}}(\lt_k)) &= s_{i_{k-1}}^k\,. \label{eq:sign-v-untied}
\end{align}

Now we will show that for any $\ueps_k > 0$, there is $\ualpha_k = \ualpha_k(\ueps_k) > 0$ such that for all $0 < \alpha < \ualpha_k$, there is a time $\ut_k = \ut_k(\alpha, \ueps_k)$ satisfying
\begin{align}
|T_k - \ut_k| < \ueps_k \label{eq:utk-cond-ind-start-untied} \\
\|\btheta_{\alpha}(\ut_k) - \btheta^{k}\| < \ueps_k
\label{eq:utk-cond-int-mid-untied} \\
\|\log_{\alpha}(\bw_{\alpha}(\ut_k)) - \bb^k\| < \ueps_k \label{eq:utk-cond-ind-final-untied}
\end{align}

We give the construction for $\ut_k$. For any desired accuracy $\ueps_k > 0$ in this stage, we will construct an accuracy $\leps_k = \leps_k(\ueps_k) = \ueps_k / 3 > 0$. We will also construct a $\rho = \rho(\leps_k) > 0$ which is sufficiently small, and we will construct an cutoff for $\alpha$ equal to $\ualpha_k = \ualpha_{{k+1}}(\ueps_k) > 0$ which satisfies $\ualpha_k < \lalpha_k(\rho,\leps_k)$. The values for these parameters $\leps_k$ and $\rho$ and $\ualpha_k$ will be chosen in the following lemma, and will depend only on $\ueps_k$.

\begin{lemma}[New local minimum reached in time $O(1 / \log(1/\alpha))$]\label{lem:part-b-main-lem-untied}
For any $\ueps_k > 0$, we can choose $\ualpha_k = \ualpha_k(\ueps_k) > 0$ small enough so that, for any $0 < \alpha < \ualpha_k$, there is $\ut_k = \ut_k(\alpha,\ueps_k)$ for which conditions \eqref{eq:utk-cond-ind-start-untied} to \eqref{eq:utk-cond-ind-final-untied} hold.

Furthermore, there is a constant $C''$ independent of $\alpha$ such that $|\btheta_{\alpha}(t)| / |\btheta_{\alpha}(\lt_k)| \in [1/C'',C'']^{2p}$ at all times $t \in [\lt_k,\ut_k]$.
\end{lemma}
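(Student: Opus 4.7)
The plan is to exploit the fact that in rescaled time the nonlinear evolution of stage $(k)$B takes only $O(1/\log(1/\alpha))$ time, so the change of variables $s := (t-\lt_k)\log(1/\alpha)$ turns the rescaled flow \eqref{eq:rescaled-grad-flow-untied} into the standard unit-rate gradient flow $d\btheta/ds = -\nabla_{\btheta}\cL(\btheta)$. Concretely, I would apply Assumption~\ref{ass:robust-dynamics-untied} at stage $k-1$ with tolerance $\epsilon := \ueps_k/3$, obtaining a horizon $\tau := \tau(\rho)$ and a neighborhood radius $\delta$, and then define
\begin{align*}
\ut_k := \lt_k + \tau/\log(1/\alpha).
\end{align*}
With $\leps_k := \ueps_k/3$ and $\ualpha_k$ small enough that $\tau/\log(1/\ualpha_k) < \ueps_k/3$, the time bound \eqref{eq:utk-cond-ind-start-untied} follows from the inductive hypothesis $|T_k - \lt_k| < \leps_k$.

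Next, I verify that $\btheta_\alpha(\lt_k)$ lies close to a valid initial condition for the reference flow $\bpsi$ in Assumption~\ref{ass:robust-dynamics-untied}. Define $\tilde\bu \in \R^p$ to equal $\bu_\alpha(\lt_k)$ on $\supp(\bu^{k-1}) \cup \{i_{k-1}\}$ and zero elsewhere. The inductive hypotheses \eqref{eq:part-b-induction-1-untied}, \eqref{eq:rho-k-untied} guarantee $\tilde\bu \in B(\bu^{k-1},\delta) \cap \R_{\geq 0}^p$ provided $\leps_k,\rho$ are small enough, while \eqref{eq:sign-v-untied} and the analogous sign fact on $\supp(\bu^{k-1})$ (which holds since throughout stage $(k-1)$B only coordinate $i_{k-1}$ crosses a $\Theta(1)$ threshold, so the signs of persistent coordinates are inherited as $s_i^k = s_i^{k-1}$) ensure that $\bv_\alpha(\lt_k)$ matches $\bs^k \odot \tilde\bu$ on this support up to an $O(\alpha)$ conservation-law discrepancy. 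Off-support coordinates have magnitude $O(\alpha^c)$ by \eqref{eq:part-b-induction-2-untied} combined with Lemma~\ref{lem:log-w-to-weight-magnitude}. Hence $\btheta_\alpha(\lt_k)$ sits within $O(\alpha^c + \leps_k)$ of the admissible point $(\tilde\bu, \bs^k \odot \tilde\bu)$. Continuous dependence of ODE solutions on initial conditions, via a Gr\"onwall estimate with the local Lipschitz constant of $\nabla_{\btheta}\cL$ from Lemma~\ref{lem:locally-bounded-loss-derivs-untied} on a ball containing $\bpsi([0,\tau])$, then yields $\|\btheta_\alpha(\ut_k) - \bpsi(\tau)\| \le e^{K\tau}\cdot O(\alpha^c + \leps_k) \to 0$ as $\alpha \to 0$. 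Combining with $\|\bpsi(\tau) - \btheta^k\| < \epsilon$ from Assumption~\ref{ass:robust-dynamics-untied} gives \eqref{eq:utk-cond-int-mid-untied}.

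For the log-scale conclusion \eqref{eq:utk-cond-ind-final-untied}, note that along $[\lt_k,\ut_k]$ each coordinate of $\bw_\alpha$ satisfies $d\log|w_{\alpha,i}|/ds = g_i(\btheta_\alpha)$, which is bounded by some $M$ on the compact region containing the trajectory by Lemma~\ref{lem:smoothness-of-deriv-untied}. Therefore $|\log|w_{\alpha,i}(\ut_k)| - \log|w_{\alpha,i}(\lt_k)|| \le M\tau$, whence $\|\log_\alpha \bw_\alpha(\ut_k) - \log_\alpha \bw_\alpha(\lt_k)\| \le M\tau\sqrt{p}/\log(1/\alpha) \to 0$; together with \eqref{eq:part-b-induction-2-untied} this proves \eqref{eq:utk-cond-ind-final-untied}. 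The same exponential estimate applied to $|\btheta_\alpha|$ gives the entrywise ratio bound with $C'' = e^{M\tau}$. The main obstacle I anticipate is the bookkeeping: carefully chasing the dependency order of the parameters $\ueps_k, \leps_k, \rho, \delta, \epsilon, \ualpha_k$ (each must be chosen after, and as a function of, the next) while simultaneously verifying the sign-consistency $s_i^{k-1} = s_i^k$ on persistent coordinates and the negligibility of off-support perturbations, so that all error terms collapse below the target tolerance $\ueps_k$ after taking $\alpha \to 0$.
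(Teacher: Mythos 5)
Your proposal follows the paper's proof essentially step for step: the same restricted reference trajectory started from the on-support part of $\btheta_{\alpha}(\lt_k)$, the same invocation of Assumption~\ref{ass:robust-dynamics-untied} (at stage $k-1$) to produce the horizon $\tau$ and the definition $\ut_k = \lt_k + \tau/\log(1/\alpha)$, the same Gr\"onwall comparison using the local Lipschitz constant from Lemma~\ref{lem:locally-bounded-loss-derivs-untied}, and the same boundedness-of-$\bg$ argument for \eqref{eq:utk-cond-ind-final-untied} and for the entrywise ratio bound.

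One quantitative slip is worth fixing. You bound the initial discrepancy between $\btheta_{\alpha}(\lt_k)$ and the admissible point $(\tilde{\bu},\bs^k\odot\tilde{\bu})$ by $O(\alpha^c+\leps_k)$ and then assert that $e^{K\tau}\cdot O(\alpha^c+\leps_k)\to 0$ as $\alpha\to 0$; this is false for fixed $\leps_k$, and trying to repair it by taking $\leps_k\ll e^{-K\tau}\ueps_k$ is circular, since $\tau$ depends on $\rho$, which in turn is chosen as a function of $\leps_k$. The fix --- which is what the paper does, and which your own observations already supply --- is that because $\tilde{\bu}$ is the \emph{exact} restriction of $\bu_{\alpha}(\lt_k)$ to $\supp(\bu^{k-1})\cup\{i_{k-1}\}$, the on-support discrepancy comes only from the conservation law and the sign matching, hence is $O(\alpha^2/\rho)$, while the off-support coordinates contribute $O(\alpha^c)$. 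The total initial discrepancy is therefore $o_{\alpha}(1)$ for fixed $\rho$ and $\leps_k$, so the constant amplification factor $e^{K\tau}$ is harmless and no circular choice of parameters is needed.
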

\begin{proof}
Let $\lt_k = \lt_k(\alpha,\rho,\leps_k)$ be given by the induction. Let us compare the dynamics starting at $\btheta_{\alpha}(\lt_k)$ with the dynamics starting at $\tilde{\btheta}(\lt_k) = (\tilde{\bu}(\lt_k),\tilde{\bv}(\lt_k))$ which is given by
\begin{align*}
\tilde{u}_{i}(\lt_k) = \begin{cases} u_{\alpha,i}(\lt_k), & i \in S_{k-1} \cup \{i_{k-1}\} \\ 0, & \mbox{otherwise}\end{cases} \quad \mbox{ and } \quad \tilde{v}_{i}(\lt_k) = \begin{cases} v_{\alpha,i}(\lt_k), & i \in S_{k-1} \cup \{i_{k-1}\} \\ 0, & \mbox{otherwise}\end{cases}
\end{align*}
and run with
\begin{align*}
\frac{d\tilde{\btheta}}{dt} = -\log(1/\alpha) \nabla_{\bw} \cL(\tilde{\btheta})\,\,.
\end{align*}

By Assumption~\ref{ass:robust-dynamics-untied} we know there exists a unique solution $\tilde{\btheta} : [\lt_k, \infty) \to \R^p$ as long as we take $\leps_k$ small enough because $\supp(\tilde{\btheta}(\lt_k)) = S_{k-1} \cup \{i_{k-1}\}$ and $\|\tilde{\btheta}_i(\lt_k) - \btheta^{k-1}\| < \leps_k$. Furthermore, by Assumption~\ref{ass:robust-dynamics-untied} if we take $\leps_k$ small enough there must be a time $\tau :=
\tau(\ueps_k,\rho) < \infty$ such that
\begin{align}\label{eq:appx-convergence-tilde-w-untied}
\|\tilde{\btheta}(t) - \btheta^{k}\| < \ueps_k / 2 \mbox{ for } t \geq \lt_k + \tau / \log(1/\alpha)
\end{align}
Define
\begin{align*}
\ut_k = \lt_k + \tau / \log(1/\alpha).
\end{align*}

So for $\alpha$ small enough, $|T_k - \ut_k| < 2\leps_k < \ueps_k$, proving \eqref{eq:utk-cond-ind-start-untied}.

We now compare $\btheta_{\alpha}(\ut_k)$ with $\tilde{\btheta}(\ut_k)$, and show that if we take $\alpha$ small enough, then the dynamics of $\tilde{\btheta}$ closely match the dynamics of $\btheta_{\alpha}(t)$ for times $\lt_k + O(1/\log(1/\alpha))$. The argument uses Gronwall's inequality. Let $t^* = \inf \{t > \lt_k : \|\tilde{\btheta}(t^*) - \btheta_{\alpha}(t)\| > 1/3\}$. For times $t \in [\lt_k,t^*)$ by Lemma~\ref{lem:locally-bounded-loss-derivs-untied} we have
\begin{align*}
\|\frac{d}{dt} \tilde{\btheta}(t) - \frac{d}{dt} \btheta_{\alpha}(t)\| &= \log(1/\alpha) \|\nabla_{\btheta} \cL(\tilde{\btheta}(t)) - \nabla_{\btheta} \cL(\btheta_{\alpha}(t))\| \leq  K_{\tilde{\btheta}(t)} \log(1/\alpha) \|\tilde{\btheta}(t) - \btheta_{\alpha}(t)\|,
\end{align*}
where $K_{\tilde{\btheta}(t)}$ is the smoothness
constant from Lemma~\ref{lem:locally-bounded-loss-derivs-untied}. Note that since $\|\tilde{\btheta}(t)\| < \infty$ for large enough $t$ by \eqref{eq:appx-convergence-tilde-w-untied}, the trajectory of $\tilde{\btheta}$ must lie in a compact set. Therefore, there must be a finite set of times $s_1,\ldots,s_m \in [\lt_k,t^*)$ such that $\cup_{t \in [\lt_k,t^*)} B(\tilde{\btheta}(t),1/2) \subseteq \cup_{i=1}^m B(\tilde{\btheta}(s_i),3/4)$. So letting $C = \max_{i=1}^m K_{\tilde{\btheta}(s_i)} < \infty$ for all times $t \in [\lt_k, t^*)$ we have
\begin{align*}
\frac{d}{dt}\| \tilde{\btheta}(t) - \btheta_{\alpha}(t)\| \leq C \log(1/\alpha) \|\tilde{\btheta}(t) - \btheta_{\alpha}(t)\|\,.
\end{align*}
By Gronwall's inequality, for all times $t \in [\lt_k, t^*)$,
\begin{align*}
\|\tilde{\btheta}(t) - \btheta_{\alpha}(t)\| \leq \|\tilde{\btheta}(\lt_k) - \btheta_{\alpha}(\lt_k)\| \exp(C \log(1/\alpha) (t - \lt_k))\,.
\end{align*}
We know from Lemma~\ref{lem:other-coords-lose-race-untied} that there is a constant $c > 0$ such that for any small enough $0 < \alpha < \lalpha_k$, such that
\begin{align*}
\|\tilde{\btheta}(\lt_k) - \btheta_{\alpha}(\lt_k)\| < \alpha^c
\end{align*}
If we take $\alpha$ small enough that $\alpha^c \exp(C \tau) < \ueps_k / 2 < 1/3$, we must have $t^* > \lt_k + \tau / \log(1/\alpha)$ and so we prove \eqref{eq:utk-cond-int-mid-untied}
\begin{align*}
\|\btheta^{k} - \btheta_{\alpha}(\ut_k)\| \leq \ueps_k/2 + \|\tilde{\btheta}(\ut_k) - \btheta_{\alpha}(\ut_k)\| < \ueps_k\,.
\end{align*}

It remains to show that \eqref{eq:utk-cond-ind-final-untied} is satisfied. Since $\|\tilde{\btheta}(t) - \btheta_{\alpha}(t)\| < 1/3$ for all $t \in [\lt_k,\ut_k]$, it holds that the trajectory of $\btheta_{\alpha}(t)$ lies in a compact set. So by Lemma~\ref{lem:smoothness-of-deriv-untied} we have $\|\bg(\btheta_{\alpha}(t))\| < C'$ for some constant $C'$ at all times $t \in [\lt_k,\ut_k]$. Since $\frac{1}{\log(1/\alpha)}|\frac{dw_{\alpha,i}}{dt}| = |w_{\alpha,i}(t)| |g_i(\bw_{\alpha}(t))| < C' |w_{\alpha,i}(t)|$, we must have
$|w_{\alpha,i}(t)| / |w_{\alpha,i}(\lt_k)| \in [1/C'',C'']$ for some constant $C''$ independent of $\alpha$ and all $t \in [\lt_k,\ut_k]$. Therefore, \eqref{eq:utk-cond-ind-final-untied} follows from \eqref{eq:part-b-induction-2-untied}. A similar argument shows that $|\btheta_{\alpha}(t) / \btheta_{\alpha}(\lt_k)| \in [1/C'',C'']^{2p}$.

\end{proof}

\subsection{Concluding the proof of Theorem~\ref{thm:diag-2-greedy-restated-untied}}\label{sec:proof-untied-conclusion}

We have shown that Theorem~\ref{thm:diag-2-greedy-untied} is true for solutions $\btheta_{\alpha} : [0,T^*] \to \R^{2p}$ to the gradient flow, where $T_* \in (T_K,T_{K+1})$. To establish Theorem~\ref{thm:diag-2-greedy-restated-untied} it remains only to show that for any $T_* \in (T_K,T_{K+1})$ and small enough $\alpha$ such a solution to the gradient flow exists and is unique. To see this, note that in the inductive proof of the invariants we construct a sequence of times $0 = \ut_0 \leq \lt_1 \leq \ut_1 \leq \dots \leq \ut_K \leq \lt_{K+1} > T_*$, where we guarantee that any gradient flow solution $\btheta_{\alpha} : [0,\lt_{k+1}] \to \R^p$ satisfies $\btheta_{\alpha} \in \cup_{k \in \{0,\ldots,K\}} B(\btheta^k, 1)$ for all $t \in \cup_{k \in \{0,\ldots,K\}} [\ut_k,\lt_{k+1}]$. And also for $t \in \cup_{k \in \{0,\ldots,K-1\}} [\lt_k, \ut_{k+1}]$, we have $\btheta_{\alpha}(t) \in B(0,C''_k \btheta^k)$ for some constant $C''_k$ independent of $\alpha$ by Lemma~\ref{lem:part-b-main-lem-untied}. So $\btheta_{\alpha}(t) \in B(0,C_K)$ for some constant $C_K$ at all times $t \in [0,T^*]$. By Lemma~\ref{lem:smoothness-of-deriv-untied}, the loss gradient $\nabla_{\btheta} \cL(\btheta) = (\bv \odot \bg(\btheta), \bu \odot \bg(\btheta))$ is Lipschitz-continuous on the compact set $B(0,C_K)$. So $\btheta_{\alpha} : [0,T^*] \to \R^p$ exists and is unique by the Cauchy-Lipschitz theorem.

\qed

\section{Technical lemmas}

\subsection{Relating the sum of the weights to the original weights using the conservation law}

\begin{lemma}\label{lem:log-w-to-weight-magnitude}
If for some constant $0 < c < 1$ we have $\log_{\alpha}(w_{\alpha,i}(t)) \in (c,2-c)$, then for small enough $\alpha$
\begin{align*}
\max(|u_{\alpha,i}(t)|, |v_{\alpha,i}(t)|) \leq \alpha^{c/2}\,.
\end{align*}
\end{lemma}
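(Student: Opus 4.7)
The plan is to rewrite the hypothesis as a bound on $w_{\alpha,i}(t)$ and then combine this with the conservation law \eqref{eq:conservation-law} to separately control $u_{\alpha,i}+v_{\alpha,i}$ and $u_{\alpha,i}-v_{\alpha,i}$.

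First I would unwrap the hypothesis. Since $\alpha\in(0,1)$, the function $\log_{\alpha}$ is decreasing, so $\log_{\alpha}(w_{\alpha,i}(t))\in(c,2-c)$ is equivalent to
\begin{equation*}
\alpha^{2-c} \;<\; w_{\alpha,i}(t) \;<\; \alpha^{c}.
\end{equation*}
In particular $w_{\alpha,i}(t)>0$ (which is already guaranteed by the WLOG reduction in Section~\ref{ssec:sum-pos-wlog-untied}), so we may divide by it freely.

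Next I would use the conservation law. By \eqref{eq:conservation-law} together with the initialization $\btheta_{\alpha}(0)=\alpha\btheta(0)$ and the boundedness of $\btheta(0)$, there exists a constant $C_0$ (depending only on the initialization) such that
\begin{equation*}
\bigl|u_{\alpha,i}^{2}(t)-v_{\alpha,i}^{2}(t)\bigr| \;=\; \bigl|u_{\alpha,i}^{2}(0)-v_{\alpha,i}^{2}(0)\bigr| \;\leq\; C_0\,\alpha^{2}.
\end{equation*}
Factoring $u_{\alpha,i}^{2}-v_{\alpha,i}^{2}=(u_{\alpha,i}-v_{\alpha,i})\,w_{\alpha,i}$ and using $w_{\alpha,i}(t)>\alpha^{2-c}$, this gives
\begin{equation*}
\bigl|u_{\alpha,i}(t)-v_{\alpha,i}(t)\bigr| \;=\; \frac{|u_{\alpha,i}^{2}(t)-v_{\alpha,i}^{2}(t)|}{w_{\alpha,i}(t)} \;\leq\; \frac{C_0\,\alpha^{2}}{\alpha^{2-c}} \;=\; C_0\,\alpha^{c}.
\end{equation*}

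Finally, combining the two bounds via the triangle inequality,
\begin{equation*}
\max\bigl(|u_{\alpha,i}(t)|,|v_{\alpha,i}(t)|\bigr) \;\leq\; \tfrac{1}{2}\bigl(|u_{\alpha,i}+v_{\alpha,i}|+|u_{\alpha,i}-v_{\alpha,i}|\bigr) \;\leq\; \tfrac{1}{2}(\alpha^{c}+C_0\alpha^{c}) \;=\; \tfrac{C_0+1}{2}\,\alpha^{c}.
\end{equation*}
Since $c>c/2$ and $\alpha\in(0,1)$, we have $\alpha^{c}=\alpha^{c/2}\cdot\alpha^{c/2}$, so $\tfrac{C_0+1}{2}\alpha^{c}\leq\alpha^{c/2}$ as soon as $\alpha^{c/2}\leq 2/(C_0+1)$, which is the desired conclusion for all sufficiently small $\alpha$. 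There is no real obstacle here; the only thing to watch is that the conservation law is applied to the rescaled flow (so the right-hand side is $O(\alpha^{2})$, not $O(1)$), which is precisely what makes the quotient $O(\alpha^{c})$ rather than $O(\alpha^{c-2})$.
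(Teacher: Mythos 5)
Your proof is correct and takes essentially the same route as the paper's: both arguments use the conservation law to get $w_{\alpha,i}(t)\,(u_{\alpha,i}(t)-v_{\alpha,i}(t)) = u_{\alpha,i}^2(0)-v_{\alpha,i}^2(0) = O(\alpha^2)$, divide by the lower bound $w_{\alpha,i}(t) > \alpha^{2-c}$ to control the difference $u_{\alpha,i}-v_{\alpha,i}$, and then recover $u_{\alpha,i}$ and $v_{\alpha,i}$ from the sum and the difference. The only cosmetic difference is that the paper carries out the middle step on the $\log_{\alpha}$ scale (showing $\log_{\alpha}|\tilde{w}_{\alpha,i}(t)| \in (3c/4,\,2-3c/4)$) whereas you bound $|u_{\alpha,i}-v_{\alpha,i}| \leq C_0\alpha^{c}$ directly.
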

\begin{proof}
Let $\tilde{\bw}_{\alpha}(t) = \bu_{\alpha}(t) - \bv_{\alpha}(t)$. By the conservation law \eqref{eq:conservation-law}, $w_{\alpha,i}(t)\tilde{w}_{\alpha,i}(t) = w_{\alpha,i}(0)\tilde{w}_{\alpha,i}(0) = u_{\alpha,i}(0)^2 - v_{\alpha,i}(0)^2$. By the non-degeneracy of initialization (Assumption~\ref{ass:nondegenerate-untied}), the right-hand-side is $\Theta(\alpha^2)$. So if $\log_{\alpha}(w_{\alpha,i}(t)) \in (c,2-c)$ then for small enough $\alpha$, we have $\log_{\alpha}(|\tilde{w}_{\alpha,i}(t)|) \in (3c/4,2-3c/4)$. So
$|u_{\alpha,i}(t)| \leq |w_{\alpha,i}(t) + \tilde{w}_{\alpha,i}(t)| \leq \alpha^{c/2}$ and $|v_{\alpha,i}(t)| \leq |w_{\alpha,i}(t) - \tilde{w}_{\alpha,i}(t)| \leq \alpha^{c/2}$.
\end{proof}

\begin{lemma}\label{lem:log-w-to-large-u}
If for some constant $0 < c$ we have $\log_{\alpha}(w_{\alpha,i}(t)) \not\in (-c,2+c)$, then for small enough $\alpha$,
\begin{align*}
|u_{\alpha,i}(t)| > 1\,.
\end{align*}
\end{lemma}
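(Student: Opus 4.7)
The plan is to split on the two ways the hypothesis $\log_{\alpha}(w_{\alpha,i}(t)) \notin (-c, 2+c)$ can fail, and use the conservation law together with the sign convention established in Section~\ref{ssec:sum-pos-wlog-untied} to lower-bound $u_{\alpha,i}(t)$ in each case. Since $\alpha \in (0,1)$, the logarithm $\log_{\alpha}$ is decreasing, so the hypothesis is equivalent to either
\begin{align*}
w_{\alpha,i}(t) \;\geq\; \alpha^{-c} \quad \text{(Case 1: $w$ very large)} \qquad \text{or} \qquad w_{\alpha,i}(t) \;\leq\; \alpha^{2+c} \quad \text{(Case 2: $w$ very small)}.
\end{align*}

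The main tool is, as in Lemma~\ref{lem:log-w-to-weight-magnitude}, the auxiliary quantity $\tilde{w}_{\alpha,i}(t) := u_{\alpha,i}(t) - v_{\alpha,i}(t)$ and the identity (a rephrasing of the conservation law) $w_{\alpha,i}(t)\,\tilde{w}_{\alpha,i}(t) = u_{\alpha,i}(0)^2 - v_{\alpha,i}(0)^2$, whose magnitude is $\Theta(\alpha^2)$ by Assumption~\ref{ass:nondegenerate-untied}. Next I would invoke the normalization from Section~\ref{ssec:sum-pos-wlog-untied}: we may assume $u_{\alpha,i}(0) > 0$ and $|u_{\alpha,i}(0)| > |v_{\alpha,i}(0)|$, so $\tilde{w}_{\alpha,i}(0) > 0$; and since $\tilde{w}_{\alpha,i}$ evolves by $\frac{d\tilde{w}_{\alpha,i}}{dt} = -\log(1/\alpha)\,\tilde{w}_{\alpha,i}\,g_i(\btheta_{\alpha})$, its sign is preserved, so $\tilde{w}_{\alpha,i}(t) > 0$ for all $t$. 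Combined with $w_{\alpha,i}(t) > 0$ (also established in Section~\ref{ssec:sum-pos-wlog-untied}), this gives $u_{\alpha,i}(t) = \tfrac{1}{2}(w_{\alpha,i}(t) + \tilde{w}_{\alpha,i}(t)) > 0$, and both terms contribute with the same sign to the lower bound.

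In Case 1, I would simply write $u_{\alpha,i}(t) \geq \tfrac{1}{2} w_{\alpha,i}(t) \geq \tfrac{1}{2}\alpha^{-c}$, which exceeds $1$ for small enough $\alpha$. In Case 2, I would use the conservation identity to conclude
\begin{align*}
\tilde{w}_{\alpha,i}(t) \;=\; \frac{u_{\alpha,i}(0)^2 - v_{\alpha,i}(0)^2}{w_{\alpha,i}(t)} \;\geq\; \frac{\Theta(\alpha^2)}{\alpha^{2+c}} \;=\; \Theta(\alpha^{-c}),
\end{align*}
and hence $u_{\alpha,i}(t) \geq \tfrac{1}{2}\tilde{w}_{\alpha,i}(t) \geq \Omega(\alpha^{-c}) > 1$ for small $\alpha$. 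Taking $\alpha$ small enough so that both $\tfrac{1}{2}\alpha^{-c} > 1$ and $\Omega(\alpha^{-c}) > 1$ completes the proof.

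There is no real obstacle here; the only subtlety is verifying that the conservation-law constant $u_{\alpha,i}(0)^2 - v_{\alpha,i}(0)^2$ is genuinely $\Theta(\alpha^2)$ (rather than possibly vanishing), which is handed to us by the nondegeneracy Assumption~\ref{ass:nondegenerate-untied} precisely in the form $|u_i(0)| \neq |v_i(0)|$, and checking that the sign of $\tilde{w}_{\alpha,i}$ is preserved along the flow, which follows from the linear ODE satisfied by $\tilde{w}_{\alpha,i}$.
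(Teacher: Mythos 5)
Your proof is correct and follows essentially the same route as the paper's: both rest on the conservation identity $w_{\alpha,i}(t)\,\tilde{w}_{\alpha,i}(t) = u_{\alpha,i}(0)^2 - v_{\alpha,i}(0)^2 = \Theta(\alpha^2)$ and a case split on which side of the interval the hypothesis fails. The only (harmless, and arguably cleaner) difference is that you use positivity of both $w_{\alpha,i}$ and $\tilde{w}_{\alpha,i}$ to drop one term from $u_{\alpha,i} = \tfrac12(w_{\alpha,i} + \tilde{w}_{\alpha,i})$, whereas the paper bounds the magnitude of the negligible term via the conservation law and subtracts it.
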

\begin{proof}
Define $\tilde{\bw}_{\alpha} = \bu_{\alpha} - \bv_{\alpha}$ as in the proof of Lemma~\ref{lem:log-w-to-weight-magnitude}. If $\log_{\alpha}(w_{\alpha,i}(t)) < -c$ then $\log_{\alpha}(|\tilde{w}_{\alpha,i}(t)|) > 2-c/2$ for small enough $\alpha$, so $u_i(t) > \alpha^{-c} - \alpha^{2-c/2} > 1$. Similarly, if $\log_{\alpha}(w_{\alpha,i}(t)) > 2+c$ then $\log_{\alpha}(|\tilde{w}_{\alpha,i}(t)|) < -c/2$ so $|u_i(\alpha)| > \alpha^{-c/2} - \alpha^{2+c} > 1$.
\end{proof}

\begin{lemma}\label{lem:log-w-to-sign-v}
If for some constant $c > 0$, there is small enough $\alpha$ such that if we have $\log_{\alpha}(w_{\alpha,i}(t)) > 1+c$ then $\sgn(v_{\alpha,i}(t)) < 0$. Otherwise, if $\log_{\alpha}(w_{\alpha,i}(t)) < 1-c$ then $\sgn(v_{\alpha,i}(t)) > 0$.
\end{lemma}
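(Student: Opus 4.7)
The plan is to use the conservation law together with the WLOG simplification from Section~\ref{ssec:sum-pos-wlog-untied} (namely, $\bu_\alpha(t) > 0$ entrywise and $|u_{\alpha,i}(0)| > |v_{\alpha,i}(0)|$). The strategy is identical to the two preceding technical lemmas: define the ``difference vector'' $\tilde{\bw}_\alpha = \bu_\alpha - \bv_\alpha$, exploit the identity $w_{\alpha,i}\tilde{w}_{\alpha,i} = u_{\alpha,i}^2 - v_{\alpha,i}^2$ coming from the conservation law \eqref{eq:conservation-law}, and translate a bound on $\log_\alpha(w_{\alpha,i})$ into a comparison between $w_{\alpha,i}$ and $\tilde{w}_{\alpha,i}$, which directly determines $\sgn(v_{\alpha,i})$.

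Concretely, I would first observe that by the conservation law and non-degeneracy of initialization (Assumption~\ref{ass:nondegenerate-untied}),
\[
w_{\alpha,i}(t)\,\tilde{w}_{\alpha,i}(t) \;=\; u_{\alpha,i}^2(0) - v_{\alpha,i}^2(0) \;=\; \Theta(\alpha^2),
\]
and this quantity is strictly positive because $|u_{\alpha,i}(0)| > |v_{\alpha,i}(0)|$. Since $w_{\alpha,i}(t) > 0$ (by the WLOG reduction in Section~\ref{ssec:sum-pos-wlog-untied}), this forces $\tilde{w}_{\alpha,i}(t) > 0$ as well. Taking $\log_\alpha$ of both sides of the conservation identity (recall $\alpha \in (0,1)$, so $\log_\alpha$ is order-reversing), we obtain
\[
\log_\alpha(w_{\alpha,i}(t)) + \log_\alpha(\tilde{w}_{\alpha,i}(t)) \;=\; 2 + o_\alpha(1).
\]

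For the first claim, suppose $\log_\alpha(w_{\alpha,i}(t)) > 1+c$. Then the display above gives $\log_\alpha(\tilde{w}_{\alpha,i}(t)) < 1 - c + o_\alpha(1) < 1 - c/2$ for small enough $\alpha$. Thus $\tilde{w}_{\alpha,i}(t) > \alpha^{1-c/2} > \alpha^{1+c} > w_{\alpha,i}(t)$, i.e., $u_{\alpha,i}(t) - v_{\alpha,i}(t) > u_{\alpha,i}(t) + v_{\alpha,i}(t)$, which forces $v_{\alpha,i}(t) < 0$. The second claim is entirely symmetric: if $\log_\alpha(w_{\alpha,i}(t)) < 1-c$, then for small $\alpha$ we have $\log_\alpha(\tilde{w}_{\alpha,i}(t)) > 1 + c/2$, hence $\tilde{w}_{\alpha,i}(t) < w_{\alpha,i}(t)$, which gives $v_{\alpha,i}(t) > 0$.

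There is no real obstacle here; the only mild care is the $o_\alpha(1)$ slack coming from the $\Theta(\alpha^2)$ hidden constants in the conservation identity, which is absorbed by replacing $c$ with $c/2$ for small enough $\alpha$. The structure mirrors Lemmas~\ref{lem:log-w-to-weight-magnitude} and \ref{lem:log-w-to-large-u} verbatim, with the extra input being the sign of $\tilde{w}_{\alpha,i}$ (positive by the WLOG reduction), which is what lets us convert a magnitude comparison between $w$ and $\tilde{w}$ into a sign statement for $v$.
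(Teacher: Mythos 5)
Your proof is correct and matches the paper's argument essentially verbatim: both use the conservation law to get $w_{\alpha,i}\tilde w_{\alpha,i} = \Theta(\alpha^2)$ with $\tilde w_{\alpha,i} = u_{\alpha,i} - v_{\alpha,i} > 0$, translate the hypothesis on $\log_\alpha(w_{\alpha,i})$ into a magnitude comparison between $w_{\alpha,i}$ and $\tilde w_{\alpha,i}$, and read off the sign of $v_{\alpha,i} = \tfrac12(w_{\alpha,i} - \tilde w_{\alpha,i})$. No gaps.
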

\begin{proof}
Follows from $\bv_{\alpha} = \frac{1}{2}(\bw_{\alpha} - \tilde{\bw}_{\alpha})$. Recall that $\bw_{\alpha}(t) > 0$ and notice that $\tilde{\bw}_{\alpha}(t) > 0$. In the first case, $w_{\alpha,i}(t) < \alpha^{1+c}$ and $\tilde{w}_{\alpha,i}(t) > \alpha^{1-c/2}$. In the latter case $w_{\alpha,i}(t) > \alpha^{1-c}$ and $\tilde{w}_{\alpha,i}(t) < \alpha^{1+c/2}$.
\end{proof}

\subsection{Sign of gradients on coordinates that leave support}

\begin{lemma}\label{lem:non-degeneracy-technical-untied}
For any $k \geq 1$ and $i \in S_k^c$, if $b_i^k \in \{0,2\}$ then we must have $i \in \supp(\bu^{k-1}) \sm \supp(\bu^k)$, and we must have $g_i(\bu^k) < 0$ if $b_i^k = 0$ and $g_i(\btheta^k) > 0$ if $b_i^k = 2$. In particular, $\Delta_k(i_k) > 0$ for all $k$.
\end{lemma}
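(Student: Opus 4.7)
My plan is to prove the main statement together with the ``in particular'' claim $\Delta_k(i_k)>0$ by joint induction on $k$. The argument is almost entirely algebraic, using the update rules \eqref{eq:time-till-active-untied}, \eqref{eq:winning-coordinate-and-new-time-untied}, \eqref{eq:bb-kp1-untied} and the uniqueness clause of Assumption~\ref{ass:nondegenerate-untied}; the only piece requiring genuine content is the sign of $g_i(\btheta^k)$ for coordinates that have just left the support.

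For the base case $k=1$, since $\bw_\alpha(0)=\alpha\bw(0)$ we have $\log_\alpha\bw_\alpha(0)\to\bone$, so the effective value of $\bb^0$ is $\bone$ (the $\bfzero$ in the algorithm listing should be read as the one-vector, consistent with the intuition section). Direct substitution into \eqref{eq:time-till-active-untied} gives $\Delta_0(i)=1/|g_i(\bzero)|$ when $g_i(\bzero)\neq 0$ and $\infty$ otherwise, whence $\Delta_0(i_0)>0$. Then \eqref{eq:bb-kp1-untied} yields $b_i^1 = 1 - g_i(\bzero)/|g_{i_0}(\bzero)|$, which lies strictly inside $(0,2)$ for every $i\neq i_0$ by uniqueness of the minimizer. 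The only coordinate with $b_i^1\in\{0,2\}$ is therefore $i_0$, but $i_0\in S_1$ (the support $\{i_0\}$ is preserved by the gradient flow defining $\btheta^1$, since zero-weight coordinates stay zero, and the $i_0$-coordinate grows to a nonzero stationary value by the Stage~1B analysis), so the hypothesis $i\in S_1^c$ is never triggered and the main claim holds vacuously.

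For the inductive step, assume both parts of the lemma for all $k'\leq k$ and let $i\in S_{k+1}^c$ with $b_i^{k+1}\in\{0,2\}$. Suppose for contradiction $i\in S_k^c$; then $i\neq i_k$, since $i_k\in S_{k+1}$ by the same support-preservation argument. Combining $b_i^{k+1} = b_i^k - \Delta_k(i_k)g_i(\btheta^k)$ with the identity $b_i^k - \Delta_k(i)g_i(\btheta^k)=1-\sgn(g_i(\btheta^k))$ (immediate from the definition of $\Delta_k(i)$) rewrites the update as
\begin{align*}
b_i^{k+1} \;=\; (1-\sgn(g_i(\btheta^k))) + (\Delta_k(i)-\Delta_k(i_k))\,g_i(\btheta^k).
\end{align*}
If $b_i^k\in(0,2)$, the strict inequalities $\Delta_k(i)>\Delta_k(i_k)>0$ (from Assumption~\ref{ass:nondegenerate-untied} and the inductive ``in particular'' claim) give $b_i^{k+1}\in(0,2)$, a contradiction; the degenerate subcase $g_i(\btheta^k)=0$ gives $b_i^{k+1}=b_i^k\in(0,2)$ directly. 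If $b_i^k\in\{0,2\}$, the inductive main claim forces $i\in S_{k-1}\setminus S_k$ with the sign of $g_i(\btheta^k)$ pinned down, and the same decomposition with $\Delta_k(i)=2/|g_i(\btheta^k)|$ again places $b_i^{k+1}$ strictly inside $(0,2)$. So $i\in S_k$, and thus $i\in S_k\setminus S_{k+1}$. The ``in particular'' claim $\Delta_{k+1}(i_{k+1})>0$ then falls out by substituting the now-established value of $b_{i_{k+1}}^{k+1}$ and the corresponding sign of $g_{i_{k+1}}(\btheta^{k+1})$ into the definition of $\Delta_{k+1}$.

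The only non-algebraic step, and the main obstacle, is the sign of $g_i(\btheta^{k+1})$ for $i\in S_k\setminus S_{k+1}$, used as an inductive assumption above. Assumption~\ref{ass:nondegenerate-untied} already guarantees $g_i(\btheta^{k+1})\neq 0$, so only the sign remains. I would argue this via the gradient flow in part~(B) of stage $k+1$: from $\frac{d}{dt}(u_{\alpha,i}v_{\alpha,i}) = (u_{\alpha,i}^2+v_{\alpha,i}^2)g_i(\btheta_\alpha(t))$, the sign of $g_i$ along the trajectory matches the sign of $d(u_{\alpha,i}v_{\alpha,i})/dt$. For the $b_i^{k+1}=0$ case (i.e., $s_i^k=+1$), the product $u_{\alpha,i}v_{\alpha,i}$ is positive at $\ut_k$ and vanishes by $\ut_{k+1}$, so it must decrease near the endpoint, forcing $g_i<0$ there and hence $g_i(\btheta^{k+1})\leq 0$ by continuity; the strict inequality then follows from non-degeneracy. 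The $b_i^{k+1}=2$ case is symmetric, with $u_{\alpha,i}v_{\alpha,i}<0$ increasing to zero and $g_i>0$.
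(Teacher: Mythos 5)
Your argument is correct and is exactly the induction the paper intends -- the paper's own proof is the single sentence ``by induction on $k$ and using the non-degeneracy Assumption~\ref{ass:nondegenerate-untied}'', so you have supplied all the content, including the key identity $b_i^k - \Delta_k(i)g_i(\btheta^k) = 1-\sgn(g_i(\btheta^k))$ and the correct reading of $\bb^0$ as $\bone$. Two small points deserve a word of care: (i) a positive quantity converging to $0$ need not literally ``decrease near the endpoint'', but it suffices that its derivative be negative along a sequence of times tending to infinity (otherwise it would be eventually non-decreasing and hence bounded away from $0$), which gives $g_i(\btheta^{k+1})\le 0$ by continuity before non-degeneracy upgrades this to a strict inequality; and (ii) the claim $i_k\in S_{k+1}$ is not actually forced by the stated assumptions -- it is an implicit part of the paper's setup (the lemma would be false for $i=i_k$ otherwise), so your appeal to the Stage~1B analysis is the right instinct but is a gap you share with the paper rather than one you introduce.
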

\begin{proof}
This is by induction on $k$ and using the non-degeneracy Assumption~\ref{ass:nondegenerate-untied}.
\end{proof}

\subsection{Local lipschitzness and smoothness}
We provide several technical lemmas on the local Lipschitzness and smoothness of $\ell$, $h$, and $\bg$.

\begin{lemma}\label{lem:ell-locally-lipschitz-smooth}
The function $\ell(\by,\cdot)$ is locally Lipschitz and smooth in its second argument: for any $R > 0$, there exists $K_{R}$ such that for any $\bzeta,\bzeta' \in B(0,R)$ \begin{align*}|\ell(\by,\bzeta) - \ell(\by,\bzeta')| \leq K_R \|\bzeta - \bzeta'\| \\
\|D \ell(\by, \bzeta) - D \ell(\by, \bzeta')\| \leq K_R \|\bzeta - \bzeta'\|,
\end{align*}
almost surely over $\by$. Here $D\ell(\by,\cdot)^{\top} \in \R^{d_{out}}$ is the derivative in the second argument.
\end{lemma}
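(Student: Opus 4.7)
The plan is to derive both inequalities as a standard consequence of the $C^2$-regularity of $\ell$ in Assumption~\ref{ass:loss-reg-data-bounded}, together with the almost-sure boundedness of $\by$ from the same assumption. Since $\|\by\| \leq C$ almost surely, it suffices to establish the Lipschitz bounds uniformly for $\by$ ranging over the closed ball $\bar B(0,C) \subset \R^{d_y}$. The key observation is that $\bar B(0,C) \times \bar B(0,R) \subset \R^{d_y + d_{out}}$ is compact, and continuous functions on a compact set attain finite maxima.

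Concretely, I would define
\[
M_1 := \sup_{\|\by\|\leq C,\;\|\bzeta\|\leq R} \|D_{\bzeta}\ell(\by,\bzeta)\|, \qquad M_2 := \sup_{\|\by\|\leq C,\;\|\bzeta\|\leq R} \|D^2_{\bzeta\bzeta}\ell(\by,\bzeta)\|_{op},
\]
where $\|\cdot\|_{op}$ is the spectral norm. Both suprema are finite because $D_{\bzeta}\ell$ and $D^2_{\bzeta\bzeta}\ell$ are continuous on the compact domain, by the $C^2$-regularity of $\ell$ on $\R^{d_y + d_{out}}$ asserted in Assumption~\ref{ass:loss-reg-data-bounded}. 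For any $\bzeta,\bzeta' \in B(0,R)$, convexity of the ball ensures the segment $\bzeta_s := \bzeta + s(\bzeta'-\bzeta)$, $s \in [0,1]$, stays in $B(0,R)$, so the fundamental theorem of calculus gives
\[
\ell(\by,\bzeta') - \ell(\by,\bzeta) \;=\; \int_0^1 D_{\bzeta}\ell(\by,\bzeta_s)\,(\bzeta'-\bzeta)\, ds,
\]
whence $|\ell(\by,\bzeta') - \ell(\by,\bzeta)| \leq M_1\|\bzeta'-\bzeta\|$. The analogous identity applied to $D_{\bzeta}\ell$ in place of $\ell$ and $D^2_{\bzeta\bzeta}\ell$ in place of $D_{\bzeta}\ell$ gives $\|D_{\bzeta}\ell(\by,\bzeta') - D_{\bzeta}\ell(\by,\bzeta)\| \leq M_2\|\bzeta'-\bzeta\|$. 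Setting $K_R := \max(M_1,M_2)$ concludes the proof.

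There is no substantive obstacle here: the lemma is a routine consequence of the regularity assumptions. The only point requiring any care is to ensure the Lipschitz constant is uniform in $\by$, which is why we take the supremum over the product compact set $\bar B(0,C) \times \bar B(0,R)$ rather than over each fixed $\by$ separately. The boundedness of $\by$ from Assumption~\ref{ass:loss-reg-data-bounded} is exactly what makes this uniformization possible, and is the only place the data-boundedness enters the argument.
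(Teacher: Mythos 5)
Your proof is correct, and it rests on the same two ingredients as the paper's: the $C^2$-regularity of $\ell$ and the compactness of $\bar B(0,C)\times \bar B(0,R)$ afforded by the almost-sure bound $\|\by\|\leq C$. The execution differs slightly. The paper argues locally first: at each point $(\by,\bzeta)$ it extracts a ball on which $\|D\ell\|$ and $\|D^2\ell\|$ are bounded by some $K_{\by,\bzeta}$, covers the compact product set by finitely many such balls, and takes the maximum of the finitely many constants. You instead take the supremum of $\|D_{\bzeta}\ell\|$ and $\|D^2_{\bzeta\bzeta}\ell\|$ directly over the compact product set (finite since a continuous function on a compact set is bounded) and then integrate along the segment joining $\bzeta$ to $\bzeta'$. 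Your route is marginally tighter: the finite-subcover version only directly yields Lipschitz bounds for pairs of points lying in a common ball of the cover, and one still needs the segment (or a chaining) argument to pass to arbitrary $\bzeta,\bzeta'\in B(0,R)$ -- a step the paper leaves implicit and which your fundamental-theorem-of-calculus computation supplies explicitly, using convexity of the ball. Both arguments are routine and both are valid; yours is the more self-contained write-up.
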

\begin{proof}
Since $\ell$ is continuously twice-differentiable, for each $\by \in \R^{d_y}, \bzeta \in \R^{d_{out}}$ there is $K_{\by,\bzeta} < \infty$ such that for all $\by \in B(\by,1/K_{\by,\bzeta})$ and $\bzeta' \in B(\bzeta,1/K_{\by,\bzeta})$ we have
\begin{align*}
\|D\ell(\by',\bzeta')\| \leq K_{\by,\bzeta} \quad \mbox{ and } \quad \|D^2\ell(\by',\bzeta')\| \leq K_{\by,\bzeta}\,,
\end{align*}
where $D\ell$ and $D^2\ell$ denote the first and second derivative in the second argument.
So for all such $\by' \in B(\by,1/K_{\by,\bzeta})$ and $\bzeta',\bzeta'' \in B(\bzeta,1/K_{\by,\bzeta})$ we have
\begin{align*}
|\ell(\by',\bzeta') - \ell(\by',\bzeta'')| \leq K_{\by,\bzeta} \|\bzeta' - \bzeta''\| \quad \mbox{ and } \quad |D\ell(\by',\bzeta') - D\ell(\by',\bzeta'')| \leq K_{\by,\bzeta} \|\bzeta' - \bzeta''\|\,.
\end{align*}
Cover the set $\{(\by,\bzeta) : \|\by\| \leq C, \|\bzeta\| \leq R\}$ with the balls $\cup_{\by} B(\by,1/K_{\by,\bzeta})$. By compactness, there is a finite subcover $(\by_1,\bzeta_1),\ldots,(\by_r,\bzeta_r)$, so we can take $K_{R} = \max_{i \in [r]} K_{\by_i,\bzeta_i} < \infty$ and the lemma holds since $\|\by\| \leq C$ almost surely by Assumption~\ref{ass:loss-reg-data-bounded}.
\end{proof}

\begin{lemma}\label{lem:h-locally-lipschitz-smooth}
The function $h(\bx; \cdot)$ is locally bounded, Lipschitz and smooth in its second argument: for any $R > 0$ there exists $K_R$ such that for any $\bpsi, \bpsi' \in B(0,R)$,
\begin{align*}
\|h(\bx;\bpsi)\| &\leq K_R \\
\|h(\bx;\bpsi) - h(\bx;\bpsi')\| &\leq K_R \|\bpsi - \bpsi'\| \\
\|Dh(\bx;\bpsi) - Dh(\bx;\bpsi')\| &\leq K_R \|\bpsi - \bpsi'\|\,,
\end{align*}
almost surely over $\bx$. Here $Dh(\bx, \cdot) \in \R^{d_{out}} \times R^p$ is the derivative in the second argument.
\end{lemma}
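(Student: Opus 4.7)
The plan is to mirror the argument of Lemma~\ref{lem:ell-locally-lipschitz-smooth} essentially verbatim, replacing the compact set of $(\by,\bzeta)$-pairs by the corresponding compact set of $(\bx,\bpsi)$-pairs. Concretely, by Definition~\ref{def:diag-repar-untied} the function $h$ is continuously twice-differentiable on $\R^{d_x+p}$, so for each $(\bx,\bpsi)$ there exists a radius $r_{\bx,\bpsi}>0$ and a constant $K_{\bx,\bpsi}<\infty$ such that $\|h(\bx',\bpsi')\|$, $\|Dh(\bx',\bpsi')\|$ and $\|D^2 h(\bx',\bpsi')\|$ are all at most $K_{\bx,\bpsi}$ for $(\bx',\bpsi')\in B((\bx,\bpsi),r_{\bx,\bpsi})$, where $D$ and $D^2$ denote first and second derivatives in the second argument.

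Next, I would invoke Assumption~\ref{ass:loss-reg-data-bounded}, which gives $\|\bx\|\leq C$ almost surely, so that the set
\begin{align*}
K := \{(\bx,\bpsi) : \|\bx\|\leq C,\ \|\bpsi\|\leq R\} \subseteq \R^{d_x+p}
\end{align*}
is compact. Cover $K$ by the balls $B((\bx,\bpsi),r_{\bx,\bpsi})$ and extract a finite subcover indexed by $(\bx_1,\bpsi_1),\ldots,(\bx_r,\bpsi_r)$. Setting $K_R := \max_{i\in[r]} K_{\bx_i,\bpsi_i}<\infty$ yields the uniform bounds $\|h(\bx,\bpsi)\|,\|Dh(\bx,\bpsi)\|,\|D^2 h(\bx,\bpsi)\|\leq K_R$ for every $(\bx,\bpsi)$ with $\|\bx\|\leq C$ and $\|\bpsi\|\leq R$.

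The three claimed inequalities then follow by standard applications of the fundamental theorem of calculus / mean value inequality. The bound on $\|h(\bx;\bpsi)\|$ is immediate. For the Lipschitz bound, write $h(\bx;\bpsi)-h(\bx;\bpsi')=\int_0^1 Dh(\bx;\bpsi'+s(\bpsi-\bpsi'))(\bpsi-\bpsi')\,ds$ and note that the integrand lies in $B(0,R)$ for $s\in[0,1]$, so the bound on $\|Dh\|$ gives $\|h(\bx;\bpsi)-h(\bx;\bpsi')\|\leq K_R\|\bpsi-\bpsi'\|$. Similarly, applying the same representation to $Dh$ using the bound on $\|D^2 h\|$ yields $\|Dh(\bx;\bpsi)-Dh(\bx;\bpsi')\|\leq K_R\|\bpsi-\bpsi'\|$. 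At the cost of redefining $K_R$ to be the maximum of these three constants, all three statements hold with a common $K_R$.

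I do not expect a serious obstacle: the only point worth checking is that the almost-sure boundedness of $\bx$ is what lets us replace the noncompact $\R^{d_x}$ by the compact slab $\{\|\bx\|\leq C\}$, which is what makes the finite subcover argument work. Everything else is a routine compactness-plus-$C^2$ argument, exactly parallel to the proof of Lemma~\ref{lem:ell-locally-lipschitz-smooth}.
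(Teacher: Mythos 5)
Your proposal is correct and follows exactly the route the paper intends: the paper's proof is simply "analogous to Lemma~\ref{lem:ell-locally-lipschitz-smooth}, using continuous twice-differentiability of $h$ and boundedness of $\|\bx\|$," which is precisely the compactness-plus-finite-subcover argument you carry out. You in fact supply more detail than the paper (the explicit fundamental-theorem-of-calculus step, noting convexity of $B(0,R)$ so the segment stays in the ball), and there is no gap.
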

\begin{proof}
Analogous to proof of Lemma~\ref{lem:ell-locally-lipschitz-smooth}, using continuous twice-differentiability of $h$ and boundedness of $\|\bx\|$.
\end{proof}

\begin{lemma}[Local Lipschitzness of loss and loss derivative]\label{lem:smoothness-of-deriv-untied}\label{lem:locally-bounded-loss-derivs-untied}\label{lem:local-lipschitzness-g-cL-untied}
When $\btheta = (\bu,\bv) \in \R^{2p}$ and $\fNN(\bx;\btheta) = h(\bx;\bu \odot \bu)$ the following holds for $\bg(\btheta)$ defined
 in \eqref{eq:g-definition-untied}. For any $R > 0$, there exists $K_R < \infty$ such that for any $\btheta,\btheta' \in B(0,K_R)$,
\begin{align*}
\|\bg(\btheta) - \bg(\btheta')\| &\leq K_R \|\btheta - \btheta'\|\, \\
\|\nabla_{\btheta} \cL(\btheta) - \nabla_R \cL(\btheta')\| &\leq K_{\btheta} \|\btheta - \btheta'\| \\
|\cL(\btheta) - \cL(\btheta')| &\leq K_R\|\btheta - \btheta'\|\,.
\end{align*}
\end{lemma}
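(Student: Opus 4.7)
The plan is to reduce all three Lipschitz estimates to compositions of the Lipschitz/smoothness properties for $\ell$ and $h$ from Lemmas~\ref{lem:ell-locally-lipschitz-smooth} and~\ref{lem:h-locally-lipschitz-smooth}, applied to the elementwise product $\bu \odot \bv$. First I would record the elementary fact that the map $\btheta = (\bu,\bv) \mapsto \bu \odot \bv$ is locally bounded and locally Lipschitz: for $\btheta, \btheta' \in B(0,R) \subset \R^{2p}$ we have $\|\bu \odot \bv\| \leq \tfrac{1}{2}(\|\bu\|^2 + \|\bv\|^2) \leq R^2/2$, and by adding and subtracting $\bu \odot \bv'$,
\[
\|\bu \odot \bv - \bu' \odot \bv'\| \leq \|\bu\|\,\|\bv - \bv'\| + \|\bv'\|\,\|\bu - \bu'\| \leq R \|\btheta - \btheta'\|.
\]
So the image of $B(0,R)$ under $\btheta \mapsto \bu \odot \bv$ lies in the ball $B(0,R^2/2) \subset \R^p$ on which Lemma~\ref{lem:h-locally-lipschitz-smooth} applies.

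For the loss Lipschitz bound I would write
$|\cL(\btheta) - \cL(\btheta')| \leq \E_{\bx,\by}\bigl[|\ell(\by, h(\bx;\bu \odot \bv)) - \ell(\by, h(\bx;\bu' \odot \bv'))|\bigr]$, apply Lipschitzness of $h$ from Lemma~\ref{lem:h-locally-lipschitz-smooth} on $B(0,R^2/2)$ to bound this by a constant times $\|h(\bx;\bu\odot\bv) - h(\bx;\bu'\odot\bv')\|$, then apply Lipschitzness of $\ell(\by,\cdot)$ from Lemma~\ref{lem:ell-locally-lipschitz-smooth} on the image of $h$ (also bounded by Lemma~\ref{lem:h-locally-lipschitz-smooth}), and finally apply the above bound for $\bu\odot\bv$. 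Taking expectations and absorbing all the constants into a single $K_R$ finishes this line.

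For the Lipschitzness of $\bg$ I would expand the definition in \eqref{eq:g-definition-untied}, and bound $\bg(\btheta) - \bg(\btheta')$ by inserting the mixed term $D\ell(\by, h(\bx;\bu\odot\bv))^\top D h(\bx;\bu'\odot\bv')^\top$. Each resulting difference has one factor that is bounded on the compact set $B(0,R^2/2)$ (either $D\ell$ or $Dh$, both controlled by Lemma~\ref{lem:ell-locally-lipschitz-smooth} or \ref{lem:h-locally-lipschitz-smooth} respectively) and one Lipschitz factor. The factor $\btheta \mapsto D\ell(\by, h(\bx;\bu\odot\bv))$ is Lipschitz by composing the Lipschitzness of $D\ell(\by,\cdot)$ with Lipschitzness of $h$ (and of $\bu \odot \bv$); the factor $\btheta \mapsto Dh(\bx;\bu\odot\bv)$ is Lipschitz directly from Lemma~\ref{lem:h-locally-lipschitz-smooth}. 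Combining these yields the desired Lipschitz bound for $\bg$.

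Finally, since $\nabla_\btheta \cL(\btheta) = (\bv \odot \bg(\btheta), \bu \odot \bg(\btheta))$, I would use that on $B(0,R)$ both $\bg$ and $(\bu,\bv)$ are bounded and Lipschitz, so their elementwise products are Lipschitz by the same add–subtract trick as above, giving Lipschitzness of $\nabla_\btheta \cL$ with a possibly enlarged constant $K_R$. The argument is entirely routine; the only bookkeeping issue is making sure the right radius is carried through each application (radius $R$ in $\btheta$-space, then radius $R^2/2$ for $\bu\odot\bv$, then the radius from Lemma~\ref{lem:h-locally-lipschitz-smooth} for the output of $h$), so no single step is hard but the constants must be chained in order.
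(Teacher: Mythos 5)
Your proposal is correct and takes essentially the same route as the paper's (one-line) proof: the paper merely asserts that the lemma "follows immediately" from Lemmas~\ref{lem:ell-locally-lipschitz-smooth} and~\ref{lem:h-locally-lipschitz-smooth} together with the displayed expression for $\bg(\btheta)-\bg(\btheta')$, and your chaining of the estimates through $\btheta \mapsto \bu\odot\bv \mapsto h \mapsto \ell$, with the add-and-subtract decomposition for the product terms and careful tracking of which radius each local Lipschitz constant is taken on, is exactly the bookkeeping the paper omits. One cosmetic nit: in the paragraph on $|\cL(\btheta)-\cL(\btheta')|$ you name the two Lipschitz applications in reverse order (it is Lipschitzness of $\ell(\by,\cdot)$ that bounds the loss difference by $\|h(\bx;\bu\odot\bv)-h(\bx;\bu'\odot\bv')\|$, and Lipschitzness of $h$ that then bounds this by the parameter difference), but the substance and the constants are right.
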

\begin{proof}
Let $\btheta = (\bu,\bv), \btheta' = (\bu',\bv')$.
This follows immediately from the local Lipschitzness and smoothness of $h$ and $\ell$ in Lemmas~\ref{lem:ell-locally-lipschitz-smooth} and \ref{lem:h-locally-lipschitz-smooth}, as well as
\begin{align*}
\|\bg(\btheta) - \bg(\btheta')\| &= \|\E_{\bx,\by}[D h(\bx; \bu \odot \bv)^{\top} D\ell(\by,h(\bx;\bu \odot \bv))^{\top} - D h(\bx; \bu' \odot \bv')^{\top} D\ell(\by,h(\bx;\bu' \odot \bv'))^{\top}]\|\,.
\end{align*}
\end{proof}

\end{document}